\definecolor{light-gray}{gray}{0.9}
\renewcommand{\epsilon}{\varepsilon}
\newcommand{\trans}{^{\top}}
\newcommand{\cA}{\mathcal{A}}
\newcommand{\cS}{\mathcal{S}}
\newcommand{\cV}{\mathcal{V}}
\newcommand{\cE}{\mathcal{E}}
\newcommand{\cN}{\mathcal{N}}
\newcommand{\cM}{\mathcal{M}}
\newcommand{\EE}{\mathbb{E}}
\newcommand{\tR}{\tilde{R}}
\newcommand{\tSigma}{\tilde{\Sigma}}
\newcommand{\hatcov}{\Lambda}
\newcommand\numberthis{\addtocounter{equation}{1}\tag{\theequation}}
\let\hat\widehat
\let\tilde\widetilde
\newtheorem{theorem}{Theorem}[section]
\newtheorem{lemma}[theorem]{Lemma}
\newtheorem{corollary}[theorem]{Corollary}
\newtheorem{remark}[theorem]{Remark}
\newtheorem{property}[theorem]{Property}
\newtheorem{claim}{Claim}
\theoremstyle{definition}
\newtheorem{definition}[theorem]{Definition}
\newtheorem{assumption}{Assumption}
\definecolor{olivine}{rgb}{0.6, 0.73, 0.45}
\newcommand{\mat}[1]{\ensuremath{\mathbf{#1}}}
\newcommand{\A}{\mat{A}}
\newcommand{\B}{\mat{B}}
\newcommand{\X}{\mat{X}}
\newcommand{\BoldS}{\mat{S}}
\newcommand{\Y}{\mat{Y}}
\newcommand{\Z}{\mat{Z}}
\newcommand{\I}{\mat{I}}
\newcommand{\tdh}{{\tilde h}}
\newcommand{\h}{{h}}
\newcommand{\cQ}{{\mathcal{Q}}}
\newcommand{\red}[1]{{\color{red} #1}}
\newcommand{\bpi}{{\bar{\pi}}}
\newcommand{\bQ}{\bar{Q}}
\newcommand{\mR}{\mathbb{R}}
\newcommand{\cF}{\mathcal{F}}
\newcommand{\ra}{\rangle}
\newcommand{\la}{\langle}
\newcommand{\name}{{Deployment-Efficient }}
\newcommand{\resolution}{{\epsilon_0}}
\newcommand{\fix}[1]{\red{#1}}
\newcommand{\betap}{{\beta'}}
\title{Towards Deployment-Efficient Reinforcement Learning: Lower Bound and Optimality}
\author{Jiawei Huang\thanks{Work done during the internship at Microsoft Research Asia.} $~\dagger$, Jinglin Chen$\dagger$, Li Zhao$\ddagger$, Tao Qin$\ddagger$, Nan Jiang$\dagger$, Tie-Yan Liu$\ddagger$\\
$\dagger$ Department of Computer Science, University of Illinois at Urbana-Champaign \\
\texttt{~\{jiaweih, jinglinc, nanjiang\}@illinois.edu}\\
$\ddagger$ Microsoft Research Asia \\
\texttt{~\{lizo, taoqin, tyliu\}@microsoft.com}
}
\date{\today}
\begin{document}

\maketitle

\begin{abstract}
Deployment efficiency is an important criterion for many real-world applications of reinforcement learning (RL). Despite the community's increasing interest, there lacks a formal theoretical formulation for the problem. In this paper, we propose such a formulation for deployment-efficient RL (DE-RL) from an ``optimization with constraints'' perspective: we are interested in exploring an MDP and obtaining a near-optimal policy within minimal \emph{deployment complexity}, whereas in each deployment the policy can sample a large batch of data. Using finite-horizon linear MDPs as a concrete structural model, we reveal the fundamental limit in achieving deployment efficiency by establishing information-theoretic lower bounds, and provide algorithms that achieve the optimal deployment efficiency. Moreover, our formulation for DE-RL is flexible and can serve as a building block for other practically relevant settings; we give ``Safe DE-RL'' and ``Sample-Efficient DE-RL'' as two examples, which may be worth future investigation.
\end{abstract}

\allowdisplaybreaks
\section{Introduction}

In many real-world applications, deploying a new policy to replace the previous one is costly, while generating a large batch of samples with an already deployed policy can be relatively fast and cheap.
For example, in recommendation systems \citep{afsar2021reinforcement}, education software \citep{bennane2013adaptive}, and healthcare  \citep{yu2019reinforcement}, the new recommendation, teaching, or medical treatment strategy must pass several internal tests to ensure safety and practicality before being deployed, which can be time-consuming. On the other hand, the algorithm may be able to collect a large amount of samples in a short period of time if the system serves a large population of users. 
Besides, in robotics applications \citep{kober2013reinforcement}, deploying a new policy usually involves operations on the hardware level, which requires non-negligible physical labor and long waiting periods, while sampling trajectories is relatively less laborious.
However, deployment efficiency was neglected in most of existing RL literatures. Even for those few works considering this important criterion \citep{bai2020provably, gao2021provably, matsushima2021deploymentefficient}, either their settings or methods have limitations in the scenarios described above, or a formal mathematical formulation is missing. We defer a detailed discussion of these related works to Section \ref{para:discussion_with_prev_DERL}.

In order to close the gap between existing RL settings and real-world applications requiring high deployment efficiency, our first contribution is to provide a formal definition and tractable objective for Deployment-Efficient Reinforcement Learning (DE-RL) via an ``optimization with constraints'' perspective. 
Roughly speaking, 
we are interested in minimizing the number of deployments $K$ under two constraints: (a) after deploying $K$ times, the algorithm can return a near-optimal policy, and (b) the number of \emph{trajectories} collected in each deployment, denoted as $N$, is at the same level across $K$ deployments, and it can be large but should still be polynomial in standard parameters. Similar to the notion of sample complexity in online RL, we will refer to $K$ as \emph{deployment complexity}.

To provide a more quantitative understanding, we instantiate our DE-RL framework in  finite-horizon linear MDPs\footnote{Although we focus on linear MDPs, the core idea can be extended to more general settings such as RL with general function approximation~\citep{Kong2021OnlineSF}.} \citep{jin2019provably} and develop the essential theory. The main questions we address are: 
\begin{center}
\emph{Q1: What is the optimum of the deployment efficiency in our DE-RL setting?}\\
\emph{Q2: Can we achieve the optimal deployment efficiency in our DE-RL setting?}
\end{center}
When answering these questions, 
we separately study algorithms with or without being constrained to deploy deterministic policies each time. 
While deploying more general forms of policies can be practical (e.g., randomized experiments on a population of users can be viewed as deploying a mixture of deterministic policies), 
most previous theoretical works in related settings exclusively focused on upper and lower bounds for algorithms using deterministic policies \citep{jin2019provably, wang2020rewardfree, gao2021provably}. 
As we will show, the origin of the difficulty in optimizing deployment efficiency and the principle in algorithm design to achieve optimal deployment efficiency are generally different in these two settings, and therefore, we believe both of them are of independent interests. 

As our second contribution, in Section \ref{sec:lower_bound}, we answer Q1 by providing information-theoretic lower bounds for the required number of deployments under the constraints of (a) and (b) in Def \ref{def:deployment_efficient_linear_mdp}. We establish $\Omega(dH)$ and $\tilde{\Omega}(H)$ lower bounds for algorithms with and without the constraints of deploying deterministic policies, respectively. 
Contrary to the impression given by previous empirical works \citep{matsushima2021deploymentefficient}, even if we can deploy unrestricted policies, the minimal number of deployments cannot be reduced to a constant 
without additional assumptions, which sheds light on the fundamental limitation in achieving deployment efficiency. 
Besides, in the line of work on ``horizon-free RL'' \citep[e.g.,][]{wang2020long}, it is shown that  RL problem is not significantly harder than bandits  (i.e., when $H=1$) when we consider sample complexity. 
In contrast, the $H$ dependence in our lower bound reveals some fundamental hardness that is specific to long-horizon RL, particularly in the deployment-efficient setting.
\footnote{{Although \citep{wang2020long} considered stationary MDP, as shown in our Corollary \ref{corollary:lower_bounds_stationary_MDP}, the lower bounds of deployment complexity is still related to $H$.}}
Such hardness results were originally conjectured by \citet{jiang2018open}, but no hardness has been shown in sample-complexity settings. 

After identifying the limitation of deployment efficiency, as our third contribution, we address Q2 by proposing novel algorithms whose deployment efficiency match the lower bounds. In Section \ref{sec:DERL_deterministic_policy}, we propose an algorithm deploying deterministic policies, which is based on Least-Square Value Iteration with reward bonus \citep{jin2019provably} and a layer-by-layer exploration strategy, and can return an $\epsilon$-optimal policy within $O(dH)$ deployments. As part of its analysis, we prove Lemma \ref{lem:finite_sample_elliptical_potential_lemma} as a technical contribution, which can be regarded as a batched finite-sample version of the well-known ``Elliptical Potential Lemma''\citep{alex2020elliptical} and may be of independent interest. Moreover, our analysis based on Lemma \ref{lem:finite_sample_elliptical_potential_lemma} can be applied to the reward-free setting \citep{jin2020rewardfree,wang2020rewardfree} and achieve the same optimal deployment efficiency. 
In Section \ref{sec:DERL_arbitrary_policy}, we focus on algorithms which can deploy arbitrary policies. They are much more challenging because it requires us to find a provably exploratory stochastic policy 
without interacting with the environment.
To our knowledge,  \citet{agarwal2020flambe} is the only work  tackling a similar problem, but their algorithm is model-based which relies on a strong assumption about the realizability of the true dynamics and a sampling oracle that allows the agent to sample data from the model, and how to solve the problem in linear MDPs without a model class is still an open problem. 
To overcome this challenge, we propose a model-free layer-by-layer exploration algorithm based on a novel covariance matrix estimation technique, and prove that it requires $\Theta(H)$ deployments to return an $\epsilon$-optimal policy, which only differs  from the lower bound $\tilde{\Omega}(H)$ by a logarithmic factor. 
Although the per-deployment sample complexity of our algorithm has dependence on a ``reachability coefficient'' (see Def. \ref{def:reachability_coefficient}), similar quantities also appear in related works \citep{zanette2020provably, agarwal2020flambe, modi2021model} and we conjecture that it is unavoidable and leave the investigation to future work.

Finally, thanks to the flexibility of our ``optimization with constraints'' perspective, 
our DE-RL setting can serve as a building block for more advanced and practically relevant settings where optimizing the number of deployments is an important consideration. 
In Appendix \ref{appx:discussion_on_Extended_DERL}, we propose two potentially interesting settings: ``Safe DE-RL'' and ``Sample-Efficient DE-RL'', by introducing constraints regarding safety and sample efficiency, respectively.

\subsection{Closely Related Works}\label{sec:related_work}
We defer the detailed discussion of previous literatures about pure online RL and pure offline RL to Appendix \ref{sec:extended_related_work}, and mainly focus on those literatures which considered deployment efficiency and more related to us in this section. 
\label{para:discussion_with_prev_DERL}

To our knowledge, the term ``deployment efficiency'' was first coined by \citet{matsushima2021deploymentefficient}, but they did not provide a concrete mathematical formulation that is amendable to theoretical investigation. In existing theoretical works, low switching cost is a concept closely related to deployment efficiency, and has been studied in both bandit \citep{esfandiari2020regret, han2020sequential,gu2021batched, ruan2021linear} and RL settings \citep{bai2020provably, gao2021provably, Kong2021OnlineSF}. Another related concept is concurrent RL, as proposed by \citet{guo2015concurrent}. 
We highlight the difference with them in two-folds from problem setting and techniques.

As for the problem setting, existing literature on low switching cost mainly focuses on sub-linear regret guarantees, which does not directly implies a near-optimal policy after a number of policy deployments\footnote{Although the conversion from sub-linear regret to polynominal sample complexity is possible (``online-to-batch''), we show in Appendix \ref{sec:extended_related_work} that to achieve accuracy $\epsilon$ 
after conversion, the number of deployments of previous low-switching cost algorithms has dependence on $\epsilon$, whereas our guarantee does not. 
}. Besides, low switching-cost RL algorithms \citep{bai2020provably, gao2021provably, Kong2021OnlineSF} 
rely on adaptive switching strategies (i.e., the interval between policy switching is not fixed), which can be difficult to implement in practical scenarios. 
For example, in recommendation or education systems, once deployed, a policy usually needs to interact with the population of users for a fair amount of time and generate a lot of data.  
Moreover, since policy preparation is time-consuming (which is what motivates our work to begin with), it is practically difficult if not impossible to change the policy immediately once collecting enough data for policy update, and it will be a significant overhead compared to a short policy switch interval. Therefore, in applications we target at, it is more reasonable to assume that the sample size in each deployment (i.e., between policy switching) has the same order of magnitude and is large enough so that the overhead of policy preparation can be ignored.

More importantly, on the technical side, previous theoretical works on low switching cost mostly use deterministic policies in each deployment, which is easier to analyze. This issue also applies to the work of \citet{guo2015concurrent} on concurrent PAC RL. However, if the agent can deploy stochastic (and possibly non-Markov) policies (e.g., a mixture of deterministic policies), then intuitively---and as reflected in our lower bounds---exploration can be done much more deployment-efficiently, and we provide a stochastic policy algorithm that achieves an $\tilde O(H)$ deployment complexity 
and overcomes the $\Omega(dH)$ lower bounds for deterministic policy algorithms \citep{gao2021provably}.


\section{Preliminaries}\label{sec:preliminary}
\paragraph{Notation}
Throughout our paper, for $n\in\mathbb{Z}^+$, we will denote $[n]=\{1,2,...,n\}$. $\lceil\cdot\rceil$ denotes the ceiling function. Unless otherwise specified, for vector $x\in\mR^d$ and matrix $X\in\mR^{d\times d}$, $\|x\|$ denotes the vector $l_2$-norm of $x$ and $\|X\|$ denotes the largest singular value of $X$. We will use standard big-oh notations $O(\cdot), \Omega(\cdot), \Theta(\cdot)$, and notations such as $\tilde{O}(\cdot)$ to suppress logarithmic factors.

\subsection{Episodic Reinforcement Learning}
We consider an episodic Markov Decision Process denoted by $M(\cS, \cA, H, P, r)$, where $\cS$ is the state space, $\cA$ is the finite action space, $H$ is the horizon length, and $P=\{P_h\}_{h=1}^H$ and $r=\{r_h\}_{h=1}^H$ denote the transition and the reward functions. At the beginning of each episode, the environment will sample an initial state $s_1$ from the initial state distribution $d_1$. Then, for each time step $h\in[H]$, the agent selects an action $a_h \in \cA$, interacts with the environment, receives a reward $r_h(s_h,a_h)$, and transitions to the next state $s_{h+1}$. The episode will terminate once $s_{H+1}$ is reached.

A (Markov) policy $\pi_h(\cdot)$ at step $h$ is a function mapping from $\cS\to \Delta(\cA)$, where $\Delta(\cA)$ denotes the probability simplex over the action space. With a slight abuse of notation, when $\pi_h(\cdot)$ is a deterministic policy, we will assume $\pi_h(\cdot):\cS\to \cA$. A full (Markov) policy $\pi=\{\pi_1,\pi_2,...,\pi_H\}$ specifies such a mapping for each time step. We use $V^\pi_h(s)$ and $Q^\pi_h(s,a)$ to denote the value function and Q-function at step $h\in[H]$, which are defined as:
\begin{align*}
    V^\pi_h(s)=\EE[\sum_{h'=h}^H r_{h'}(s_{h'},a_{h'})|s_h=s,\pi],\quad Q^\pi_h(s,a)=\EE[\sum_{h'=h}^H r_{h'}(s_{h'},a_{h'})|s_h=s,a_h=a,\pi]
\end{align*}
We also use $V^*_h(\cdot)$ and $Q^*_h(\cdot,\cdot)$ to denote the optimal value functions and use $\pi^*$ to denote the optimal policy that maximizes the expected return  $J(\pi) := \EE[\sum_{h=1}^H r(s_h,a_h)|\pi]$. 
In some occasions, we use $V^\pi_h(s;r)$ and $Q^\pi_h(s,a;r)$ to denote the value functions with respect to $r$ as the reward function for disambiguation purposes. The optimal value functions and the optimal policy will be denoted by $V^*(s;r),Q^*(s,a;r),\pi_r^*$, respectively.

\paragraph{Non-Markov Policies} 
While we focus on Markov policies in the above definition, some of our results apply to or require more general forms of policies. For example, our lower bounds apply to non-Markov policies that can depend on the history (e.g., $\cS_1\times\cA_1\times\mR...\times\cS_{h-1}\times\cA_{h-1}\times\mR\times\cS_h\rightarrow \cA$ for deterministic policies); our algorithm for arbitrary policies deploys a mixture of deterministic Markov policies, which corresponds to choosing a deterministic policy from a given set at the initial state, and following that policy for the entire trajectory. This can be viewed as a non-Markov stochastic policy.

\subsection{Linear MDP Setting}
We mainly focus on the linear MDP \citep{jin2019provably} satisfying the following assumptions:
\begin{assumption}[Linear MDP Assumptions]\label{assump:linear_MDP}
    An MDP $\cM=(\cS,\cA,H,P,r)$ is said to be a linear MDP with a feature map $\phi:\cS\times\cA\rightarrow \mR^d$ if the following hold for any $h\in[H]$: \vspace{-.5em}
    \begin{itemize}[leftmargin=*, itemsep=0pt]
        \item There are $d$ unknown signed measures $\mu_h=(\mu_h^{(1)},\mu_h^{(2)},...,\mu_h^{(d)})$ over $\cS$ such that for any $(s,a,s')\in\cS\times\cA\times\cS, P_h(s'|s,a)=\la\mu_h(s'),\phi(s,a)\ra$.
        \item There exists an unknown vector $\theta_h\in\mR^d$ such that for any $(s,a)\in\cS\times\cA$, $r_h(s,a)=\la\phi(s,a),\theta_h\ra$.
    \end{itemize}
\end{assumption}
Similar to \citet{jin2019provably} and \citet{wang2020rewardfree}, without loss of generality, we assume for all $(s,a)\in\cS\times\cA$ and $h\in[H]$, $\|\phi(s,a)\|\leq 1$, $\|\mu_h\|\leq\sqrt{d}$, and $\|\theta_h\|\leq \sqrt{d}$. 
In Section~\ref{sec:lower_bound} we will refer to linear MDPs with stationary dynamics, which is a special case when $\mu_1=\mu_2=\ldots \mu_H$ and $\theta_1 = \theta_2 = \ldots = \theta_H$.

\subsection{A Concrete Definition of DE-RL}\label{sec:definition_difference}
In the following, we introduce our formulation for DE-RL in linear MDPs. 
For discussions of comparison to existing works, please refer to  Section \ref{sec:related_work}.
\begin{definition}[Deployment Complexity in Linear MDPs]\label{def:deployment_efficient_linear_mdp}
We say that an algorithm has a deployment complexity $K$ in linear MDPs if the following holds: given an arbitrary linear MDP under Assumption \ref{assump:linear_MDP}, for arbitrary $\epsilon$ and $0<\delta<1$, the algorithm will return a policy $\pi_K$ after $K$  deployments and collecting at most $N$ trajectories in each deployment, under the following constraints:
\begin{enumerate}[leftmargin=*] 
    \item[(a)] With probability $1-\delta$, $\pi_K$ is $\epsilon$-optimal, i.e. $J(\pi_K) \geq \max_\pi J(\pi) - \epsilon$.
    \item[(b)] The sample size $N$ is polynominal, i.e. $N = {\rm poly}(d, H, \frac{1}{\epsilon}, \log \frac{1}{\delta})$. Moreover, $N$ should be fixed a priori and cannot change adaptively from deployment to deployment.  
\end{enumerate} 
\end{definition} 
Under this definition, the goal of \name RL is to design algorithms with provable guarantees of low deployment complexity. \vspace{-.5em}

\paragraph{Polynomial Size of $N$} We emphasize that the restriction of polynomially large $N$ is crucial to our formulation, and not including it can result in degenerate solutions. For example, if $N$ is allowed to be exponentially large, we can finish exploration in $1$ deployment in the arbitrary policy setting, by deploying a mixture of exponentially many policies that form an $\epsilon$-net of the policy space. Alternatively, 
we can sample actions uniformly, and use importance sampling \citep{precup2000eligibility} to evaluate all of them in an off-policy manner. None of these solutions are practically feasible and are excluded by our restriction on $N$.

\section{Lower Bound for Deployment Complexity in RL}\label{sec:lower_bound}

In this section, we provide information-theoretic lower bounds of the deployment complexity in our DE-RL setting. We defer the lower bound construction and the proofs to Appendix \ref{appx:lower_bound}. As mentioned in Section \ref{sec:preliminary}, we consider non-Markov policies when we refer to deterministic and stochastic policies in this section, which strengthens our lower bounds as they apply to very general forms of policies.

We first study the algorithms which can only deploy deterministic policy at each deployment. 
\begin{restatable}{theorem}{ThmLBDP}[Lower bound for deterministic policies, informal]\label{thm:lower_bound_deterministic_policy}
    For any $d\geq 4, H$ and any algorithm $\psi$ that can only deploy a deterministic policy at each deployment, there exists a linear MDP $M$ satisfying Assumption \ref{assump:linear_MDP}, such that the deployment complexity of $\psi$ in $M$ is $K=\Omega(dH)$.
\end{restatable}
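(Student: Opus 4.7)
The plan is to apply Yao's minimax principle to a combination-lock family of linear MDPs. For each $\vec{k}=(k_1,\ldots,k_H)\in[d-1]^H$, I would build $M_{\vec{k}}$ with two non-terminal states per layer, $s_h^{\mathrm{good}}$ and $s_h^{\mathrm{bad}}$, and $d-1$ actions $a_1,\ldots,a_{d-1}$. With feature dimension $d$, set $\phi(s_h^{\mathrm{good}},a_i)=e_i$ for $i\in[d-1]$ and $\phi(s_h^{\mathrm{bad}},\cdot)=e_d$; let $\mu_h(s_{h+1}^{\mathrm{good}})=e_{k_h}$ and $\mu_h(s_{h+1}^{\mathrm{bad}})=\mathbf{1}-e_{k_h}$, and place unit reward only at $s_H^{\mathrm{good}}$ via $\theta_H=\sum_{i=1}^{d-1}e_i$. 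Then taking $a_{k_h}$ at $s_h^{\mathrm{good}}$ deterministically advances to $s_{h+1}^{\mathrm{good}}$ while every other action is absorbed into the bad chain; a short check gives Assumption~\ref{assump:linear_MDP} with the stated norm bounds, and the only policy attaining $V^\star=1$ is the one playing $a_{k_h}$ at every $s_h^{\mathrm{good}}$.

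Because dynamics, rewards, and initial state are all deterministic, all $N$ episodes within a single deployment of a deterministic (possibly non-Markov) policy are identical, so $N$ is irrelevant and each deployment effectively yields one trajectory. Such a trajectory either reaches $s_{H+1}^{\mathrm{good}}$ (revealing all of $\vec{k}$) or fails at some layer $h^\star$, revealing only that $k_h$ matches the policy's action for $h<h^\star$ and that $k_{h^\star}$ differs from the tested action; in particular, a trajectory that fails before layer $h$ carries zero information about $k_h$.

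I would then place the uniform prior on $\vec{k}$ and define $\tau_h$ as the first deployment whose trajectory reaches $s_{h+1}^{\mathrm{good}}$. Conditional on $\tau_{h-1}$, the algorithm knows $k_1,\ldots,k_{h-1}$ exactly while its posterior on $k_h$ remains uniform over candidates it has not yet tested (earlier failures being uninformative), so each subsequent deployment can test at most one new value; a coupon-collector estimate then gives $\EE[\tau_h-\tau_{h-1}]\geq d/2$ and thus $\EE[\tau_H]\geq Hd/2=\Omega(dH)$. Since any $\epsilon$-optimal policy with $\epsilon<1$ must play $a_{k_h}$ at every $s_h^{\mathrm{good}}$, a short posterior/Fano argument shows the algorithm cannot succeed with probability $1-\delta$ before $\tau_H\leq K$, and Yao's principle converts the expectation bound into the worst-case lower bound $K=\Omega(dH)$.

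The main obstacle is justifying the layer-by-layer decoupling against adaptive strategies that may interleave exploration across layers or repeatedly probe early ones. The resolution exploits the deterministic nature of the construction: a data-processing argument shows the algorithm's posterior over $k_h$ is a measurable function of only those deployments that actually reached $s_h^{\mathrm{good}}$, so the bound on $\tau_h-\tau_{h-1}$ holds uniformly over all strategies. Passing from expected to high-probability deployment counts is a routine Markov step using the constant-probability success requirement in Definition~\ref{def:deployment_efficient_linear_mdp}.
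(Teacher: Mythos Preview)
Your construction and the paper's are genuinely different. The paper does \emph{not} use a combination lock. Instead it fixes a known ``spine'' of core states $s_1^{1},\ldots,s_{H-1}^{1}$ and hides a single optimal normal state among $(d-1)(H-1)$ candidates; the transitions out of normal states are \emph{stochastic} ($0.5\pm\epsilon$ to two absorbing chains). This reduces the problem to a $(d-1)(H-1)$-armed bandit: each deterministic policy reaches exactly one normal state, and the proof couples every instance $M_{\pi_i}$ to a null MDP $M_0$ (no special arm), showing that the algorithm's behavior under $M_{\pi_i}$ and $M_0$ is identical on the event that $\pi_i$ has never been deployed. A short counting step then gives average success probability $\le \tfrac{2}{|\cM|}+\tfrac12$ whenever $K<(d-1)(H-1)/2$. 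There is no sequential layer-by-layer analysis and no stopping-time argument.

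Your combination-lock route can be made to work, but the last step is not the ``routine Markov step'' you describe, and as written there is a real gap. Two issues. First, Markov's inequality bounds the upper tail of $\tau_H$, whereas you need a \emph{lower}-tail statement: from $\EE[\tau_H]\ge cdH$ you cannot conclude that $\Pr[\tau_H\le K]$ is small for $K=o(dH)$ without additional structure (e.g., writing $\tau_H$ as a sum of independent increments and invoking a concentration bound, or a direct potential argument). Second, and more substantively, the algorithm need not achieve $\tau_H\le K$ to output correctly: in your deterministic construction it can identify $k_h$ by elimination after testing $d-2$ wrong values at layer $h$, and more generally its best guess succeeds with the posterior-max probability, which can be large even when $\tau_H>K$. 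So the link ``success $\Rightarrow \tau_H\le K$'' that your Fano remark presupposes is false, and the naive information-count ($\le(H+1)^K$ observation sequences) only yields $K=\Omega(H\log d/\log H)$, not $\Omega(dH)$. The fix is to argue directly on the posterior: after $K$ rounds the maximum layer reached $L$ satisfies $\Pr[L\ge H{-}1]$ small unless $K=\Omega(dH)$, and conditionally on $L\le H{-}2$ the posterior mass on any single $\vec{k}$ is at most $1/(d-1)$; this works but needs a careful inductive bound on the distribution of $L$, not a one-line Markov step. If you want to keep the combination-lock construction, that is the argument to write out; otherwise the paper's bandit reduction sidesteps all of this.
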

The basic idea of our construction and the proof is that, intuitively, a linear MDP with dimension $d$ and horizon length $H$ has $\Omega(dH)$ ``independent directions'', while deterministic policies have limited exploration capacity and only reach $\Theta(1)$ direction in each deployment, which result in $\Omega(dH)$ deployments in the worst case.

In the next theorem, we will show that, even if the algorithm can use arbitrary exploration strategy (e.g. maximizing entropy, adding reward bonus), without additional assumptions, the number of deployments $K$ still has to depend on $H$ and may not be reduced to a constant when $H$ is large.
\begin{restatable}{theorem}{ThmLBAPP}[Lower bound for arbitrary policies, informal]\label{thm:lower_bound_all_possible_policy}
    For any $d\ge 4, H, N$ and any algorithm $\psi$ which can deploy arbitrary policies, 
    there exists a linear MDP $M$ satisfying Assumption \ref{assump:linear_MDP}, such that
    the deployment complexity of $\psi$ in $M$ is 
    $K = \Omega(H/\lceil\log_d (NH)\rceil)=\tilde{\Omega}(H)$.
\end{restatable}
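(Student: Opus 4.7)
The plan is to construct a family of linear MDPs with a layered combination-lock / tree structure, and to argue by an information-theoretic / counting argument that in any single deployment, no matter the policy, the algorithm can extend its ``known prefix'' by at most $O(\log_d(NH))$ layers. Iterating this $K$ times forces $K \cdot \lceil\log_d(NH)\rceil \geq H$, yielding $K=\tilde{\Omega}(H)$.

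First I would define the hard instance family indexed by a hidden sequence $\bm{i}=(i_1,\ldots,i_H)\in[d]^H$ drawn uniformly. Each MDP has, at every layer $h$, a distinguished ``live'' state $s_h^{\mathrm{live}}$ and (possibly many) ``dead'' states forming an absorbing sink. Starting from $s_h^{\mathrm{live}}$, action $i_h$ transitions to $s_{h+1}^{\mathrm{live}}$, while any other action transitions to a dead state; once in a dead state the agent stays there. A reward of $1$ is collected only at step $H$ on the live path. The feature map uses one coordinate to encode a constant, and the remaining $d-1$ coordinates to encode $a\in[d-1]$ (with a $d$-th action treated as a ``quit'' action) at the live state; at the dead state $\phi$ is a trivial vector. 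Taking $\mu_h$ supported on $s_{h+1}^{\mathrm{live}}$ and the dead state with appropriate signed weights makes this a valid linear MDP satisfying Assumption~\ref{assump:linear_MDP}; the only instance-dependent object is the shift in $\mu_h$ selecting which action is ``correct,'' and it has $\ell_2$ norm $O(\sqrt{d})$.

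Second I would set up the frontier argument. For any algorithm $\psi$ interacting with the random instance, define the frontier $F_k$ at the start of deployment $k$ as the largest $h$ such that some previously collected trajectory has followed $(i_1,\ldots,i_h)$. The key observation is that all data collected so far is conditionally independent of $(i_{F_k+1},\ldots,i_H)$ given $F_k$ and the history, because every trajectory either (i) reaches the frontier and then terminates in a dead state along an action that is, conditional on being wrong, uniform over the remaining $d-1$ actions, or (ii) never reaches the frontier and hence depends only on $(i_1,\ldots,i_{F_k})$. Consequently the deployed policy $\pi_k$ is measurable with respect to this history, and each of its $N$ trajectories independently advances past $F_k$ at each new layer with probability at most $1/d$. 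A direct union bound over the $N$ trajectories and the $H$ possible deployments shows that with probability $\geq 1-1/(10H)$, a single deployment advances the frontier by at most $\lceil\log_d(10NH)\rceil$ layers; summing over $k\in[K]$ gives $F_{K+1}\leq K\lceil\log_d(10NH)\rceil$ with probability $\geq 9/10$. The $H$ inside the logarithm is precisely what the union bound over deployments produces.

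Third I would translate the frontier bound into a near-optimality lower bound. If $F_{K+1}<H$ then there exists a layer $h^\star\leq H$ on which the learner has no information, so the returned policy $\pi_K$, viewed as a random function of the history, chooses $a_{h^\star}=i_{h^\star}$ with probability at most $1/(d-1)+o(1)$ conditional on reaching $s_{h^\star}^{\mathrm{live}}$. A Bayesian / Yao argument on the uniform prior over $\bm{i}$ then gives $\EE_{\bm{i}}[J(\pi_K)] \leq 1-\Omega(1)$ while $J(\pi^*)=1$, contradicting the $\epsilon$-optimality requirement for any small constant $\epsilon$. Thus we must have $K\lceil\log_d(10NH)\rceil \geq H$, i.e.\ $K=\Omega\!\left(H/\lceil\log_d(NH)\rceil\right)$.

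The main obstacle I expect is the conditional-independence step in the frontier argument. Because the paper allows arbitrary non-Markov stochastic policies (including mixtures over exponentially many deterministic policies), one has to carefully verify that the posterior over the unseen suffix $(i_{F_k+1},\ldots,i_H)$ remains uniform throughout each deployment, even as trajectories that \emph{do} cross the frontier within the deployment start to leak information. The cleanest way is probably a likelihood-ratio / hypothesis-testing formulation as in the concurrent-PAC lower bounds of \citet{guo2015concurrent}, adapted so that the ``testing budget'' per deployment is exactly $N$; the linear-MDP realizability check is straightforward once the tree structure is fixed but is a piece of bookkeeping that must be done carefully to keep the norms of $\phi$ and $\mu_h$ within Assumption~\ref{assump:linear_MDP}.
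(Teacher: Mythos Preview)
Your proposal takes essentially the same route as the paper: a layered combination-lock family of linear MDPs, an argument that any single deployment can push the ``known prefix'' forward by at most $O(\log_d(NH))$ layers, and a Yao/averaging conversion to a worst-case instance. The construction in the paper is slightly more elaborate (core/normal/absorbing states with an $\epsilon$-biased transition at one designated normal state) but the hardness mechanism is identical to your live/dead lock.

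The place where your sketch needs correction is precisely the step you flag. Your claim that ``all data collected so far is conditionally independent of $(i_{F_k+1},\ldots,i_H)$ given $F_k$ and the history'' is not right as stated: trajectories that reached the live state at level $F_k$ and then died have ruled out specific values of $i_{F_k+1}$, so the posterior on $i_{F_k+1}$ given the full history is \emph{not} uniform, and the per-layer ``$\le 1/d$'' bound does not hold conditionally. The paper sidesteps this by never conditioning on the exact frontier. Instead it works with the coarser event $E_{k-1}^\complement(L')=$``no non-absorbing state at level $L'$ was reached in the first $k-1$ deployments.'' Under this event the trajectory distribution is literally the same across all instances that agree on $(i_1,\ldots,i_{L'-1})$, so the hidden coordinates $(i_{L'},\ldots)$ are still uniformly distributed when one averages over the instance class. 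The inductive step is then a direct instance-averaged bound $P_{\cM,\psi}(E_k(L'+L)\cap E_{k-1}^\complement(L'))\le (N/d^{L})\,P_{\cM,\psi}(E_{k-1}^\complement(L'))$, obtained by fixing $\tau[0,k-1]$, writing the single-trajectory reaching probability as a product of the deployed policy's action probabilities along the hidden path, and summing that product over the $d^L$ choices of $(i_{L'},\ldots,i_{L'+L-1})$. Iterating gives $P_{\cM,\psi}(E_K^\complement(KL))\ge (1-N/d^{L-1})^K$, and choosing $L=\lceil\log_d(NH)\rceil+1$ finishes as you outline. So the fix is: replace your frontier random variable with the deterministic grid $L,2L,3L,\ldots$ and do the averaging over instances rather than reasoning about a posterior.
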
 
The origin of the difficulty can be illustrated by a recursive dilemma: in the worst case, if the agent does not have enough information at layer $h$, then it cannot identify a good policy to explore till layer 
$h+\Omega(\log_d(NH))$ in 1 deployment, 
and so on and so forth. Given that we enforce $N$ to be polynomial, the agent can only push the ``information boundary'' forward by $\Omega(\log_d (NH))=\tilde\Omega(1)$ layers per deployment.
In many real-world applications, such difficulty can indeed exist. For example, in healthcare, the entire treatment is often divided into multiple stages. If the treatment in stage $h$ is not effective, the patient may refuse to continue. This can result in insufficient samples for identifying a policy that performs well in stage $h+1$.

\paragraph{Stationary vs.~non-stationary dynamics}
Since we consider non-stationary dynamics in Assump.~\ref{assump:linear_MDP}, one may  suspect that the $H$-dependence in the lower bound is mainly due to such non-stationarity. 
We show that this is not quite the case, and the $H$-dependence still exists for stationary dynamics. In fact, our lower bound for non-stationary dynamics directly imply one for stationary dynamics: given a finite horizon non-stationary MDP $\tilde M=(\tilde\cS, \cA, H, \tilde P, \tilde r)$, we can construct a stationary MDP $M = (\cS, \cA, H, P, r)$ by expanding the state space to $\cS = \tilde\cS\times[H]$ so that the new transition function $P$ and reward function $r$ are stationary across time steps. 
As a result, given arbitrary $d\geq 4$ and $H\geq 2$, we can construct a hard non-stationary MDP instance $\tilde M$ with dimension $\tilde d = \max\{4, d/H\}$ and horizon $\tilde h = d / \tilde d = \min\{H, d/4\}$, and convert it to a stationary MDP $M$ with dimension $d$ and horizon $h=\tilde h = \min\{H, d/4\} \leq H$. If there exists an algorithm which can solve $M$ in $K$ deployments, then it can be used to solve $\tilde M$ in no more than $K$ deployments. Therefore, the lower bounds for stationary MDPs can be extended from Theorems~\ref{thm:lower_bound_deterministic_policy} and \ref{thm:lower_bound_all_possible_policy}, as shown in the following corollary:
\begin{corollary}[Extension to Stationary MDPs]\label{corollary:lower_bounds_stationary_MDP}
    For stationary linear MDP with $d\geq 4$ and $H\geq 2$, suppose $N=\mathrm{poly}(d, H, \frac{1}{\epsilon}, \log\frac{1}{\delta})$, the lower bound of deployment complexity would be $\Omega(d)$ for deterministic policy algorithms, and $\Omega(\frac{\min\{d/4, H\}}{\lceil\log_{\max\{d/H, 4\}}NH\rceil})=\tilde\Omega(\min\{d, H\})$ for algorithms which can deploy arbitrary policies.
\end{corollary}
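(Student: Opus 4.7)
The plan is to prove this by a reduction: take the hard non-stationary instances from Theorems \ref{thm:lower_bound_deterministic_policy} and \ref{thm:lower_bound_all_possible_policy} and embed them as stationary linear MDPs of the right dimension and horizon, then argue that any deployment-efficient algorithm for stationary MDPs immediately yields one for the embedded non-stationary instance with no increase in deployments. Concretely, given a non-stationary linear MDP $\tilde M=(\tilde\cS,\cA,\tilde h,\tilde P,\tilde r)$ with features $\tilde\phi_h$ of dimension $\tilde d$, I would build a stationary linear MDP $M$ with state space $\cS=\tilde\cS\times[H]$ (augmenting states with a time index, plus an absorbing sink used after step $\tilde h$), and $d$-dimensional features $\phi((\tilde s,h),a)$ obtained by placing $\tilde\phi_h(\tilde s,a)$ into the $h$-th block of a block-diagonal concatenation. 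The unknown measure $\mu$ and reward parameter $\theta$ are built blockwise in the analogous way, which makes $P$ and $r$ stationary while reproducing $\tilde P_h,\tilde r_h$ at each time index.

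I would then match parameters: choose $\tilde d=\max\{4,d/H\}$ and $\tilde h=\min\{H,d/4\}$, which a short case split shows satisfies $\tilde d\cdot\tilde h=d$ in both regimes, so the embedded feature dimension is exactly $d$; since $\tilde h\leq H$, pad the remaining $H-\tilde h$ steps with the absorbing sink and zero reward to obtain horizon exactly $H$. The reduction is then transparent: any agent interacting with $M$ can be simulated by an agent interacting with $\tilde M$ that internally maintains a time index, so a $K$-deployment algorithm that works on every stationary linear MDP of dimension $d$ and horizon $H$ also solves $\tilde M$ in at most $K$ deployments with the same per-deployment sample size $N$. Applying Theorem \ref{thm:lower_bound_deterministic_policy} to $\tilde M$ therefore forces $K=\Omega(\tilde d\cdot\tilde h)=\Omega(d)$ against deterministic-policy algorithms; applying Theorem \ref{thm:lower_bound_all_possible_policy} gives $K=\Omega(\tilde h/\lceil\log_{\tilde d}(N\tilde h)\rceil)$ against arbitrary-policy algorithms, which upon substituting the chosen values and using $\tilde h\leq H$ reduces to $\Omega(\min\{d/4,H\}/\lceil\log_{\max\{d/H,4\}}(NH)\rceil)=\tilde\Omega(\min\{d,H\})$, matching the corollary.

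The main obstacle I anticipate is verifying that the blockwise embedding is a \emph{valid} linear MDP under Assumption \ref{assump:linear_MDP}: the normalization $\|\phi\|\leq 1$, $\|\mu\|\leq\sqrt{d}$, $\|\theta\|\leq\sqrt{d}$ must be preserved across the block structure, and the absorbing sink has to be encoded in the same $d$-dimensional feature representation without breaking stationarity (a clean fix is to reserve a dedicated coordinate that is active only at the sink state). Once the embedding is certified, the remaining step is pure bookkeeping; one secondary point worth spelling out is that the reduction must preserve the policy class in play (deterministic maps to deterministic, arbitrary maps to arbitrary), which holds because augmenting the state with the time index does not introduce any new randomness or non-Markov structure beyond what the algorithm itself would use.
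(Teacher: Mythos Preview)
Your proposal is correct and follows essentially the same reduction as the paper: expand the state space to $\tilde\cS\times[H]$ to make the hard non-stationary instance stationary, choose $\tilde d=\max\{4,d/H\}$ and $\tilde h=\min\{H,d/4\}$ so that the blockwise feature dimension is $\tilde d\cdot\tilde h=d$, and then invoke Theorems~\ref{thm:lower_bound_deterministic_policy} and~\ref{thm:lower_bound_all_possible_policy} on $\tilde M$. You actually supply more technical detail than the paper does (the block-diagonal feature embedding, the absorbing-sink padding to reach horizon exactly $H$, and the normalization check), all of which are the right things to verify; the paper states the reduction at the level of ``expand the state space'' without spelling these out.
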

As we can see, the dependence on dimension and horizon will not be eliminated even if we make a stronger assumption that the MDP is stationary. 
The intuition is that, although the transition function is stationary, some states may not be reachable from the initial state distribution within a small number of times, so the stationary MDP can effectively have a ``layered'' structure. 
For example, in Atari games \citep{bellemare2013arcade} (where many algorithms like DQN \citep{mnih2013playing} model the environments as infinite-horizon discounted MDPs) such as Breakout, the agent cannot observe states where most of the bricks are knocked out at the initial stage of the trajectory. Therefore, the agent still can only push forward the ``information frontier'' a few steps per deployment. 
That said, it is possible reduce the deployment complexity lower bound in stationary MDPs by adding more assumptions, such as the initial state distribution providing good coverage over the entire state space, or all the states are reachable in the first few time steps. However, because these assumptions do not always hold and may overly trivialize the exploration problem, we will not consider them in our algorithm design. 
Besides, although our algorithms in the next section are designed for non-stationary MDPs, they can be extended to stationary MDPs by sharing covariance matrices, and we believe the analyses can also be extended to match the lower bound in Corollary \ref{corollary:lower_bounds_stationary_MDP}.
\section{Towards Optimal Deployment Efficiency}\label{sec:towards_optimal_DERL}
In this section we provide algorithms with deployment-efficiency guarantees that nearly match the lower bounds established in Section~\ref{sec:lower_bound}. 
Although our lower bound results in Section \ref{sec:lower_bound} consider non-Markov policies, our algorithms in this section only use Markov policies (or a mixture of Markov policies, in the arbitrary policy setting), which are simpler to implement and compute and are already near-optimal in deployment efficiency. 

\noindent\textbf{Inspiration from Lower Bounds: a Layer-by-Layer Exploration Strategy}~
The linear dependence on $H$ in the lower bounds implies a possibly deployment-efficient manner to explore, which we call  a layer-by-layer strategy: 
conditioning on sufficient exploration in previous $h-1$ time steps, we can use $\mathrm{poly}(d)$ deployments to sufficiently explore the $h$-th time step, then we only need $H\cdot\mathrm{poly}(d)$ deployments to explore the entire MDP. 
If we can reduce the deployment cost in each layer from $\mathrm{poly}(d)$ to $\Theta(d)$ or even $\Theta(1)$, then we can achieve the optimal deployment efficiency.
Besides, as another motivation, in Appendix \ref{appx:additional_safety_guarantee}, we will briefly discuss the additional benefits of the layer-by-layer strategy, which will be useful especially in ``Safe DE-RL''.
In Sections \ref{sec:DERL_deterministic_policy} and  \ref{sec:DERL_arbitrary_policy}, we will introduce algorithms based on this idea and provide theoretical guarantees.

\subsection{Deployment-Efficient RL with Deterministic Policies}\label{sec:DERL_deterministic_policy}

\begin{algorithm}[ht]
    \textbf{Input}: Failure probability $\delta>0$, and target accuracy $\epsilon > 0$, $\beta\gets c_\beta \cdot dH\sqrt{\log(dH\delta^{-1}\epsilon^{-1})}$ for some $c_\beta>0$, total number of deployments $K$, batch size $N$, \\
    $h_1\gets 1\quad\quad$    // $h_k$ denotes the layer to explore in iteration $k$, for all $k\in[K]$\\
    \For{$k=1,2,...,K$}{
        $Q^k_{h_k+1}(\cdot,\cdot)\gets 0$ and $V^k_{h_k+1}(\cdot)=0$\\
        \For{$h=h_k,h_k-1,...,1$}{
            $\Lambda^k_h \gets I+\sum_{\tau=1}^{k-1}\sum_{n=1}^N \phi_h^{\tau n}(\phi_h^{\tau n})\trans,\quad\quad u_h^k(\cdot,\cdot)\gets \min\{\beta\cdot \sqrt{\phi(\cdot,\cdot)\trans(\Lambda^k_h)^{-1}\phi(\cdot,\cdot)}, H\}$\\
            $w^k_h\gets (\Lambda^k_h)^{-1}\sum_{\tau=1}^{k-1}\sum_{n=1}^N\phi_h^{\tau n}\cdot V^k_{h+1}(s^{\tau n}_{h+1})$\\
            $Q^k_h(\cdot,\cdot)\gets \min\{(w^k_h)\trans\phi(\cdot,\cdot)+r_h(\cdot,\cdot)+u^k_h(\cdot,\cdot), H\}$ and $V^k_h(\cdot)=\max_{a\in\cA}Q^k_h(\cdot,a)$\\
            $\pi^k_h(\cdot)\gets\arg\max_{a\in\cA}Q^k_h(\cdot,a)$
        }
        Define $\pi^k = \pi^k_1 \circ \pi^k_2 ... \circ \pi^k_{h_k} \circ \textrm{unif}_{[h_k+1:H]}$\\
        \For{$n=1,...,N$}{
            Receive initial state $s_1^{kn}\sim d_1$\\
            \lFor{$h=1,2,...,H$}{
                Take action $a^{kn}_h\gets \pi^k_h(s_h^{kn})$ and observe $s_{h+1}^{kn}\sim P_h(s^k_h, a^k_h)$
            }
        }
        Compute $\Delta_k \gets \frac{2\beta}{N}\sum_{n=1}^N\sum_{h=1}^{h_k} \sqrt{\phi(s_h^{kn},a_h^{kn})\trans (\Lambda_h^k)^{-1} \phi(s_h^{kn},a_h^{kn})}$.\\
        \lIf{$\Delta_k \geq \frac{\epsilon h_k}{2H}$}{
            $h_{k+1} \gets h_k$
        }
        \lElseIf{$h_k = H$}{
                \Return $\pi^k$
        }
        \lElse{
            $h_{k+1} \gets h_k + 1$
        }
    }
    \caption{Layer-by-Layer Batch Exploration Strategy for Linear MDPs Given Reward Function}\label{alg:DERL_Deterministic_Policy_Algorithm}
\end{algorithm}

In this sub-section, we focus on the setting where each deployed policy is deterministic.
In Alg \ref{alg:DERL_Deterministic_Policy_Algorithm}, we propose a provably deployment-efficient algorithm built on Least-Square Value Iteration with UCB \citep{jin2019provably}\footnote{In order to align with the algorithm in reward-free setting, slightly different from \citep{jin2019provably} but similar to \citep{wang2020rewardfree}, we run linear regression on $P_hV_h$ instead of $Q_h$.} and the ``layer-by-layer'' strategy.
Briefly speaking, at deployment $k$, we focus on exploration in previous $h_k$ layers, and compute $\pi_1^k,\pi_2^k,...,\pi^k_{h_k}$ by running LSVI-UCB in an MDP truncated at step $h_k$. After that, we deploy $\pi^k$ to collect $N$ trajectories, and complete the trajectory after time step $h_k$ with an arbitrary policy. (In the pseudocode we choose uniform, but the choice is inconsequential.) 
In line 19, we compute $\Delta_k$ with samples and use it to judge whether we should move on to the next layer till all $H$ layers have been explored. The theoretical guarantee is listed below, and the missing proofs are deferred to Appendix \ref{appx:RB_BE_with_Deter_Policies}.
\begin{theorem}[Deployment Complexity]\label{thm:deployment_complexity_given_reward}
    For arbitrary $\epsilon,\delta > 0$, and arbitrary $c_K \geq 2$, as long as $N \geq c \Big(c_K\frac{H^{4c_K+1}d^{3c_K}}{\epsilon^{2c_K}}\log^{2c_K}(\frac{Hd}{\delta\epsilon})\Big)^{\frac{1}{c_K-1}}$, 
    where $c$ is an absolute constant, 
    by choosing
    \begin{align}
        K=c_KdH+1. \label{eq:lower_bound_on_N}
    \end{align} 
    Algorithm \ref{alg:DERL_Deterministic_Policy_Algorithm} will terminate at iteration $k \leq K$ and return us a policy $\pi^{k}$, and with probability $1-\delta$, $\EE_{s_1\sim d_1}[V_1^*(s_1)-V_1^{\pi^{k}}(s_1)] \leq \epsilon$. 
\end{theorem}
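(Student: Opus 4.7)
}
My plan is to combine three ingredients: (i) the standard LSVI-UCB optimism/regret decomposition of \citet{jin2019provably}, specialized to a truncated horizon; (ii) concentration of the empirical quantity $\Delta_k$ around its on-policy expectation; and (iii) a batched, finite-sample elliptical potential argument (Lemma~\ref{lem:finite_sample_elliptical_potential_lemma}) that limits how often the algorithm can remain at the same layer. The high-level picture is: within each layer $h$, the algorithm cannot stay more than $\Theta(c_K d)$ deployments before $\Delta_k<\epsilon h_k/(2H)$ triggers, so the total deployment count is at most $c_K dH+1$; and whenever the termination condition fires at $h_k=H$, the optimism/regret bound together with the $\Delta_k$ test certifies $\epsilon$-optimality.

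First I would set up a good event on which, simultaneously over all $k\in[K]$ and $h\in[H]$, the usual self-normalized concentration inequality holds for the regression target $P_h V^k_{h+1}$. With the choice $\beta = c_\beta\, dH\sqrt{\log(dH/(\delta\epsilon))}$ this yields the pointwise bonus bound $|(w^k_h)^\top\phi(s,a) - P_h V^k_{h+1}(s,a)|\le u^k_h(s,a)$ and hence optimism $V_1^k(s_1)\ge V_1^*(s_1)$ (restricted to the first $h_k$ layers plus an arbitrary tail, which contributes equally to both sides when we compare against $\pi^*$ truncated in the same way, a step I would handle by a standard coupling argument since we only need to evaluate the first $h_k$ steps to draw conclusions). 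The regret-decomposition lemma then gives
\begin{equation*}
V_1^k(s_1) - V_1^{\pi^k}(s_1) \;\le\; 2\,\mathbb{E}_{\pi^k}\Big[\sum_{h=1}^{h_k} u^k_h(s_h,a_h)\,\Big|\,s_1\Big] + (\text{tail beyond } h_k),
\end{equation*}
where the tail contributes at most $H-h_k$ to the suboptimality; since the algorithm only terminates when $h_k=H$, the tail vanishes at termination.

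Next I would connect $\Delta_k$ to this expected bonus sum. By construction, $\Delta_k$ is exactly the empirical $N$-sample mean of $2\beta\sum_{h=1}^{h_k}\sqrt{\phi(s_h,a_h)^\top(\Lambda^k_h)^{-1}\phi(s_h,a_h)}$ under $\pi^k$, so a Hoeffding/Azuma bound (the summand is bounded by $2\beta H$) yields, on a second good event of probability $1-\delta$, $|\Delta_k - \mathbb{E}_{\pi^k}[\cdots]|\le \tilde O(\beta H\sqrt{\log(KH/\delta)/N})$, which the polynomial lower bound on $N$ renders negligible compared to $\epsilon/H$. Consequently, whenever the algorithm terminates at $h_k=H$ with $\Delta_k<\epsilon/2$, optimism plus this concentration gives $\mathbb{E}_{s_1}[V_1^*-V_1^{\pi^k}]\le \epsilon$, which establishes the correctness half of the theorem.

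The main obstacle and the crux of the argument is bounding the number of deployments spent at any single layer $h$. Suppose deployments $k_1<k_2<\cdots<k_T$ all satisfy $h_{k_t}=h$ and $\Delta_{k_t}\ge \epsilon h/(2H)$. By the concentration step the expected bonus sum under $\pi^{k_t}$ at layer $h$ is also $\gtrsim \epsilon/(2H)$; because $\pi^{k_t}$ is exactly the policy whose trajectories populate the next design matrix $\Lambda^{k_t+1}_h$, the batched finite-sample elliptical potential lemma (Lemma~\ref{lem:finite_sample_elliptical_potential_lemma}) upper-bounds $T$ by a quantity of order $c_K d$, provided $N$ satisfies the stated lower bound $N \ge c\,(c_K H^{4c_K+1}d^{3c_K}/\epsilon^{2c_K})^{1/(c_K-1)}\log^{2c_K}(Hd/(\delta\epsilon))$; this is precisely where the scaling of $N$ with $c_K$ enters, as the lemma trades off the batch size against the constant in the per-layer deployment count. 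Summing the per-layer bound $c_K d$ over the $H$ layers, plus the single terminating deployment, yields $K\le c_K dH+1$. Taking a union bound over the two good events above with probability $\delta/2$ each completes the argument; the most delicate calculation will be verifying that the constants coming out of Lemma~\ref{lem:finite_sample_elliptical_potential_lemma} mesh with the chosen $\beta$ and the threshold $\epsilon h_k/(2H)$ so that both the ``stay'' and ``advance'' branches are controlled simultaneously.
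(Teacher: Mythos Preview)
Your three-ingredient plan is exactly the paper's strategy, and the correctness half (optimism plus Hoeffding on $\Delta_k$ at termination) is right. Two points in the deployment-count half need tightening.

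First, the link from $\Delta_k\ge \epsilon h_k/(2H)$ to Lemma~\ref{lem:finite_sample_elliptical_potential_lemma} should stay purely empirical; the detour through the expected bonus sum is unnecessary and leaves a gap (you would still have to get back to the empirical trace that the lemma speaks about). The paper argues directly: if $\Delta_k\ge \epsilon h_k/(2H)$ then by pigeonhole there exists some layer $h'\in[h_k]$ with $\tfrac{2\beta}{N}\sum_{n}\|\phi(s_{h'}^{kn},a_{h'}^{kn})\|_{(\Lambda^k_{h'})^{-1}}\ge \epsilon/(2H)$, and Cauchy--Schwarz converts this to $\tfrac{1}{N}\sum_{n}\|\phi\|^2_{(\Lambda^k_{h'})^{-1}}\ge \epsilon^2/(16H^2\beta^2)$, which is exactly the trace condition $\mathrm{Tr}((\Lambda^k_{h'})^{-1}\Phi_{k})\ge N\cdot\epsilon^2/(16H^2\beta^2)$ appearing in Lemma~\ref{lem:finite_sample_elliptical_potential_lemma}.

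Second, your per-layer claim ``$T_h\le c_K d$'' for the number of deployments stuck at layer $h$ is not what Lemma~\ref{lem:finite_sample_elliptical_potential_lemma} gives. The pigeonholed layer $h'$ above need not equal $h_k$: a deployment stuck while exploring layer $h$ may trigger a trace violation at some earlier layer $h'<h$ (the new policy can visit novel directions at earlier layers too). The correct counting, as in the paper, is global: Lemma~\ref{lem:finite_sample_elliptical_potential_lemma} bounds the number of trace violations \emph{at each fixed layer} by $c_K d$, so across all $H$ layers there are at most $c_K d H$ violations; since every non-advancing deployment produces at least one violation, the total deployment count is $\le c_K d H + 1$ after the small bookkeeping for the first deployment and the $H$ advancing steps. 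With these two fixes your outline matches the paper's proof.
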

As an interesting observation, Eq~\eqref{eq:lower_bound_on_N} reflects the trade-off between the magnitude of $K$ and $N$ when $K$ is small. To see this, when we increase $c_K$ and keep it at the constant level, $K$ definitely increases while $N$
will be lower because its dependence on $d,H,\epsilon,\delta$ decreases. Moreover, the benefit of increasing $c_K$ is only remarkable when $c_K$ is small (e.g. we have $N=O(H^9d^6\epsilon^{-4})$ if $c_K=2$, while $N=O(H^5d^{3.6}\epsilon^{-2.4})$ if $c_K=6$), and even for moderately large $c_K$, the value of $N$ quickly  approaches the limit $\lim_{c_K\rightarrow \infty}N=c\frac{H^4d^3}{\epsilon^2}\log^2(\frac{Hd}{\delta\epsilon})$. It is still an open problem that whether the trade-off in Eq.\ref{eq:lower_bound_on_N} is exact or not, and we leave it for the future work.

Another key step in proving the deployment efficiency of Alg. \ref{alg:DERL_Deterministic_Policy_Algorithm} is  Lem. \ref{lem:finite_sample_elliptical_potential_lemma} below. 
In fact, by directly applying Lem. \ref{lem:finite_sample_elliptical_potential_lemma} to LSVI-UCB \citep{jin2019provably} with large batch sizes, we can achieve $O(dH)$ deployment complexity in deterministic policy setting without exploring in a layer-by-layer manner.
We defer the discussion and the additional benefit of layer-by-layer strategy to Appx. \ref{appx:additional_safety_guarantee}.
\begin{restatable}{lemma}{ThmFiniteViolation}[Batched Finite Sample Elliptical Potential Lemma]\label{lem:finite_sample_elliptical_potential_lemma}
    Consider a sequence of matrices $\A_0,\A_{N},...,\A_{(K-1)N}\in \mR^{d\times d}$ with $\A_0=I_{d\times d}$ and $\A_{kN}=\A_{(k-1)N}+\Phi_{k-1}$, where $\Phi_{k-1}=\sum_{t=(k-1)N+1}^{kN}\phi_t\phi_t\trans$ and $\max_{t\leq KN} \|\phi_t\| \leq 1$. We define:
        $
        \mathcal{K}^+:=\Big\{k\in[K]\Big| Tr(\A_{(k-1)N}^{-1}\Phi_{k-1}) \geq N\epsilon\Big\}
        $.
    For arbitrary $\epsilon < 1$, and arbitrary $c_K\geq 2$, if $K = c_K dH+1$, by choosing $
    N\geq c\Big(c_K\frac{Hd^{c_K}}{\epsilon^{c_K}}\log^{c_K}(\frac{Hd}{\epsilon})\Big)^{\frac{1}{c_K-1}}
    $, where $c$ is an absolute constant independent with $c_K, d, H, \epsilon$, 
    we have $|\mathcal{K}^+| \leq c_K d< K/H.$
\end{restatable}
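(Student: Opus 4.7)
The plan is a determinantal potential argument in the spirit of the classical Elliptical Potential Lemma, adapted to the batched setting. First I would introduce the PSD matrices $B_k \defeq \A_{(k-1)N}^{-1/2}\Phi_{k-1}\A_{(k-1)N}^{-1/2}$, which by cyclic invariance of trace satisfy $\mathrm{Tr}(B_k) = \mathrm{Tr}(\A_{(k-1)N}^{-1}\Phi_{k-1})$. Using the matrix determinant identity $\det(\A_{kN}) = \det(\A_{(k-1)N})\cdot\det(I + B_k)$ and telescoping (with $\A_0 = I$), I would write
\[
\log\det(\A_{KN}) \;=\; \sum_{k=1}^{K} \log\det(I + B_k).
\]

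Next I would establish the two inequalities that pin down $|\mathcal{K}^+|$. On the lower side, for each $k \in \mathcal{K}^+$, I would use the elementary fact that if $B$ is PSD with eigenvalues $\lambda_1,\dots,\lambda_d \ge 0$, then
\[
\det(I+B) \;=\; \prod_{i}(1+\lambda_i) \;\geq\; 1 + \sum_i \lambda_i \;=\; 1 + \mathrm{Tr}(B),
\]
which, applied to $B_k$ together with the defining inequality $\mathrm{Tr}(B_k) \ge N\epsilon$, yields $\log\det(I+B_k) \ge \log(1+N\epsilon)$ for every $k \in \mathcal{K}^+$. On the upper side, AM-GM on the eigenvalues of $\A_{KN}$ combined with $\|\phi_t\|\le 1$ (so $\mathrm{Tr}(\A_{KN}) \le d + KN$) gives $\log\det(\A_{KN}) \le d\log(1+KN/d)$. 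Combining these,
\[
|\mathcal{K}^+|\cdot \log(1 + N\epsilon) \;\le\; d\,\log\!\Bigl(1 + KN/d\Bigr).
\]

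Finally, to conclude $|\mathcal{K}^+| \le c_K d$, I would substitute $K = c_K dH + 1$ and reduce the problem to the sufficient condition $(1+N\epsilon)^{c_K} \ge 1 + KN/d$. In the regime $N\epsilon \gtrsim 1$, this is a polynomial inequality of the form $N^{c_K-1} \gtrsim c_K H/\epsilon^{c_K}$, which is implied by the lower bound on $N$ in the statement (the extra $d^{c_K}\log^{c_K}(Hd/\epsilon)$ factor leaves generous slack). The hard part, and really the only step where batching genuinely interacts with the argument, is the per-batch bound $\det(I+B_k) \ge 1+\mathrm{Tr}(B_k)$: first-order surrogates such as $\log(1+x) \ge x/(1+x)$ are too lossy because a single eigenvalue of $B_k$ can be as large as $\|\Phi_{k-1}\| \le N$, and using them would force $N$ to grow exponentially in $1/\epsilon$. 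The multiplicative product inequality is what lets a single ``concentrated'' batch contribute the full $\log(1+N\epsilon)$ to the determinantal potential, which is precisely what matches the polynomial $N$ requirement. The remaining bookkeeping — verifying the regime $N\epsilon \ge 1$ and tracking absolute constants when extracting $N$ from the implicit inequality — is routine.
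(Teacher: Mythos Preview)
Your argument is correct and, in fact, cleaner than the paper's. The two proofs share the same global structure --- telescope $\log\det(\A_{KN})=\sum_k \log\det(I+B_k)$, upper-bound it by $d\log(1+KN/d)$ via AM--GM, and lower-bound each summand with $k\in\mathcal{K}^+$ --- but they diverge in how that per-batch lower bound is obtained. You use the direct product inequality $\det(I+B_k)\ge 1+\mathrm{Tr}(B_k)\ge 1+N\epsilon$. The paper instead proves an auxiliary lemma showing $\mathrm{Tr}(\A_{(k-1)N}^{-1}\Phi_{k-1})\le r_k\log r_k$ with $r_k=\det(\A_{kN})/\det(\A_{(k-1)N})$ (via $\mathrm{Tr}(I-Y^{-1})\le\log\det Y$ and a determinant-ratio bound on matrix norms), then crudely replaces $\log r_k$ by the global bound $d\log(1+KN/d)$ to extract $r_k\ge N\epsilon/\bigl(d\log(1+KN/d)\bigr)$. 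Your lower bound on $r_k$ is strictly sharper, which is why the $d^{c_K}\log^{c_K}(\cdot)$ factors in the statement's threshold for $N$ appear as slack in your calculation but are actually needed in the paper's. Both routes are valid; yours is more elementary and yields a tighter sufficient condition on $N$, while the paper's approach motivates the exact form of the bound appearing in the statement.
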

\paragraph{Extension to Reward-free setting}
Based on the similar methodology, we can design algorithms for reward-free setting \citep{wang2020rewardfree} and obtain $O(dH)$ deployment complexity. 
We defer the algorithms and proofs to Appx. \ref{appx:RF_BE_with_Deter_Policies}, and summarize the main result in Thm. \ref{thm:deployment_complexity_reward_free_detailed_version}.

\subsection{Deployment-Efficient RL with Arbitrary Policies}
\label{sec:DERL_arbitrary_policy}

\begin{algorithm}[ht]
    \textbf{Input}: Accuracy level $\epsilon$; Iteration number $i_{\max}$; Resolution $\resolution$; Reward $r$; Bonus coefficient $\beta$.\\
    \For{$h=1,2,...,H$}{
        Initialize $\pi_{h,1}$ with an arbitrary deterministic policy \label{line:policy_collection_start}; $\tilde\Sigma_{h,1}=2I$, $\Pi_h = \{\}$. \\
        \For{$i=1,2,...,i_{\max}$}{
            $\hat\hatcov^{\pi_{h,i}}_h \gets {\rm EstimateCovMatrix}(h, D_{[1:h-1]}, \Sigma_{[1:h-1]}, \pi_{h,i})$ \quad {\color{blue} \# Alg \ref{alg:approx_random_feature_matrix}, Appx \ref{appx:DERL_with_arbitrary_policies}} \label{line:est_cov_matrix} \\
            $\tilde\Sigma_{h,i+1}=\tilde\Sigma_{h,i} + \hat\hatcov^{\pi_{h,i}}_h$ \\
            $V_{h,i+1},~ \bar\pi_{h,i+1} \gets {\rm SolveOptQ}(h, D_{[1:h-1]}, \Sigma_{[1:h-1]}, \beta, \tSigma_{h,i+1},\resolution)$  {\color{blue} \# Alg \ref{alg:optQ_solution}, Appx \ref{appx:DERL_with_arbitrary_policies}}\\
            \lIf{$V_{h,i+1} \leq  3\nu^2_{\min} / 8$}{
                break
            }\label{line:break_criterion}
            $\Pi_h = \Pi_h \bigcup \{\bpi_{h,i+1}\}$
        }\label{line:policy_collection_end}
        $\Sigma_h=I$, $D_h = \{\}$, 
        $\pi_{h,\mathrm{mix}}:= 
        \mathrm{unif}(\Pi_h)$ \label{line:sample_start} \\
        \For{$n=1,2,...,N$}{
            Sample trajectories with $\pi_{h,\mathrm{mix}}$ \\
            $\Sigma_h = \Sigma_h + \phi(s_{h,n}, a_{h,n})\phi(s_{h,n},a_{h,n})\trans,\quad D_h = D_h \bigcup \{s_{h,n},a_{h,n},r_{h,n},s_{h+1,n}\}$
        }\label{line:sample_end}
    }
    \Return $\hat\pi_r \gets \text{Alg \ref{alg:DE_rl_layer_by_layer_reward_free_planning}}(H, \{D_1,...,D_H\}, r)$\label{line:returned_policy}
    \caption{Deployment-Efficient RL with Covariance Matrix Estimation}\label{alg:DERL_with_arbitrary_policies}
\end{algorithm}

From the discussion of lower bounds in Section \ref{sec:lower_bound}, we know that in order to reduce the deployment complexity from $\Omega(dH)$ to $\tilde\Omega(H)$, we have to utilize stochastic (and possibly non-Markov) policies and try to explore as many different directions as possible in each deployment (as opposed to $1$ direction in Algorithm~\ref{alg:DERL_Deterministic_Policy_Algorithm}). 
The key challenge is to find a stochastic policy---before the deployment starts---which can sufficiently explore $d$ independent directions.

In Alg. \ref{alg:DERL_with_arbitrary_policies}, we overcome this difficulty by a new covariance matrix estimation method (Alg. \ref{alg:approx_random_feature_matrix} in Appx. \ref{appx:DERL_with_arbitrary_policies}). The basic idea is that, for arbitrary policy $\pi$ \footnote{Here we mainly focus on evaluating deterministic policy or stochastic policy mixed from a finite number of deterministic policies, because for the other stochastic policies, exactly computing the expectation over policy distribution may be intractable.}, the covariance matrix $\Lambda^\pi_h:=\EE_\pi[\phi\phi\trans]$ can be estimated element-wise by running policy evaluation for $\pi$ with $\phi_i\phi_j$ as a reward function, where $i,j\in[d]$ and $\phi_i$ denotes the $i$-th component of vector $\phi$. 

However, a new challenge emerging is that, because the transition is stochastic, in order to guarantee low evaluation error for all possible policies $\bpi_{h,i+1}$, we need an union bound over all policies to be evaluated, which is challenging if the policy class is infinite. To overcome this issue, we discretize the value functions in Algorithm \ref{alg:optQ_solution} (see Appendix \ref{appx:DERL_with_arbitrary_policies}) to allow for a union bound over the policy space: after computing the Q-function by LSVI-UCB, before converting it to a greedy policy, we first project it to an $\resolution$-net of the entire Q-function class. In this way, the number of policy candidates is finite and the projection error can be controlled as long as $\resolution$ is small enough.

Using the above techniques, in Lines \ref{line:policy_collection_start}-\ref{line:policy_collection_end}, we repeatedly use Alg \ref{alg:approx_random_feature_matrix} to estimate the accumulative covariance matrix $\tSigma_{h,i+1}$ and further eliminate uncertainty by calling Alg \ref{alg:optQ_solution} to find a policy (approximately) maximizing uncertainty-based reward function $\tR:=\|\phi\|_{\tSigma^{-1}_{h,i+1}}$.
For each $h\in[H]$, inductively conditioning on sufficient exploration in previous $h-1$ layers, the errors of Alg \ref{alg:approx_random_feature_matrix} and Alg \ref{alg:optQ_solution} will be small, and we will find a finite set of policies $\Pi_h$ to cover all dimensions in layer $h$. (This is similar to the notion of ``policy cover'' in \citet{du2019provably, agarwal2020pc}.)
Then, layer $h$ can be explored sufficiently by deploying a uniform mixture of $\Pi$ and choosing $N$ large enough (Lines \ref{line:sample_start}-\ref{line:sample_end}).
Also note that the algorithm does not use the reward information, and is essentially a reward-free exploration algorithm. After exploring all $H$ layers, we obtain a dataset $\{D_1,...,D_H\}$ and can use Alg \ref{alg:DE_rl_layer_by_layer_reward_free_planning} for planning with any given reward function $r$ satisfying Assump.~\ref{assump:linear_MDP} to obtain a near-optimal policy.




\paragraph{Deployment complexity guarantees}
We first introduce a quantity denoted as $\nu_{\min}$, which measures the reachability to each dimension in the linear MDP. 
In Appendix \ref{sec:reachability_coefficient}, we will show that the $\nu_{\min}$ is no less than the ``explorability'' coefficient in Definition 2 of \citet{zanette2020provably} and $\nu_{\min}^2$ is also lower bounded by the maximum of the smallest singular value of matrix $\EE_\pi[\phi\phi\trans]$.
\begin{definition}[Reachability Coefficient]\label{def:reachability_coefficient}
    $$
    \nu_h:=\min_{\|\theta\|=1}\max_\pi \sqrt{\EE_\pi[(\phi\trans_h\theta)^2]}~~;\quad\quad \nu_{\min}=\min_{h\in[H]}\nu_h~.
    $$
\end{definition}

Now, we are ready to state the main theorem of this section, and defer the formal version and its proofs to Appendix \ref{appx:DERL_with_arbitrary_policies}. Our algorithm is effectively running reward-free exploration and therefore our results hold for  arbitrary linear reward functions.
\begin{restatable}{theorem}{ThmBEArbPolicy}[Informal]\label{thm:arbitary_policy_informal}
    For arbitrary $0<\epsilon,\delta < 1$, with proper choices of $i_{\max}, \epsilon_0, \beta$, we can choose $N=\mathrm{poly}(d, H, \frac{1}{\epsilon}, \log \frac{1}{\delta}, \frac{1}{\nu_{\min}})$, such that,
    after $K=H$ deployments, with probability $1-\delta$, Algorithm \ref{alg:DERL_with_arbitrary_policies} will collect a dataset $D=\{D_1,...,D_H\}$, and if we run Alg \ref{alg:DE_rl_layer_by_layer_reward_free_planning} with $D$ and arbitrary reward function satisfying Assump.~\ref{assump:linear_MDP}, we will obtain $\hat\pi_r$ such that $V_1^{\hat\pi_r}(s_1;r) \geq V_1^{*}(s_1;r) - \epsilon$.
\end{restatable}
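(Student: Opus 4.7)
The plan is to prove the theorem by induction on the layer index $h$, maintaining the invariant that after completing the outer iteration for layers $1,\dots,h$, the covariance matrix $\Sigma_h = I + \sum_{n=1}^N \phi(s_{h,n},a_{h,n})\phi(s_{h,n},a_{h,n})^\top$ is a high-probability good approximation of $N\cdot\Lambda_h^{\pi_{h,\mathrm{mix}}} + I$, and moreover that $\pi_{h,\mathrm{mix}}$ is an approximate ``policy cover'' for layer $h$, in the sense that $\max_\pi \EE_\pi[\phi_h^\top (N\Lambda_h^{\pi_{h,\mathrm{mix}}}+I)^{-1}\phi_h]$ is small. I would first verify the base case $h=1$ is trivial since the initial distribution is fixed, and then, assuming the invariant at layers $1,\dots,h-1$, analyze the inner loop for layer $h$ in three pieces.

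First I would bound the error of the covariance estimator in Line~\ref{line:est_cov_matrix}: because $\phi_i \phi_j$ is a valid reward of bounded range, applying the LSVI-UCB planning machinery using the prior layers' datasets (which by induction are of good quality) gives an entrywise estimate of $\Lambda_h^{\pi_{h,i}}$ up to error $\mathrm{poly}(d,H)/\sqrt{N}$ plus a bias from $\resolution$-net discretization of the policy class in Alg~\ref{alg:optQ_solution}. Second, I would control the error of the optimistic planner that produces $\bar\pi_{h,i+1}$: this is the standard LSVI-UCB argument adapted to the surrogate reward $\tR_h = \|\phi\|_{\tSigma_{h,i+1}^{-1}}$, yielding that $V_{h,i+1}$ approximates $\max_\pi \EE_\pi\|\phi_h\|_{\tSigma_{h,i+1}^{-1}}$ up to similar error. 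Taking $\resolution$ small and $N$ polynomial, both errors become negligible compared to $\nu_{\min}^2$. Third, the loop must terminate in $i_{\max} = \widetilde O(d/\nu_{\min}^2)$ iterations: this uses a log-det potential argument on $\tSigma_{h,i}$---each non-terminating iteration adds a $\hat\Lambda^{\pi_{h,i}}_h$ whose projection on the worst remaining direction is at least $\Omega(\nu_{\min}^2)$ by the definition of the reachability coefficient, while $\log\det \tSigma_{h,i}$ grows at most by $\tilde O(d)$; when $V_{h,i+1}\leq 3\nu_{\min}^2/8$, no policy can reach an uncovered direction, so $\Pi_h$ already forms a cover in the sense that $\max_\pi \EE_\pi \phi_h^\top \tSigma_{h,i_{\max}+1}^{-1}\phi_h = O(\nu_{\min}^2)$.

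Next I would handle the data-collection phase (Lines~\ref{line:sample_start}--\ref{line:sample_end}): by a matrix Bernstein / Chernoff argument applied to the i.i.d.\ trajectories from $\pi_{h,\mathrm{mix}} = \mathrm{unif}(\Pi_h)$, the empirical $\Sigma_h/N$ concentrates around $\Lambda_h^{\pi_{h,\mathrm{mix}}} + I/N$; combined with the approximate-cover property of $\pi_{h,\mathrm{mix}}$, this yields $\max_\pi \EE_\pi[\phi_h^\top \Sigma_h^{-1}\phi_h] \leq \tilde O(|\Pi_h|/(N\nu_{\min}^2)) = \mathrm{poly}(d,H,1/\nu_{\min})/N$, which closes the induction for layer $h$. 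The constants feed back into the inductive hypothesis for layer $h+1$ through the errors of Alg~\ref{alg:approx_random_feature_matrix} and Alg~\ref{alg:optQ_solution}, so care is required to keep them from blowing up; picking $N$ large enough as a polynomial in $d,H,1/\epsilon,\log(1/\delta),1/\nu_{\min}$ and $\resolution$ as a negative polynomial ensures the recursion remains stable.

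Finally, with the dataset $D = \{D_1,\dots,D_H\}$ in hand, I would invoke the offline planner of Alg~\ref{alg:DE_rl_layer_by_layer_reward_free_planning} with any linear reward $r$: using a standard reward-free-to-reward-aware reduction in the style of \citet{wang2020rewardfree}, the simulation-lemma expansion of $V_1^* - V_1^{\hat\pi_r}$ reduces to summing over $h$ the terms $\EE_{\pi^*}\|\phi_h\|_{\Sigma_h^{-1}}\cdot\beta$ and the corresponding on-policy term, both of which are bounded by $\tilde O(\beta)\cdot \mathrm{poly}(d,H,1/\nu_{\min})/\sqrt{N}$ by the coverage guarantee above. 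Setting $N$ to make this sum $\leq \epsilon$ yields the theorem, and taking a union bound over the $O(H\cdot i_{\max})$ high-probability events produces the final $1-\delta$ guarantee. The main technical obstacle I anticipate is the interlocking nature of the inductive step: the covariance estimator at layer $h$ uses the planner at layers $<h$, which in turn depends on the discretization resolution $\resolution$ and on the covariance guarantees at those earlier layers, so the bookkeeping of how errors propagate---and in particular how $\nu_{\min}$ and the $\resolution$-net's size interact to control the LSVI-UCB deviation bound uniformly over an infinite policy class---will be the most delicate part of the argument.
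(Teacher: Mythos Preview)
Your proposal is correct and follows essentially the same inductive layer-by-layer structure as the paper's proof (covariance estimation $\to$ optimistic planner on the bonus reward $\to$ elliptical-potential termination $\to$ matrix concentration for data collection $\to$ reward-free planning). Two points deserve adjustment.

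First, the termination count should be $i_{\max}=\widetilde O(d/\nu_{\min}^4)$, not $\widetilde O(d/\nu_{\min}^2)$: a non-terminating iteration only guarantees $\EE_{\bar\pi_{h,i+1}}[\|\phi\|_{\Sigma_{h,i}^{-1}}]\gtrsim \nu_{\min}^2$, hence $\mathrm{Tr}(\Sigma_{h,i}^{-1}\Lambda^{\bar\pi_{h,i+1}})=\EE_{\bar\pi_{h,i+1}}[\|\phi\|^2_{\Sigma_{h,i}^{-1}}]\gtrsim \nu_{\min}^4$, and the elliptical potential lemma bounds the sum of these traces by $\widetilde O(d)$. Second, and more substantively, you slightly misplace the role of the reachability coefficient. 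The per-iteration progress in the termination argument is pure elliptical potential and does \emph{not} invoke reachability; your sentence ``each non-terminating iteration adds a $\hat\Lambda^{\pi_{h,i}}_h$ whose projection on the worst remaining direction is at least $\Omega(\nu_{\min}^2)$ by the definition of the reachability coefficient'' is unjustified, since $\bar\pi_{h,i+1}$ maximizes the aggregate bonus, not the projection onto any fixed eigendirection. The crucial use of $\nu_{\min}$ is instead in the step you compress under ``combined with the approximate-cover property'': from $\max_\pi \EE_\pi[\|\phi\|_{(2I+A)^{-1}}]\leq \nu_{\min}^2/2$ with $A=\sum_j \Lambda^{\pi_j}$, one argues that for every eigendirection $u$ of $A$ with eigenvalue $\sigma$, $\nu_{\min}^2\leq \max_\pi \EE_\pi[(\phi^\top u)^2]\leq (2+\sigma)\cdot \nu_{\min}^2/2$, forcing $\sigma\geq 2$. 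This minimum-eigenvalue bound (the paper's Lemma~\ref{lem:blow_up_with_large_data}) is precisely what makes amplification by $N$ effective---without it, a direction with $\sigma=0$ would remain uncovered regardless of $N$. Your intuition ``no policy can reach an uncovered direction'' is exactly right, but the formal step lives here, not in the termination count.
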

\paragraph{Proof Sketch}
Next, we briefly discuss the key steps of the proof. Since $\epsilon_0$ can be chosen to be very small, we will ignore the bias induced by $\epsilon_0$ when providing intuitions. 
Our proof is based on the induction condition below. We first assume it holds after $h-1$ deployments (which is true when $h=1$), and then we try to prove at the $h$-th deployment we can explore layer $h$ well enough so that the condition holds for $h$. 
\begin{restatable}{condition}{CondInductionError}[Induction Condition]\label{cond:induction_condition}
    Suppose after $h-1$ deployments, we have the following induction condition for some $\xi < 1/d$, which will be determined later:
    \begin{align} \textstyle
        \max_\pi \EE_\pi [\sum_{\tdh=1}^{\h-1}\sqrt{\phi(s_\tdh,a_\tdh)\trans\Sigma_\tdh^{-1}\phi(s_\tdh,a_\tdh)}] \leq \frac{h-1}{H}\xi. \label{eq:induction_condition}
    \end{align}
\end{restatable}
The l.h.s.~of Eq.\eqref{eq:induction_condition} measures the uncertainty in previous $h-1$ layers after exploration. As a result, with high probability, the following estimations will be accurate:
\begin{align}  \label{eq:proof_sketch_1}
    \|\hat\hatcov^{\pi_{h,i}}-\EE_{\pi_{h,i}}[\phi(s_h,a_h)\phi(s_h,a_h)\trans]\|_{\infty,\infty} \leq O(\xi),
\end{align}
where $\|\cdot\|_{\infty,\infty}$ denotes the entry-wise maximum norm. This directly implies that:
\begin{align*}
    \|\tilde{\Sigma}_{h,i+1}-\Sigma_{h,i+1}]\|_{\infty,\infty} \leq i\cdot O(\xi).
\end{align*}
where $\Sigma_{h,i+1}:=2I+\sum_{i'=1}^i\EE_{\pi_{h,i'}}[\phi(s_h,a_h)\phi(s_h,a_h)\trans]$ is the target value for $\tilde\Sigma_{h,i+1}$ to approximate. 
Besides, recall that in Algorithm \ref{alg:optQ_solution}, we use $\sqrt{\phi\trans\tilde\Sigma^{-1}_{h,i+1}\phi}$ as the reward function, and the induction condition also implies that:
\begin{align*}
    |V_{h,i+1}-\max_\pi \EE_\pi[\|\phi(s_h,a_h)\|_{\tilde\Sigma^{-1}_{h,i+1}}]| \leq &O(\xi).
\end{align*}
As a result, if $\xi$ and the resolution $\epsilon_0$ are small enough, $\bar\pi_{h,i+1}$ would gradually reduce the uncertainty and $V_{h,i+1}$ (also $\max_\pi \EE_\pi[\|\phi(s_h,a_h)\|_{\tilde\Sigma^{-1}_{h,i+1}}]$) will decrease. 
However, 
the bias is at the level $O(\xi)$, and therefore, no matter how small $\xi$ is, as long as $\xi > 0$, it is still possible that the policies in $\Pi_h$ do not cover all directions if some directions are very difficult to reach, and the error due to such a bias will be at the same level of the required accuracy in induction condition, i.e. $O(\xi)$. 
This is exactly where the ``reachability coefficient'' $\nu_{\min}$ definition helps. The introduction of $\nu_{\min}$ provides a threshold, and as long as $\xi$ is small enough so that the bias is lower than the threshold, each dimension will be reached with substantial probability when the breaking criterion in Line 9 is satisfied. As a result, by deploying $\mathrm{unif}(\Pi_h)$ and collecting a sufficiently large dataset, the induction condition will 
hold till layer $H$. Finally, combining the guarantee of Alg \ref{alg:DE_rl_layer_by_layer_reward_free_planning}, we complete the proof.

\section{Conclusion and Future Work}
In this paper, we propose a concrete theoretical formulation for DE-RL to fill the gap between existing RL literatures and real-world applications with deployment constraints. Based on our framework, we establish lower bounds for deployment complexity 
in linear MDPs,  and provide novel algorithms and techniques to achieve optimal deployment efficiency. 
Besides, our formulation is flexible and can serve as building blocks for other practically relevant settings related to DE-RL. 
We conclude the paper with two such examples,  defer a more detailed discussion to Appendix \ref{appx:discussion_on_Extended_DERL}, and leave the investigation to future work. 
\paragraph{Sample-Efficient DE-RL}
In our basic formulation in Definition \ref{def:deployment_efficient_linear_mdp}, we focus on minimizing the deployment complexity $K$ and put very mild constraints on the per-deployment sample complexity $N$. In practice, however, the latter is also an important consideration, and we may face additional constraints on how large $N$ can be, as they can be upper bounded by e.g. the number of customers or patients our system is serving. 

\paragraph{Safe DE-RL}
In real-world applications, safety is also an important criterion. The definition for safety criterion in Safe DE-RL is still an open problem, but we believe it is an interesting setting since it implies a trade-off between exploration and exploitation in deployment-efficient setting.




\section*{Acknowledgements}
JH's research activities on this work were completed by December 2021 during his internship at MSRA. NJ acknowledges funding support from ARL Cooperative Agreement W911NF-17-2-0196, NSF IIS-2112471, and Adobe Data Science Research Award.

\bibliography{references}
\bibliographystyle{iclr2022_conference}

\clearpage
\appendix
\section{Extended Related Work}\label{sec:extended_related_work}
\paragraph{Online RL} 
Online RL is a paradigm focusing on the challenge of strategic exploration. 
On the theoretical side, based on the ``Optimism in Face of Uncertainty''(OFU) principle or posterior sampling techniques, many provable algorithms have been developed for tabular MDPs \citep{jin2018qlearning, Azar2017MinimaxRB, zanette2019tighter, agrawal2017posterior,agrawal2021improved}, linear MDPs \citep{jin2019provably, agarwal2020pc}, general function approximation \citep{wang2020reinforcement, NIPS2013_41bfd20a}, or MDPs with structural assumptions \citep{pmlr-v70-jiang17c, du2019provably}. 
Moreover, there is another stream of work studying how to guide exploration by utilizing state occupancy \citep{hazan2019provably, zhang2021made}.
Beyond the learning in MDPs with pre-specified reward function, recently, \citet{jin2020rewardfree, wang2020rewardfree, zanette2020provably} provide algorithms for exploration in the scenarios where multiple reward functions are of interest.
On the practical side, there are empirical algorithms such as intrinsically-motivated exploration \citep{bellemare2016unifying, campero2020learning}, exploration with hand-crafted reward bonus (RND) \citep{burda2018exploration}, and other more sophisticated strategies \citep{ecoffet2019go}.
However, all of these exploration methods do not take deployment efficiency into consideration, and will fail to sufficiently explore the MDP and learn near-optimal policies in DE-RL setting where the number of deployments is very limited.

\paragraph{Offline RL}
Different from the online setting, where the agents are encouraged to explore rarely visited states to identify the optimal policy, 
the pure offline RL setting serves as a framework for utilizing historical data to learn a good policy without further interacting with the environment.
Therefore, the core problem of offline RL is the performance guarantee of the deployed policy, which motivated multiple importance-sampling based off-policy policy evaluation and optimization methods \citep{jiang2016doubly, liu2018breaking, uehara2020minimax, yang2020off, nachum2019algaedice, lee2021optidice}, and the ``Pessimism in Face of Uncertainty'' framework \citep{liu2020provably, kumar2020conservative, fujimoto2021minimalist, yu2020mopo,jin2021pessimism, xie2021bellman} in contrast with OFU in online exploration setting.
However, as suggested in \citet{matsushima2021deploymentefficient}, pure offline RL can be regarded as constraining the total number of deployments to be $1$.

\paragraph{Bridging Online and Offline RL; Trade-off between Pessimism and Optimism}
As pointed out by \citet{xie2021policy, matsushima2021deploymentefficient}, there is a clear gap between existing online and offline RL settings, and some efforts have been made towards bridging them. For example, \citet{nair2021awac, pertsch2020accelerating, bai2020provably} studied how to leverage pre-collected offline datasets to learn a good prior to accelerate the online learning process. \citet{moskovitz2021tactical} proposed a learning framework which can switch between optimism and pessimism by modeling the selection as a bandit problem. None of these works give provable guarantees in our deployment-efficiency setting. 

\paragraph{Conversion from Linear Regret in \citet{gao2021provably} to Sample Complexity}
\citet{gao2021provably} proposed an algorithm with the following guarantee: after interacting with the environments for $\tilde K$ times (we use $\tilde K$ to distinguish with $K$ in our setting), there exists a constant $c$ such that the algorithm's regret is
\begin{align*}
    \sum_{k=1}^{\tilde{K}} V^*(s_1) - V^{\pi_k}(s_1) = c\cdot \sqrt{d^3H^4\tilde{K}}\cdot\iota = c\cdot \sqrt{d^3H^3T}\cdot\iota
\end{align*}
where we denote $T=\tilde K H$, and use $\iota$ to refer to the log terms. Besides, in $\pi_1,...,\pi_K$ there are only $O(dH\log \tilde K)$ policy switching.

As discussed in Section 3.1 by \citet{jin2018qlearning}, such a result can be convert to a PAC guarantee that, by uniformly randomly select a policy $\pi$ from $\pi_1,...,\pi_K$, with probability at least 2/3, we should have:
\begin{align*}
    V^*(s_1)-V^\pi(s_1)=\tilde{O}(\sqrt{\frac{d^3H^5}{T}})=\tilde{O}(\sqrt{\frac{d^3H^4}{\tilde K}})
\end{align*}
In order to make sure the upper bound in the r.h.s. will be $\epsilon$, we need:
\begin{align*}
    \tilde K = \frac{d^3H^4}{\epsilon^2}
\end{align*}
and the required policy switching would be:
\begin{align*}
    O(dH\log\tilde K) = O(dH\log\frac{dH}{\epsilon})
\end{align*}
In contrast with our results in Section \ref{sec:DERL_deterministic_policy}, there is an additional logarithmic dependence on $d,H$ and $\epsilon$. Moreover, since their algorithm only deploys deterministic policies, their deployment complexity has to depend on $d$, which is much higher than our stochastic policy algorithms in Section \ref{sec:DERL_arbitrary_policy} when $d$ is large.

\paragraph{Investigation on Trade-off between Sample Complexity and Deployment Complexity}
\begin{table}[h!]
    \begin{center}
    \begin{tabular}{c| c | c |c|c} 
    \hline\hline    
    Methods & R-F? & Deployed Policy & \begin{tabular}[x]{@{}c@{}}\# Trajectories\end{tabular} &  \begin{tabular}[x]{@{}c@{}}Deployment \\Complexity\end{tabular}   \\
    \hline\hline
    \begin{tabular}[x]{@{}c@{}}LSVI-UCB\\ \citep{jin2019provably}\end{tabular} & $\times$ & Deterministic & \multicolumn{2}{c}{$\tilde{O}(\frac{d^3H^4}{\epsilon^2})$}
    \\[6pt]   
    \hline
    \begin{tabular}[x]{@{}c@{}}Reward-free LSVI-UCB \\ \citep{wang2020rewardfree}\end{tabular} & \checkmark & Deterministic &  \multicolumn{2}{c}{$\tilde{O}(\frac{d^3H^6}{\epsilon^2})$}
    \\[6pt]
    \hline
    \begin{tabular}[x]{@{}c@{}}FRANCIS \\ \citep{zanette2020provably}\end{tabular} & \checkmark & Deterministic &  \multicolumn{2}{c}{$\tilde{O}(\frac{d^3H^5}{\epsilon^2})$}
    \\[6pt]
    \hline
    \begin{tabular}[x]{@{}c@{}}    \citet{gao2021provably} \end{tabular} & $\times$ & Deterministic  & $\tilde{O}(\frac{d^3H^4}{\epsilon^2})$ & $O(dH\log\frac{dH}{\epsilon})$
    \\[6pt]
    \hline
    \begin{tabular}[x]{@{}c@{}}Q-type OLIVE \\ \citep{jiang2017contextual}\\
    \citep{jin2021bellman}\end{tabular} & $\times$ & Deterministic  & $\tilde{O}(\frac{d^3H^6}{\epsilon^2})$ & $O(dH\log\frac{1}{\epsilon})$
    \\[6pt]
    \hline
    \begin{tabular}[x]{@{}c@{}}Simplified MOFFLE \\ \citep{modi2021model}\end{tabular} & \checkmark & Stochastic  & $\tilde{O}(\frac{d^8H^7|\cA|^{13}}{\min(\epsilon^2\eta_{\min},\eta_{\min}^5)})$ & $\tilde{O}(\frac{Hd^3|\cA|^4}{\eta_{\min}^2})$
    \\[6pt]
    \hline
    Alg. \ref{alg:DERL_Deterministic_Policy_Algorithm} [\red{Ours}] & $\times$ & Deterministic & $\tilde{O}(\frac{d^4H^5}{\epsilon^2})$ & $O(dH)$ \\[6pt]
    \hline
    Alg. \ref{alg:DE_rl_layer_by_layer_reward_free_exploration} + \ref{alg:DE_rl_layer_by_layer_reward_free_planning}[\red{Ours}] & \checkmark & Deterministic & $\tilde{O}(\frac{d^4H^7}{\epsilon^2})$ & $O(dH)$ \\[6pt]
    \hline
    Alg. \ref{alg:DERL_with_arbitrary_policies}[\red{Ours}] & \checkmark & Stochastic & 
    $\tilde{O}(\frac{d^3H^4}{\epsilon^2 \nu^2_{\min}}+\frac{d^7H^2}{\nu^{14}_{\min}})$
    & $H$\\[6pt]
    \hline\hline    
    \end{tabular}
\end{center}
\caption{Comparison between our algorithms and online RL methods without considering deployment costraints in our setting defined in Def. \ref{def:deployment_efficient_linear_mdp}, where R-F is the short note for Reward-Free. The total number of trajectories cost by our methods is computed by $K\cdot N$. We omit log terms in $\tilde{O}$. For algorithm \citep{jin2019provably}, we report the sample complexity after the conversion from regret. For our deterministic policy algorithms, we report the asymptotic results when $c_K\rightarrow +\infty$, which can be achieved approximately when $c_K$ is a large constant (e.g. $c_K=100$).}\label{table:DE_SE_trade_off_comparison}
\end{table}

In Table \ref{table:DE_SE_trade_off_comparison}, we compare our algorithms and previous online RL works which did not consider deployment efficiency to shed light on the trade-off between sample and deployment complexities. 
Besides algorithms that are specialized to linear MDPs, we also include results such as \citet{zanette2020provably}, which studied a more general linear approximation setting and can be adapted to our setting. As stated in Def.~2 of \citet{zanette2020provably}, they also rely on some reachability assumption. To avoid ambiguity, we use $\tilde{\nu}_{\min}$ to refer to their reachability coefficient (as discussed in Appx \ref{sec:reachability_coefficient}, $\tilde{\nu}_{\min}$ is no larger than and can be much smaller than our $\nu_{\min}$). Because they also assume that $\epsilon\leq \tilde{O}(\tilde{\nu}_{\min}/\sqrt{d})$ (see Thm 4.1 in their paper), their results have an implicit dependence on 
$\tilde{\nu}^{-2}_{\min}$. 
In addition, by using the class of linear functions w.r.t.~$\phi$, Q-type OLIVE  \citep{jiang2017contextual,jin2021bellman} has $\tilde{O}(\frac{d^3H^6}{\epsilon^2})$ sample complexity and $O(dH\log(1/\epsilon))$ deployment complexity. Its deployment complexity is close to our deterministic algorithm, but with additional dependence on $\epsilon$. We also want to highlight that OLIVE is known to be computationally intractable \citep{dann2018oracle}, while our algorithms are computationally efficient. With the given feature $\phi$ in linear MDPs and additional reachability assumption (not comparable to us), we can use a simplified version of MOFFLE \citep{modi2021model} by skipping their LearnRep subroutine. Though this version of MOFFLE is computationally efficient and its deployment complexity does not depend on $\epsilon$, it has much worse sample complexity ($\eta_{\min}$ is their reachability coefficient) and deployment complexity. On the other hand, PCID \citep{du2019provably} and HOMER \citep{misra2020kinematic} achieve $H$ deployment complexity in block MDPs. However, block MDPs are more restricted than linear MDPs and these algorithms have worse sample and computational complexities.

It is worth to note that all our algorithms achieve the optimal dependence on $\epsilon$ (i.e., $\epsilon^{-2}$) in sample complexity. For algorithms that deploy deterministic policies, we can see that our algorithm has higher dependence on $d$ and $H$ in the sample complexity in both reward-known and reward-free setting, while our deployment complexity is much lower.
Our stochastic policy algorithm (last row) is naturally a reward-free algorithm. Comparing with \citet{wang2020rewardfree} and \citet{zanette2020provably}, our sample complexity has higher dependence on $d$ and the reachability coefficient $\nu_{\min}$, while our algorithm achieves the optimal deployment complexity and better dependence on $H$.

\section{On the Lower Bound of Deployment-Efficient Reinforcement Learning}\label{appx:lower_bound}
\subsection{A Hard MDP Instance Template}
\begin{figure}[h]
    \begin{center}
    \caption{Lower Bound Instance Template. The states in each layer can be divided into three groups. {\color{green} \textbf{Group 1}}: absorbing states, marked with green; {\color{red} \textbf{Group 2}}: core state, marked with red; {\color{blue}\textbf{Group 3}}: normal states, marked with blue.}\label{fig:lower_bound_instance}
    \includegraphics[width=1.0\textwidth]{./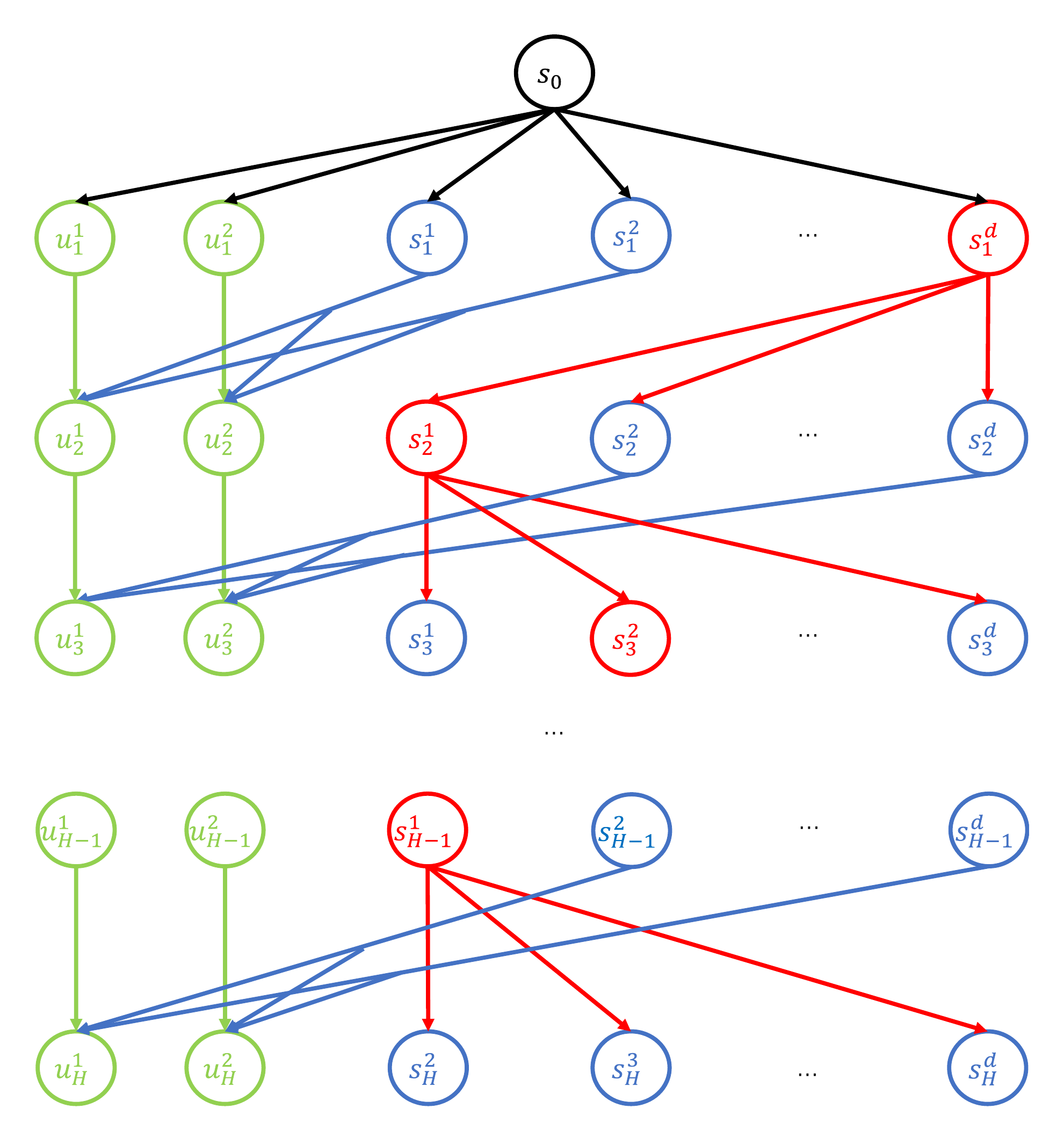}
    \end{center}
\end{figure}
In this sub-section, we first introduce a hard MDP template that is used in further proofs.
As shown in Figure \ref{fig:lower_bound_instance}, we construct a tabular MDP (which is a special case of linear MDP) where the horizon length is $H+1$ and in each layer except the first one, there are $d+2$ states and $2d+1$ different state action pairs. The initial state is fixed as $s_0$ and there are $d+2$ different actions. It is easy to see that we can represent the MDP by linear features with at most $2d+1$ dimensions, and construct reward and transition function satisfying Assumption \ref{assump:linear_MDP}.
As a result, it is a linear MDP with dimension $2d+1$ and horizon length $H+1$.
Since there is only a constant-level blow up of dimension, the dimension of these MDPs is still $\Theta(d)$, and we will directly use $d$ instead of $\Theta(d)$ in the rest of the proof.
The states in each layer $h \geq 1$ can be divided into three groups and we introduce them one-by-one in the following.

\paragraph{Group 1: Absorbing States (Green Color)}
The first group $G_h^1=\{u_h^1,u_h^2\}$ consists of two absorbing states $u_h^1$ and $u_h^2$, which can only take one action at each state $\bar a_h^1$ and $\bar a_h^2$ and transit to $u_{h+1}^1$ and $u_{h+1}^2$ with probability 1, respectively. The reward function is defined as $r_h(u_h^1, \bar a_h^1)=r_h(u_h^2, \bar a_h^2) = 0.5$ for all $h\leq H-2$ and $r_{H}(u_{H}^1, \bar a_{H}^1)=0.0$, $r_{H}(u_{H}^2, \bar a_{H}^2) = 1.0$.

\paragraph{Group 2: Core States (Red Color)}
The second group $G_h^2=\{s_h^*\}$ only contains one state, which we call it core state and denote it as $s_h^*$. For example, in Figure \ref{fig:lower_bound_instance}, we have $s_1^* = s_1^d, s_2^*=s_2^1$ and $s_3^*=s_3^2$. In the ``core state'', the agent can take $d$ actions $\tilde a^1_h,\tilde a^2_h,\ldots,\tilde a^d_h$ and transit deterministically to $s_{h+1}^1,s_{h+1}^2,\ldots,s_{h+1}^d$. Besides, the reward function is $r_h(s_h^*,\tilde a_h^i)=0.5$ for all $i\in[d]$.

\paragraph{Group 3: Normal States (Blue Color)}
The third group $G_h^3=\{s_h^i|i\in[d], s_h^i \neq s_h^*\}$ is what we call ``normal states'', and each state $s_h^i \in G_h^3$ can only take one action $\tilde a_h^i$ and will transit randomly to one of the absorbing states in the next layer, i.e. $G_{h+1}^1$. Besides, the reward function is $r_h(s_h^i,\tilde{a}_h^i)=0.5$ for arbitrary $s_h^i\in G_h^3$, and the transition function is $P(u_{h+1}^1|s_h^i,\tilde{a}_h^i)=P(u_{h+1}^1|s_h^i,\tilde{a}_h^i)=0.5$, except for a state action pair $s^\#:=s^{i^\#}_{h^\#}, a^\#:= a^{i^\#}_{h^\#}$ at layer $h^\#\in[H-1]$ with index $i^\#$, such that $s^\# \not\in G^2_{h^\#}$ and $P(u_{h+1}^1|s^{i^\#}_{h^\#},a^{i^\#}_{h^\#})=0.5-\epsilon$ and $P(u_{h+1}^2|s^{i^\#}_{h^\#},a^{i^\#}_{h^\#})=0.5 + \epsilon$. In the following, we will call $s^\#, a^\#$ the ``optimal state'' and ``optimal action'' in this MDP. Note that the ``optimal state'' can not be the core state.

We will use $M(h^\#, i^\#, I_{core}=\{i_1,i_2,...,i_H\})$ with $i_{h^\#} \neq i^\#$ to denote the MDP whose optimal state is at layer $h^\#$ and indexed by $i^\#$, and the core states in each layer are $s_h^{i_1}, s_h^{i_2},...,s_H^{i_H}$. 
As we can see, the only optimal policy should be the one which can generate the following sequence of states before transiting to absorb states at layer $h^\#+1$:
\begin{align*}
    s_0, s_1^{i_1}, s_2^{i_2}, ..., s_{h^\#-1}^{i_{h^\#-1}}, s_{h^\#}^{i^\#}
\end{align*}
and the optimal value function would be $\frac{1}{2}H + \epsilon$. 
In order to achieve $\epsilon$-optimal policy, the algorithm should identify $s^{i^\#}_{h^\#}$, which is the main origin of the difficulty in exploration.

\begin{remark}[Markov v.s. Non-Markov Policies]
    As we can see, the core states in each layer are the only states with \#actions $>$ 1, and for each core state, there exists and only exists one deterministic path (a sequence of states, actions and rewards) from initial state to it, which implies that for arbitrary non-Markov policy, there exists an equivalent Markov policy. Therefore, in the rest of the proofs in this section, we only focus on Markov policies.
\end{remark}

\subsection{Lower bound for algorithms which can deploy deterministic policies only}

In the following, we will state the formal version of the lower bound theorem for deterministic policy setting and its proof. The intuition of the proof is that we can construct a hard instance, which can be regarded as a $\Omega(dH)$ multi-arm bandit problem, and we will show that in expectation the algorithm need to ``pull $\Omega(dH)$ arms" before identifying the optimal one.


\begin{theorem}[Lower bound for number of deployments in deterministic policy setting]
    For the linear MDP problem with dimension $d$ and horizon $H$, given arbitrary algorithm $\psi$ ($\psi$ can be deterministic or stochastic), which can only deploy a deterministic policy but can collect arbitrary number of samples in each deployment, there exists a MDP problem where the optimal deterministic policy $\pi^*$ is $\epsilon$ better than all the other deterministic policies, but the estimate policy $\hat\pi$ (which is also a deterministic policy) of the best policy output by $\psi$ after $K$ deployments must have $P(\pi^* \neq \hat\pi) \geq 1/10$ unless the number of deployments $K>(d-1)(H-1)/2=\Omega(dH)$.
\end{theorem}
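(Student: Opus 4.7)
I will use Yao's minimax principle on the family of hard MDPs defined in the construction above. Fix an arbitrary core-state sequence, say $I_{\mathrm{core}} = (1, 1, \ldots, 1)$ (we may even reveal it to the algorithm, which only strengthens the lower bound), and let
\[
\mathcal{P} = \{(h, i) : h \in [H-1],\; i \in \{2, \ldots, d\}\},
\]
so $|\mathcal{P}| = (d-1)(H-1)$. Sample the location of the special state $(h^\#, i^\#) \sim \mathrm{Unif}(\mathcal{P})$. Yao's principle reduces the lower bound for arbitrary (possibly stochastic) $\psi$ to a lower bound on the Bayes error of deterministic $\psi$ against this prior, so I will bound $\Pr_{\omega,(h^\#,i^\#)}[\hat\pi=\pi^*]$ for a deterministic $\psi$ and argue that it falls below $9/10$ as soon as $K \leq (d-1)(H-1)/2$.

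\textbf{Key structural observation.} In any single deployment, a deterministic policy $\pi^{(k)}$ prescribes a fixed ``attempted index'' $j_h^{(k)}$ at every layer. Transitions out of $s_0$, out of every core state, and out of every absorbing state are deterministic; the only stochasticity lives at the non-core states in Group~3, whose unique action immediately shunts the trajectory into the absorbing chain. Hence every trajectory in deployment $k$ traverses core states $s_1^{i_1},\ldots,s_{h_0-1}^{i_{h_0-1}}$ up to the first layer $h_0$ with $j_{h_0}^{(k)} \neq i_{h_0}$, visits the single non-core state $s_{h_0}^{j_{h_0}^{(k)}}$, and is then swallowed by the absorbing chain. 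In particular, each deployment ``tests'' at most one candidate position in $\mathcal{P}$ in the sense that at most one biased coin is ever sampled.

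\textbf{Indistinguishability and counting.} Couple the Bernoulli transitions at all non-core states across the $|\mathcal{P}|$ MDPs through a shared uniform-$[0,1]$ source $\omega$, and define the ``null run'' as running $\psi$ on a hypothetical MDP in which every non-core state is unbiased ($P(u^2)=1/2$). The null run is a deterministic function of $\omega$ and produces a tested set $T^*(\omega) \subseteq \mathcal{P}$ with $|T^*(\omega)| \leq K$ together with a final guess $\hat p^*(\omega)\in\mathcal{P}$. An induction on deployments shows that for every $\omega$ and every $p \notin T^*(\omega)$, the run of $\psi$ on $\mathrm{MDP}(p)$ under the coupling coincides pointwise with the null run, so $\hat\pi$ encodes exactly $\hat p^*(\omega)$. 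Therefore
\[
\Pr[\hat\pi = \pi^*] \;\leq\; \Pr[(h^\#,i^\#) \in T^*(\omega)] + \Pr[(h^\#,i^\#) = \hat p^*(\omega)] \;\leq\; \frac{K}{|\mathcal{P}|} + \frac{1}{|\mathcal{P}|} \;=\; \frac{K+1}{(d-1)(H-1)},
\]
where the first term uses $|T^*(\omega)|\leq K$ and the second uses that $\hat p^*(\omega)$ is a single position for each $\omega$ while $(h^\#,i^\#)$ is uniform over $\mathcal{P}$. Plugging $K \leq (d-1)(H-1)/2$ yields $\Pr[\hat\pi=\pi^*] \leq \tfrac12 + \tfrac{1}{|\mathcal{P}|} \leq \tfrac12 + \tfrac13 = \tfrac56 < \tfrac9{10}$ whenever $|\mathcal{P}| \geq 3$, which is guaranteed by $d\geq 4$ and $H \geq 2$. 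Hence $\Pr[\hat\pi\neq\pi^*]\geq 1/10$, proving the claim for this prior, and by Yao for the worst-case MDP.

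\textbf{Main obstacle.} The delicate step is the coupling/indistinguishability claim: because $\psi$ is adaptive, I need to verify inductively that whenever the true optimal position $p$ is not tested along the null run, the actual run on $\mathrm{MDP}(p)$ really does match the null run on a deployment-by-deployment basis (same policy, same observed trajectories, hence same next policy). Giving $\psi$ an unbounded per-deployment sample budget $N$ makes the single-position hypothesis test $\mathrm{Ber}(\tfrac12)$ vs.\ $\mathrm{Ber}(\tfrac12+\epsilon)$ noise-free, which cleanly eliminates any $\epsilon$-dependent slack and only strengthens the adversary. Deployments that visit zero non-core states (e.g., the policy follows the core path all the way, or steps directly to $u_1^1/u_1^2$ from $s_0$) simply contribute the empty set to $T^*(\omega)$ and are absorbed into the bound $|T^*(\omega)|\leq K$.
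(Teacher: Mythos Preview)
Your proposal is correct and follows essentially the same approach as the paper: both fix the core-state sequence, place a uniform prior over the $(d-1)(H-1)$ possible optimal positions, invoke Yao to reduce to deterministic algorithms, and then use the same indistinguishability-with-the-null-MDP argument together with the observation that each deterministic deployment tests at most one position. The only cosmetic difference is that you phrase the indistinguishability step via an explicit coupling over shared randomness $\omega$, whereas the paper argues it directly at the level of trajectory probabilities (showing $P_{M_{\pi_i},\psi}(E_{k,\pi_i}^\complement)=P_{M_0,\psi}(E_{k,\pi_i}^\complement)$ by induction); the resulting bounds $\tfrac{K+1}{|\mathcal P|}$ versus $\tfrac{2}{|\mathcal M|}+\tfrac12$ are equivalent up to the inclusion of $M_0$ in the averaging.
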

\begin{proof}
First of all, we introduce how we construct hard instances.
\paragraph{Construction of Hard Instances}
We consider a set of MDPs $\bar\cM$, where for each MDP in that set, the core states (red color) in each layer are fixed to be $s_1^1, s_2^1,...,s_{H-1}^1$ and the only optimal states which has different probability to transit to absorbing states are randomly selected from $(d-1)(H-1)$ normal states (blue color). Easy to see that, $|\bar\cM|=(d-1)(H-1)$.

Because of the different position of optimal states, the optimal policies for each MDP in $\bar\cM$ (i.e. the policy which can transit from $s_0$ to optimal state) is different. We will use $\pi_1,\pi_2,...,\pi_{(d-1)(H-1)}$ to refer to those different policies and use $M_{\pi_i}$ with $1\leq i\leq (d-1)(H-1)$ to denote the MDP in $\bar{\cM}$ where $\pi_i$ is the optimal policy. For convenience, we will use $M_{0}$ to denote the MDP where all the normal states have equal probability to transit to different absorbing states, i.e., all states are optimal states.
Based on the introduction above, we define $\cM:=\bar \cM \bigcup \{M_0\}$ and use the MDPs in $\cM$ as hard instances.

\paragraph{Lower Bound for Average Failure Probability}
Next, we try to lower bound the average failure probability, which works as a lower bound for the maximal failure probability among MDPs in $\cM$. Since any randomized algorithm is just a distribution over deterministic
ones, and it therefore suffices to only consider deterministic algorithms $\psi$ \citep{krishnamurthy2016pac}.

Given an arbitrary algorithm $\psi$ and $k\in[K]$, we use $\psi(k)$ to denote the policy taken by $\psi$ at the $k$-th deployment (which is a random variable). Besides, we denote $\psi(K+1)$ as the output policy.

For arbitrary $k\in[K]$, we use $P_{M_{\pi_i}, \psi}(\psi(k)=\pi_j)$ with $1\leq i,j\leq (d-1)(H-1)$ to denote the probability that $\psi$ takes policy $\pi_j$ at deployment $k$ when running $\psi$ on $M_{\pi_i}$, and use $P_{M_{\pi_i}, \psi}(\psi(K+1)=\pi_i)$ to denote the probability that the algorithm $\psi$ returns policy $\pi_i$ as optimal arm after running with $K$ deployments under MDP $M_{\pi_i}$. We are interested in providing an upper bound for the expected success rate:
\begin{align*}
    P_{\psi, M\sim \cM}(\text{success}) :=&  \frac{1}{|\cM|} P(\text{success~in}~M_0)+\frac{1}{|\cM|}\sum_{i=1}^{|\bar\cM|} P_{M_{\pi_i}, \psi}(\psi(K+1)= \pi_i) \\
    =&\frac{1}{|\cM|}+\frac{1}{|\cM|}\sum_{i=1}^{|\bar\cM|} P_{M_{\pi_i}, \psi}(\psi(K+1)=\pi_i),
\end{align*}
where we assume that all the policies in $M_0$ are optimal policies.

In the following, we use $E_{k,\pi_i}$ to denote the event that the policy $\pi_i$ has been deployed at least once in the first $k$ deployments and $P_{M_{\pi_j}, \psi}(\cdot)$ to denote the probability of an event when running algorithm $\psi$ under MDP $M_{\pi_j}$. 

Next, we prove that, for arbitrary $M_{\pi_i}$,
\begin{align}
    P_{M_{\pi_i}, \psi}(E_{k,\pi_i}^\complement)=P_{M_{0}, \psi}(E_{ k,\pi_i}^\complement) \quad \forall k\in[K+1]\label{eq:induction_same_behavior}.
\end{align}

First of all, it holds for $k=1$, because at the beginning $\psi$ has't observe any data, and all its possible behavior should be the same in both $M_0$ and $M_i$, and therefore $P_{M_{\pi_i}, \psi}(E_{1,\pi_i}^\complement)=P_{M_{0}, \psi}(E_{1,\pi_i}^\complement)$. Next, we do induction. Suppose we already know it holds for $1,2,...,k$, then consider the case for $k+1$. Because $\psi$ behave the same if the pre-collected episodes are the same, which is the only information it will use for decision, we should have:
\begin{align*}
    P_{M_{\pi_i}, \psi}(\psi(k+1)=\pi_i\cap E_{k,\pi_i}^\complement)=&\sum_{\tau \in \psi(k+1)=\pi_i\cap E_{k,\pi_i}^\complement}P_{M_{\pi_i}, \psi}(\tau)\\
    =&\sum_{\tau \in \psi(k+1)=\pi_i\cap E_{k,\pi_i}^\complement}P_{M_0, \psi}(\tau)\\
    =&P_{M_{0}, \psi}(\psi(k+1)=\pi_i \cap E_{k,\pi_i}^\complement).\numberthis\label{eq:induction_same_policy_choice}
\end{align*}
The second equality is due to each trajectory $\tau\in \psi(k+1)=\pi_i\cap E_{k,\pi_i}^\complement$ has the same probability under $M_0$ and $M_i$ by the construction. Notice that in this induction step, we only consider the trajectory with first $(k+1)N$ episodes because define the whole sample space and event only based on the first $(k+1)N$ episodes.

This implies that,
\begin{align*}
    P_{M_{\pi_i}, \psi}(E_{k+1,\pi_i}^\complement)=&P_{M_{\pi_i}, \psi}(E_{k,\pi_i}^\complement)-P_{M_{\pi_i}, \psi}(\psi(k+1)=\pi_i\cap E_{k,\pi_i}^\complement)\\
    =&P_{M_{0}, \psi}(E_{k,\pi_i}^\complement)-P_{M_{0}, \psi}(\psi(k+1)=\pi_i\cap E_{k,\pi_i}^\complement)\\
    =&P_{M_{0}, \psi}(E_{k+1,\pi_i}^\complement).
\end{align*}

Now we are ready to bound the failure rate. Suppose $K < (d-1)(H-1)/2 < |\cM|/2$, we have:
\begin{align*}
    &\frac{1}{|\cM|}\sum_{i=1}^{|\bar\cM|} \left(P_{M_{\pi_i}, \psi}(\psi(K+1)=\pi_i) - P_{M_{0}, \psi}(\psi(K+1)=\pi_i)\right)\\
    =& \frac{1}{|\cM|}\sum_{i=1}^{|\bar\cM|} \Big(P_{M_{\pi_i}, \psi}(\psi(K+1)=\pi_i\cap E_{K+1,\pi_i})- P_{M_{0}, \psi}(\psi(K+1)=\pi_i\cap E_{K+1,\pi_i})\Big)\\
    &+\frac{1}{|\cM|}\sum_{i=1}^{|\bar\cM|}\Big(P_{M_{\pi_i}, \psi}(\psi(K+1)=\pi_i\cap E_{K+1,\pi_i}^\complement)  - P_{M_{0}, \psi}(\psi(K+1)=\pi_i\cap E_{K+1,\pi_i}^\complement)\Big)\\
    =&\frac{1}{|\cM|}\sum_{i=1}^{|\bar\cM|}P_{M_{\pi_i}, \psi}(E_{K+1,\pi_i})\Big(P_{M_{\pi_i}, \psi}(\psi(K+1)=\pi_i|E_{K+1,\pi_i})  - P_{M_{0}, \psi}(\psi(K+1)=\pi_i|E_{K+1,\pi_i})\Big) \tag{Eq.\eqref{eq:induction_same_policy_choice} and $P(A\cap B)=P(B)P(A|B)$} \\
    \leq& \frac{1}{|\cM|}\sum_{i=1}^{|\bar\cM|} P_{M_{\pi_i}, \psi}(E_{K+1,\pi_i}) \tag{$P_{M_{\pi_i}, \psi}(\psi(K+1)=\pi_i|E_{K+1,\pi_i})- P_{M_{0}, \psi}(\psi(K+1)=\pi_i|E_{K+1,\pi_i })\leq 1$}\\
    =& \frac{1}{|\cM|}\sum_{i=1}^{|\bar\cM|} P_{M_{0}, \psi}(E_{K+1,\pi_i}) \tag{ Eq.~\eqref{eq:induction_same_behavior}}\\
    \leq& \frac{K}{|\cM|}
\end{align*}
where the last step is because:
\begin{align*}
    &\frac{1}{|\cM|}\sum_{i=1}^{|\bar\cM|} P_{M_{0}, \psi}(E_{K+1,\pi_i})\\
    =& \frac{1}{|\cM|}\sum_{i=1}^{|\bar\cM|} \EE_{M_0,\psi}[\mathbf{1}\{\pi_i \text{ is selected}\}]\\
    \leq& \frac{1}{|\cM|}\sum_{i=1}^{|\bar\cM|} \EE_{M_0,\psi}[\sum_{k=1}^{K+1} \mathbf{1}\{\pi_i \text{ is selected at deployment } k\}]\\
    =& \frac{1}{|\cM|}\EE_{M_0,\psi}[\sum_{k=1}^{K+1} \sum_{i=1}^{|\bar\cM|}\mathbf{1}\{\pi_i \text{ is selected at deployment } k\}]\\
    =& \frac{K+1}{|\cM|} \tag{$\psi$ deploy deterministic policy each time}\\
    \leq& \frac{1}{2} \tag{Deployment time $K<|\cM|/2$}
\end{align*}
As a result,
\begin{align*}
    \frac{1}{|\cM|}+\frac{1}{|\cM|}\sum_{i=1}^{|\bar\cM|} P_{M_{\pi_i}, \psi}(\psi(K+1)=\pi_i) \leq \frac{1}{|\cM|}+\frac{1}{|\cM|}\sum_{i=1}^{|\bar\cM|} P_{M_{0}, \psi}(\psi(K+1)=\pi_i) + \frac{1}{2} \leq \frac{2}{|\cM|}+\frac{1}{2}.
\end{align*}
As long as $d, H \geq 3$, we have $|\cM|=(d-1)(H-1)+1 \geq 5$ the failure rate will be higher than $1/10$, which finishes the proof.
\end{proof}


\subsection{Proof for Lower Bound in Arbitrary Setting}


In the following, we provide a formal statement of the lower bound theorem for the arbitrary policy setting and its proof.

\begin{theorem}[Lower bound for number of deployments in arbitrary setting]
For the linear MDP problem with given dimension $d\ge 2$ and horizon $H\ge 3$, $N={\rm poly}(d, H, \frac{1}{\epsilon}, \log \frac{1}{\delta})$, and arbitrary given  algorithm $\psi$. Unless the number of deployments $K> \frac{H-2}{2 \lceil \log_d NH\rceil} = \Omega(H/\log_d (NH)) =\tilde \Omega(H)$, for any $\epsilon$, there exists an MDP such that the output policy is not $\epsilon$-optimal with probability at least $\frac{1}{2e}$. Here $\psi$ can be deterministic or stochastic. The algorithm can deploy arbitrary policy but can only collect $N={\rm poly}(d, H, \frac{1}{\epsilon}, \log \frac{1}{\delta})$ samples in each deployment.
\end{theorem}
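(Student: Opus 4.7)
My plan is to extend the hard-instance template from the deterministic-policy proof by simultaneously randomizing the core-state path $\sigma=(i_1^\star,\ldots,i_{H-1}^\star)\in[d]^{H-1}$ and the optimal-state location $(h^\#,i^\#)$ uniformly (subject to $i^\#\ne i_{h^\#}^\star$), and to scale the transition bias at $s^\#$ to $2\epsilon$ so that any $\epsilon$-optimal policy must actually visit $s^\#$. The crucial structural fact is that every state at layer $h\ge 1$ is reachable from $s_0$ only along the unique deterministic path that chooses $\tilde a_{h'}^{i_{h'}^\star}$ at each preceding core state, so without knowing the prefix $(i_1^\star,\ldots,i_{h-1}^\star)$ the agent cannot deliberately visit any particular state at layer $h$.

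The heart of the proof is an \emph{exploration frontier} induction. After deployment $k$, let $\ell_k$ be the largest $\ell$ such that at least one trajectory in the first $k$ deployments correctly followed the core path through layer $\ell$. I will show by induction on $k$ that $\Pr[\ell_k > k\,\lceil\log_d(NH)\rceil]\le k/H$ over the joint randomness of the algorithm and $\sigma$. The inductive step couples the algorithm's execution with an execution on the ``null'' MDP $M_0$ (all normal transitions symmetric): conditional on $\{\ell_{k-1}=\ell\}$, the posterior of the suffix $(i_{\ell+1}^\star,\ldots,i_{H-1}^\star)$ is uniform and independent of everything the algorithm has observed, since the observed rewards and transitions along incorrect prefixes are distributionally identical in every instance sharing the first $\ell$ entries of $\sigma$. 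Hence whatever stochastic, history-dependent policy the algorithm deploys next produces $N$ trajectories whose action sequences over the unknown layers are independent of the true suffix, and each trajectory correctly extends the prefix by $j$ additional layers with probability at most $d^{-j}$. A union bound over the $N$ trajectories gives $Nd^{-j}$, which is below $1/H$ once $j>\lceil\log_d(NH)\rceil$, closing the induction.

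To finish, set $K^\star=\lfloor(H-2)/(2\lceil\log_d(NH)\rceil)\rfloor$. If $K\le K^\star$, then with probability at least $1-K/H\ge 1/2$ the frontier satisfies $\ell_K\le (H-2)/2$. Conditional on this event, a uniformly random $h^\#\in[H-2]$ lies beyond $\ell_K$ with probability at least $1/2$, and on that subevent the algorithm has no information whatsoever about the identity of $(h^\#,i^\#)$; the chance that its single returned policy happens to visit the correct normal state at layer $h^\#$ is at most $1/(H-1-\ell_K)(d-1)$, which is negligible for moderate $H,d$. Multiplying the factors $\ge(1/2)(1/2)(1-o(1))\ge 1/(2e)$ and invoking the usual ``max $\ge$ average'' step to pick one concrete $M_\sigma$ gives the claimed failure probability, hence $K=\Omega(H/\log_d(NH))=\tilde\Omega(H)$.

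The hardest step will be formalizing the coupling claim used inside the induction, because the $N$ trajectories within a deployment share an adaptively chosen policy and are therefore not mutually independent. The clean way is a chain-rule total-variation / KL argument: define a joint simulation on $M_\sigma$ and on $M_0$ driven by shared algorithmic randomness, and show that until the first trajectory extends the known prefix by a \emph{new} correct step, the two executions are identically distributed. Once this is in place the balls-in-bins bound $Nd^{-j}$ for ``advance by $j$ in one deployment'' is routine, and the aggregation across $K$ deployments via a union bound produces the $\log_d(NH)$ (rather than $\log_d N$) rate, matching the theorem statement.
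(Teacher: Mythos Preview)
Your frontier-induction plan is essentially the paper's approach: both bound the per-deployment advance of an ``information frontier'' along the random core path by $\lceil\log_d(NH)\rceil$ layers, then chain across $K$ deployments. The paper packages this as two claims yielding a product bound $\bigl(1-N/d^{L-1}\bigr)^K\ge 1/e$; your union bound $\Pr[\ell_K>K\lceil\log_d(NH)\rceil]\le K/H$ is an equivalent route up to constants.

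The substantive difference, and the gap in your proposal, is the randomization of $h^\#$. The paper \emph{fixes} $h^\#=H-1$, randomizing only the core path $(i_1^\star,\ldots,i_{H-2}^\star)$ and $i^\#\in[d]$. Placing $s^\#$ at the last layer guarantees that on the small-frontier event the algorithm has never visited $s^\#$, so the coupling with $M_0$ is exact throughout and the posterior over $i^\#$ is genuinely uniform; the paper then shows $\Pr[F\cap\{\text{output chooses }s^\#\}]=\tfrac{1}{d}\Pr[F]$, giving failure probability at least $\tfrac{d-1}{d}\cdot\tfrac{1}{e}\ge\tfrac{1}{2e}$. In your construction, an instance with small $h^\#$ lets the algorithm visit $s^\#$ and observe biased transitions well before the frontier stalls, so $\ell_K$ and $h^\#$ are \emph{dependent} random variables; your step ``conditional on this event, a uniformly random $h^\#$ lies beyond $\ell_K$ with probability at least $1/2$'' is therefore not justified as written. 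Relatedly, your justification of the inductive coupling (``observed rewards and transitions along incorrect prefixes are distributionally identical in every instance sharing the first $\ell$ entries of $\sigma$'') overlooks that instances also differ in $(h^\#,i^\#)$; the conclusion that the \emph{suffix of $\sigma$} remains uniform is still correct, but the stated reason is incomplete.

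The simplest repair is to adopt the paper's choice $h^\#=H-1$: your frontier induction then goes through verbatim (indeed more cleanly, since hitting $s^\#$ would require the frontier to reach $H-2$, which you have ruled out), and the final step reduces to a clean $1/d$ bound over the uniform $i^\#$ rather than the joint $(h^\#,i^\#)$ argument you sketch.
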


\begin{proof}
Since any randomized algorithm is just a distribution over deterministic ones, it suffices to only consider deterministic algorithms $\psi$ in the following proof \citep{krishnamurthy2016pac}. The crucial part here is notice that a deployment means we have a fixed distribution (occupancy) over the state space and such distribution only depends on the prior information.

\paragraph{Construction of Hard Instances}
We have $d^{H-2}\times d$ instances by enumerating the location of core state from level 1 to $H-2$ and the optimal normal state at level $H-1$. We assign $s_{H-1}^{(i+1)\% d}$ as the core state at level $H-1$ if $s_{H-1}^i$ is the optimal state. Notice that for this hard instance class, we only consider the case that the optimal state is in level $H-1$. We use $\cM$ to denote this hard instance class. 

We make a few claims and later prove these claims and the theorem. We will use the notation $E_i(j)$ to denote the event that at least one state at level $j$ is reached by the $i$-th deployment. Also we notice that in all the discussion an event is just a set of trajectories. For all related discussion, the state at level $L$ does not include the state in the absorbing chain. In addition, we will use $P_{\cM,\psi}$ to denote the distribution of trajectories when executing algorithm $\psi$ and uniformly taking an instance from the hard instance class. 

\begin{claim}\label{claim:claim_1}
Assume $L\le H-2$. Then for any deterministic algorithm $\psi$, we have
$$P_{\cM,\psi}(E_1(L))\le \frac{N}{d^{L-1}}.$$
\end{claim}

\begin{claim}\label{claim:claim_2}
Assume $L+L'\le H-2$. We have that for any deterministic algorithm $\psi$,
$$P_{\cM,\psi}(E_k^\complement(L'+L))\ge(1-\frac{N}{d^{L-1}})P(E^\complement_{k-1}(L')).$$
\end{claim}


\paragraph{Proof of Claim \ref{claim:claim_1}}
By the nature of the deterministic algorithm, we know that for any deterministic algorithm $\psi$, the deployment is the same at the first time for all instances. The reason is that the agent hasn't observed anything, so the deployed policy has to be the same.

Let $p(i,j,h)$ denote the probability of the first deployment policy to choose action $j$ at node $i$ at level $h$ under the first deployment policy. We know that $p(i,j,h)$ for $\psi$ is the same under all instances.

Note that there is a one to one correspondence between an MDP in the hard instance class and the specified locations of core states in the layer $1,\ldots,H-2$ and the optimal state at level $H-1$. Therefore, we can use $(i_1,\ldots, i_{H-2},s_{H-1})$ to denote any instance in the hard instance class, where $i_1,\ldots,i_{H-2}$ refers to the location of the core states and $s_{H-1}$ refers to the location of the optimal state. From the construction, we know that for instance $(i_1,\ldots, i_{H-2},s_{H-1})$, to arrive at level $s_L$ at level $L$, the path as to be $s_0,i_1,\ldots,i_{L-1},s_L$. Therefore the probability of a trajectory sampled from $\psi$ to reach state $s_L$ is $$p(s_0,i_1,0)p(i_1,i_2,1)\ldots p(i_{L-2},i_{L-1},L-2)p(i_{L-1},s_L,L-1).$$
Here we use $p(i_{L-1},s_L,L-1)$ denotes the probability of taking action at $i_{L-1}$ to transit to $s_L$ and similarly for others.
In the deployment, $\psi$ draws $N$ episodes, so the probability of executing $\psi$ to reach any state $s_L$ at level $L$ during the first deployment is no more than $N p(s_0,i_1,0)p(i_1,i_2,1)\ldots p(i_{L-2},i_{L-1},L-2)p(i_{L-1},s_L,L-1)$.

Calculating the sum over $i_1,\ldots,i_{L-1}$ and $s_L$ gives us
\begin{align}
&\sum_{i_1,i_2,\ldots,i_{L-1},s_L} Np(s_0,i_1,0)p(i_1,i_2,1)\ldots p(i_{L-2},i_{L-1},L-2)p(i_{L-1},s_L,L-1)\notag\\
=&N \sum_{i_1}p(s_0,i_1,0)\sum_{i_2}p(i_1,i_2,1)\ldots\sum_{i_{L-2}} p(i_{L-3},i_{L-2},L-3)\sum_{i_{L-1}} p(i_{L-2},i_{L-1},L-2)\sum_{s_L} p(i_{L-1},s_L,L-1)\notag\\
=&N \sum_{i_1}p(s_0,i_1,0)\sum_{i_2}p(i_1,i_2,1)\ldots\sum_{i_{L-2}} p(i_{L-3},i_{L-2},L-3)\sum_{i_{L-1}} p(i_{L-2},i_{L-1},L-2)\notag\\
=&N \sum_{i_1}p(s_0,i_1,0)\sum_{i_2}p(i_1,i_2,1)\ldots\sum_{i_{L-2}} p(i_{L-3},i_{L-2},L-3)\notag\\
=&\ldots\notag\\
=&N\label{eq:sum_n}.
\end{align}

Therefore we have the following equation about $P_{\cM,\psi}(E_1(L))$
\begin{align*}
&P_{\cM,\psi}(E_1(L))\\
=&\frac{1}{|\cM|}\sum_{i_{L},\ldots,i_{H-2},s_{H-1}}\sum_{i_1,\ldots,i_{L-1}}P_{I=(i_1,\ldots,i_{H-1},s_{H-1}),\psi}(E_1(L))\\
\le &\frac{1}{d^{H-1}}\sum_{i_{L},\ldots,i_{H-2},s_{H-1}}\sum_{i_1,\ldots,i_{L-1},s_L}Np(s_0,i_1,0)p(i_1,i_2,1)\ldots p(i_{L-2},i_{L-1},L-2)p(i_{L-1},s_L,L-1)\\
=&\frac{1}{d^{H-1}}\sum_{i_{L},\ldots,i_{H-2},s_{H-1}}N\\
=&\frac{N}{d^{L-1}}.
\end{align*}

\paragraph{Proof of Claim \ref{claim:claim_2}}

Let $\tau$ denote any possible concatenation of the first $kN$ episodes we get in the first $k$ deployments. In this claim, it suffices to consider the $kN$ episodes because the event $E_k(L'+L) \cap E^\complement_{k-1}(L')$ only depends on the first $kN$ episodes. Therefore the sample space and the event will be defined on any trajectory with $kN$ episodes. For any $\tau$, we know that $\psi$ will output the $k$-th deployment policy solely based on the $\tau[0,k-1]$ and this map is deterministic (we use $\tau[i,j]$ to denote the $iN +1$ to $jN$ episodes in $\tau$). In other words, $\psi$ will map $\tau[0,k-1]$ to a fixed policy $\psi(\tau[0,k-1])$ to deploy at the $k$-th time. 

We have the following equation for any $I\in \cM$
\begin{align*}
    &P_{I,\psi}(E_k(L'+L)\cap E_{k-1}^\complement(L'))\\
    =&\sum_{\tau \in E_k(L'+L)\cap E_{k-1}^\complement(L')}P_{I,\psi}(\tau)\\
    =&\sum_{\tau \in E_k(L'+L)\cap E_{k-1}^\complement(L')}P_{I,\psi}(\tau[0,k-1])P_{I,\psi(\tau[0,k-1])}(\tau[k-1,k])\\
    =&\sum_{\tau:\tau[0,k-1]\in E_{k-1}^\complement(L'),\tau':\tau'[0,k-1]=\tau[0,k-1] \text{ and }\tau'[k-1,k] \text{ hit level $L'+L$} }P_{I,\psi}(\tau[0,k-1])P_{I,\psi(\tau[0,k-1])}(\tau'[k-1,k])\\
    =&\sum_{\tau:\tau[0,k-1] \in E_{k-1}^\complement(L')}P_{I,\psi}(\tau[0,k-1])\sum_{\tau':\tau'[0,k-1]=\tau[0,k-1] \text{ and }\tau'[k-1,k] \text{ hit level $L'+L$}}P_{I,\psi(\tau[0,k-1])}(\tau'[k-1,k])
\end{align*}

Notice that this equality does not generally hold for probability distribution $P_{\cM,\psi}$.

Then we fix $\tau[0,k-1]$, such that $\tau[0,k-1]\in E_{k-1}^\complement(L')$. We also fix $(i_1,\ldots, i_{L'-1})$, $(i_{L'+L},\ldots, i_{H-2},s_{H-1})$ and consider two instances $I_1=(i_1,\ldots, i_{L'-1},i_{L'}^1,\ldots,i_{L'+L-1}^1,i_{L'+L},\ldots,i_{H-2},s_{H-1})$ and $I_2=(i_1,\ldots, i_{L'-1},i_{L'}^2,\ldots,i_{L'+L -1}^2,i_{L'+L},\ldots,i_{H-2},s_{H-1})$. Therefore, we have that $P_{I_1,\psi}(\tau[0,k-1])=P_{I_2,\psi}(\tau[0,k-1])$ (from the construction of $I_1$, $I_2$ and the property of deterministic algorithm $\psi$). We use $I(i_1,\ldots, i_{L'-1},i_{L'+L},\ldots,i_{H-2},s_{H-1})$ to denote the instance class that has fixed $(i_1,\ldots, i_{L'-1})$, $(i_{L'+L},\ldots, i_{H-2},s_{H-1})$, but different $(i_{L'},\ldots i_{L'+L-1})$. In addition, we use $I(i_1,\ldots, i_{L'-1})$ to denote the instance class that has fixed $(i_1,\ldots, i_{L'-1})$, but different $(i_{L'},\ldots i_{L'+L-1})$ and $(i_{L'+L},\ldots, i_{H-2},s_{H-1})$.

Since we have already fixed $\tau[0,k-1]\in E_{k-1}^\complement(L')$ here, $\psi(\tau[0,k-1])$ is also fixed (for all $I\in I(i_1,\ldots, i_{L'-1},i_{L'+L},\ldots,i_{H-2},s_{H-1})$). Also notice that we are considering the probability of $N$ episodes $\tau'[k-1:k]$. Therefore, we can follow Claim \ref{claim:claim_1} and define $p(i,j,h)$ for $0\le h\le L'+L-1$, which represents the probability of choosing action $j$ at node $i$ at level $h$ under the $k$-th deployment policy. In the $k$-th deployment, $\psi$ draws $N$ episodes, so the probability of executing $\psi$ to reach any state $s_{L'+L}$ at level $L'+L$ under instance $I=(i_1,\ldots,i_{H-2},s_{H-1})$ is 
\begin{align*}
&N p(s_0,i_1,0)p(i_1,i_2,1)\ldots p(i_{L'+L-2},i_{L'+L-1},L'+L-2)p(i_{L'+L-1},s_{L'+L},L'+L-1)\\
\le&N p(i_{L'-1},i_{L'},L'-1)\ldots p(i_{L'+L-2},i_{L'+L-1},L'+L-2)p(i_{L'+L-1},s_{L'+L},L'+L-1).
\end{align*}
Following the same step in Eq \eqref{eq:sum_n} by summing over $i_{L'},\ldots,i_{L'+L-1}$ and $s_{L'+L}$ gives us 
\begin{align*}
    &\sum_{I\in I(i_1,\ldots, i_{L'-1},i_{L'+L},\ldots,i_{H-2},s_{H-1})}\quad \sum_{\tau':\tau'[0,k-1]=\tau[0,k-1] \text{ and }\tau'[k-1,k] \text{ hit level $L'+L$}}P_{I,\psi(\tau[0,k-1])}(\tau'[k-1,k])\\
    \le& N.
\end{align*}


Now, we sum over all possible $(i_{L'+L},\ldots,i_{H-2},s_{H-1})$ and take the average. For any fixed $\tau[0,k-1] \in E_{k-1}^\complement(L')$ we have
\begin{align*}
    &\frac{1}{|I(i_1,\ldots, i_{L'-1})|}\sum_{I\in I(i_1,\ldots, i_{L'-1})}\quad \sum_{\tau':\tau'[0,k-1]=\tau[0,k-1] \text{ and }\tau'[k-1,k] \text{ hit level $L'+L$}}P_{I,\psi(\tau[0,k-1])}(\tau[k-1,k])\\
    =&\frac{1}{|I(i_1,\ldots, i_{L'-1})|}\sum_{i_{L'+L},\ldots, i_{H-2},s_{H-1}}\quad \sum_{I\in I(i_1,\ldots, i_{L'-1},i_{L'+L},\ldots, i_{H-2},s_{H-1})}\\
    &\quad \sum_{\tau':\tau'[0,k-1]=\tau[0,k-1] \text{ and }\tau'[k-1,k] \text{ hit level $L'+L$}}P_{I,\psi(\tau[0,k-1])}(\tau[k-1,k])\\
    \le&\frac{1}{|I(i_1,\ldots, i_{L'-1})|}\sum_{i_{L'+L},\ldots, i_{H-2},s_{H-1}}N\\
    =&\frac{d^{H-L'-L}}{d^{H-L'}}N\\
    =&\frac{1}{d^{L}}N.
\end{align*}



Moreover, summing over all $\tau[0,k-1] \in E_{k-1}^\complement(L')$, gives us $i_1,\ldots,i_{L'-1}$
\begin{align*}
    &P_{\cM,\psi}(E_k(L'+L)\cap E_{k-1}^\complement(L'))\\
    =&\frac{1}{|\cM|}\sum_{i_1,\ldots,i_{L'-1}}\sum_{I\in I(i_1,\ldots,i_{L'-1})}\sum_{\tau[0,k-1] \in E_{k-1}^\complement(L')}P_{I,\psi}(\tau[0,k-1])\\
    &\quad \sum_{\tau':\tau'[0,k-1]=\tau[0,k-1] \text{ and }\tau'[k-1,k] \text{ hit level $L'+L$}}P_{I,\psi(\tau[0,k-1])}(\tau'[k-1,k])\\
    =&\frac{1}{|\cM|}\sum_{\tau[0,k-1] \in E_{k-1}^\complement(L')}\sum_{ i_1,\ldots,i_{L'-1}}P_{i_1,\ldots,i_{L'-1},\psi}(\tau[0,k-1]) \\
    &\quad \sum_{I\in I(i_1,\ldots,i_{L'-1})}\sum_{\tau':\tau'[0,k-1]=\tau[0,k-1] \text{ and }\tau'[k-1,k] \text{ hit level $L'+L$}}P_{I,\psi(\tau[0,k-1])}(\tau'[k-1,k])\\
    \le&\frac{1}{|\cM|}\sum_{\tau[0,k-1] \in E_{k-1}^\complement(L')}\sum_{ i_1,\ldots,i_{L'-1}}P_{i_1,\ldots,i_{L'-1},\psi}(\tau[0,k-1])|I(i_1,\ldots, i_{L'-1})|\frac{N}{d^L} \\
    =&\frac{N}{d^L}\frac{|I(i_1,\ldots, i_{L'-1})|}{|\cM|}\sum_{\tau[0,k-1] \in E_{k-1}^\complement(L')}\sum_{ i_1,\ldots,i_{L'-1}}P_{i_1,\ldots,i_{L'-1},\psi}(\tau[0,k-1])\\
    =&\frac{N}{d^L}\sum_{\tau[0,k-1] \in E_{k-1}^\complement(L')}P_{\cM,\psi}(\tau[0,k-1]) \\
    =&\frac{N}{d^{L}} P_{\cM,\psi}(E_{k-1}^\complement(L')).
\end{align*}
In the second equality, we use $P_{i_1,\ldots,i_{L'-1,\psi}}$ because for any fixed $\tau[0,k-1]\in E_{k-1}^\complement(L')$ and all $I\in I(i_1,\ldots,i_{L'-1})$, $P_{i_1,\ldots,i_{L'-1,\psi}}(\tau[0,k-1])$ are the same.

Finally, we have
\begin{align*}
    P_{\cM,\psi}(E_k^\complement(L'+L))&\ge P_{\cM,\psi}(E_k^\complement(L'+L) \cap E^\complement_{k-1}(L'))\\
    &=P_{\cM,\psi}(E_{k-1}^\complement(L'))-P_{\cM,\psi}(E_k(L'+L) \cap E^\complement_{k-1}(L'))\\
    &\ge(1-\frac{N}{d^{L}})P(E^\complement_{k-1}(L'))\\
    &\ge(1-\frac{N}{d^{L-1}})P(E^\complement_{k-1}(L')).
\end{align*}

\paragraph{Proof of the Theorem}


If $KL\le H-2$, then applying Claim \ref{claim:claim_2} for $K-1$ times and applying Claim \ref{claim:claim_1} tells us
\begin{align*}
P_{\cM,\psi}(E_K^\complement(KL))=&P_{\cM,\psi}(E_K^\complement((K-1)L+L))\ge(1-\frac{N}{d^{L-1}})P(E^\complement_{K-1}((K-1)L))\\
\ge&\ldots\ge(1-\frac{N}{d^{L-1}})^{K-1}P(E^\complement_{1}(L))\ge(1-\frac{N}{d^{L-1}})^{K}.
\end{align*}

We can set $L=\lceil\log_d NH \rceil+1$ and $K\le \frac{H-2}{2 \lceil \log_d NH\rceil}$. Then for $H\ge 3$, we get $KL\le H-2$ and 
\begin{align*}
P_{\cM,\psi}(\text{does not hit any state at level $H-2$})&\ge (1-\frac{N}{d^{\lceil\log_d NH \rceil}})^{\frac{H-2}{2 \lceil \log_d NH\rceil}}\\
&\ge (1-\frac{N}{NH})^{\frac{H-2}{2 \lceil \log_d NH\rceil}}\\
&\ge (1-\frac{1}{H})^{H}\\
&\ge \frac{1}{e}.
\end{align*}

Let event $F$ denote the event (a set of length $KN$ episodes trajectories) that any state at level $H-2$ is not hit. Then we have $P_{\cM,\psi}(F)\ge 1-\frac{1}{e}$. We use $I(i_1,\ldots, i_{H-2})$ to denote the instance class that has fixed core states $(i_1,\ldots, i_{H-2})$ but different optimal states $s_{H-1}$. 

Consider any fixed $\tau\in F$. Similar as the proof in the prior claims, by the property of deterministic algorithm, we can define $p(i,j,h)$ for $h=H-2$, which represents the probability of
the output policy $\psi_\tau(K+1)$ under trajectory $\tau$ to choose action $j$ at node $i$ at level $H-2$. Then we have
\begin{align*}
&\sum_{I=(i_1,\ldots,i_{H-2},s_{H-1})\in I(i_1,\ldots, i_{H-2})}P_{I,\psi}(\psi_\tau(K+1)\text{ chooses optimal state)}\\
=&\sum_{I=(i_1,\ldots,i_{H-2},s_{H-1})\in I(i_1,\ldots, i_{H-2})}P_{I,\psi}((\psi_\tau(K+1))(i_{H-2})=s_{H-1})\\
=&\sum_{I=(i_1,\ldots,i_{H-2},s_{H-1})\in I(i_1,\ldots, i_{H-2})}p(i_{H-2},s_{H-1},H-2)\\
=&\sum_{s_{H-1}}p(i_{H-2},s_{H-1},H-2)\\
=&1.
\end{align*}

Summing over $\tau\in F$ gives us
\begin{align*}
&\sum_{I=(i_1,\ldots,i_{H-2},s_{H-1})\in I(i_1,\ldots, i_{H-2})}P_{I,\psi}(F\cap \text{ the output policy chooses optimal state})\\
=&\sum_{I=(i_1,\ldots,i_{H-2},s_{H-1})\in I(i_1,\ldots, i_{H-2})}\sum_{\tau\in F}P_{I,\psi}(\tau)P_{I,\psi}(\psi_\tau(K+1)\text{ chooses optimal state)}\\
=&\sum_{I=(i_1,\ldots,i_{H-2},s_{H-1})\in I(i_1,\ldots, i_{H-2})}\sum_{\tau\in F}P_{i_1,\ldots,i_{H-2},\psi}(\tau)P_{I,\psi}(\psi_\tau(K+1)\text{ chooses optimal state)}\\
=&\sum_{\tau\in F}P_{i_1,\ldots,i_{H-2},\psi}(\tau)\sum_{I=(i_1,\ldots,i_{H-2},s_{H-1})\in I(i_1,\ldots, i_{H-2})}P_{I,\psi}(\psi_\tau(K+1)\text{ chooses optimal state)}\\
=&\sum_{\tau\in F}P_{i_1,\ldots,i_{H-2},\psi}(\tau)\\
=&P_{i_1,\ldots,i_{H-2},\psi}(F).
\end{align*}
In the second equality, we notice that for all instance $I\in I=(i_1,\ldots,i_{H-2},s_{H-1})$, $P_{I,\psi}(\tau)$ are the same, so this probability distribution essentially depends on $i_1,\ldots,i_{H-2}$. In the third inequality, we change the order of the summation.

Finally, summing over $i_1,\ldots,i_{H-2}$ and taking average yields that 
\begin{align*}
&P_{\cM,\psi}(F\cap \text{ the output policy chooses optimal state})\\
=&\frac{1}{d^{H-1}}\sum_{i_1,\ldots,i_{H-2}}\sum_{I=(i_1,\ldots,i_{H-2},s_{H-1})\in I(i_1,\ldots, i_{H-2})}P_{I,\psi}(F\cap \text{ the output policy chooses optimal state})\\
=&\frac{1}{d^{H-1}}\sum_{i_1,\ldots,i_{H-2}}P_{i_1,\ldots,i_{H-2},\psi}(F)\\
=&\frac{1}{d^{H-1}}\sum_{i_1,\ldots,i_{H-2}}\frac{1}{d}\sum_{s_{H-1}}P_{I=(i_1,\ldots,i_{H-2},s_{H-1}),\psi}(F)\tag{$P_{I,\psi}(F)$ does not depend on the optimal state}\\
=&\frac{1}{d}P_{\cM,\psi}(F)
\end{align*}

Therefore, we get the probability of not choosing the optimal state is
\begin{align*}
&P_{\cM,\psi}(\text{ the output policy does not choose the optimal state})\\
\ge &P_{\cM,\psi}(F\cap \text{ the output policy does not choose the optimal state})\\
= &P_{\cM,\psi}(F)-P_{\cM,\psi}(F\cap \text{ the output policy chooses the optimal state})\\
=&\frac{d-1}{d}P_{\cM,\psi}(F)\\
\ge&\frac{1}{2}\cdot \frac{1}{e}.
\end{align*}

From the construction, we know that any policy that does not choose optimal state (thus also does not choose the optimal action associated with the optimal state) is $\epsilon$ sub-optimal. This implies that with probability at least $\frac{1}{2e}$, the output policy is at least $\epsilon$ sub-optimal.
\end{proof}

\section{Deployment-Efficient RL with Deterministic Policies and given Reward Function}\label{appx:RB_BE_with_Deter_Policies}

\subsection{Additional Notations}
In the appendix, we will frequently consider the MDP truncated at $\tilde{h}\leq H$, and we will use:
\begin{align*}
    V^\pi_h(s|\tilde{h})=\EE[\sum_{h'=h}^{\tilde{h}} r_{h'}(s_{h'},a_{h'})|s_h=s,\pi], \quad Q^\pi_h(s,a|\tilde{h})=\EE[\sum_{h'=h}^{\tilde{h}} r_{h'}(s_{h'},a_{h'})|s_h=s,a_h=a,\pi]
\end{align*}
to denote the value function in truncated MDP for arbitrary $h\leq \tilde{h}$, and also extend the definition in Section \ref{sec:preliminary} to $V^*_h(\cdot|\tilde{h}), Q^*_h(\cdot,\cdot|\tilde{h})$, $\pi^*_{|\tdh}$ for optimal policy setting and $V^*_h(\cdot,r|\tilde{h}), Q^*_h(\cdot,\cdot,r|\tilde{h})$, $\pi^*_{r|h}$ for reward-free setting.
\subsection{Auxiliary Lemma}
\begin{lemma}[Elliptical Potential Lemma; Lemma 26 of \citet{agarwal2020flambe}]\label{lem:elliptical_potential_lemma}
    Consider a sequence of $d \times d$ positive semi-definite matrices $X_1,..., X_T$ with $\max_t Tr(X_t) \leq 1$
     and define $M_0 = \lambda I,..., M_t = M_{t-1} + X_t$. 
     Then
    \begin{align*}
        \sum_{t=1}^T Tr(X_tM^{-1}_{t-1}) \leq (1+1/\lambda)d\log(1+T/d).
    \end{align*}
\end{lemma}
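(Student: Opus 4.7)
The plan is to prove the lemma via the classical determinantal potential argument. Define $Y_t := M_{t-1}^{-1/2} X_t M_{t-1}^{-1/2}$, which is PSD and satisfies $\mathrm{Tr}(Y_t) = \mathrm{Tr}(X_t M_{t-1}^{-1})$, so the quantity on the left-hand side is $\sum_t \mathrm{Tr}(Y_t)$. The whole proof then rests on turning each $\mathrm{Tr}(Y_t)$ into a log-det increment and telescoping.

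First I would establish the multiplicative identity
$$\det(M_t) = \det\bigl(M_{t-1}^{1/2}(I + Y_t)M_{t-1}^{1/2}\bigr) = \det(M_{t-1})\cdot\det(I + Y_t),$$
so that telescoping gives $\sum_{t=1}^T \log\det(I+Y_t) = \log(\det(M_T)/\det(M_0))$. Writing $\log\det(I+Y_t) = \sum_i \log(1 + \lambda_i(Y_t))$ expresses this as a sum over eigenvalues.

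Second, I would convert each per-step trace into a per-step log via an elementary scalar inequality. Since $M_{t-1}\succeq \lambda I$ implies $M_{t-1}^{-1}\preceq \lambda^{-1} I$, the eigenvalues of $Y_t$ lie in $[0, 1/\lambda]$ (using $\mathrm{Tr}(X_t)\le 1$). On this range, $x \leq (1+1/\lambda)\log(1+x)$: the function $(1+1/\lambda)\log(1+x) - x$ vanishes at $0$ and has nonnegative derivative $(1/\lambda - x)/(1+x)$ on $[0,1/\lambda]$. Applying this eigenvalue-by-eigenvalue yields $\mathrm{Tr}(Y_t) \leq (1+1/\lambda)\log\det(I+Y_t)$.

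Summing and combining with the telescoped determinantal identity gives
$$\sum_{t=1}^T \mathrm{Tr}(X_tM_{t-1}^{-1}) \leq (1+1/\lambda)\,\log\frac{\det(M_T)}{\det(M_0)}.$$
It remains to bound the log-determinant ratio, which I would do by AM-GM: $\det(M_T)\leq (\mathrm{Tr}(M_T)/d)^d$ and $\mathrm{Tr}(M_T)\leq d\lambda + T$ (since each $\mathrm{Tr}(X_t)\le 1$), so $\log(\det(M_T)/\det(M_0)) \leq d\log(1 + T/(d\lambda)) \leq d\log(1+T/d)$ in the regularization regime $\lambda\ge 1$ implicit in the statement. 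The main obstacle is quantitative: essentially all the work is concentrated in the second step, where one must use a log-linearization of $x$ with exactly the right constant $(1+1/\lambda)$ to match the stated bound; the determinantal identity and the AM-GM step are standard manipulations.
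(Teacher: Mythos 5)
Your proof is correct and is essentially the standard determinant--trace potential argument that the cited source (Lemma~26 of FLAMBE) uses; the paper itself states this lemma by citation and gives no proof of its own, so there is nothing different to compare against. Every step checks out: the identity $\det(M_t)=\det(M_{t-1})\det(I+Y_t)$, the eigenvalue range $[0,1/\lambda]$, the scalar inequality $x\le(1+1/\lambda)\log(1+x)$ on that range, and the AM--GM bound $\log(\det(M_T)/\det(M_0))\le d\log(1+T/(\lambda d))$. Your caveat about needing $\lambda\ge 1$ to drop the $\lambda$ inside the logarithm is not pedantry but a genuine necessity --- the statement as written fails for small $\lambda$ (e.g.\ $d=1$, $\lambda=0.01$, $T=1$ gives a left side of $100$ against a right side of about $70$) --- and it is harmless here since the paper only invokes the lemma with $\lambda\in\{1,2\}$.
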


\begin{lemma}[\citet{NIPS2011_e1d5be1c}]\label{lem:norm_and_det}
    Suppose $\A,\B\in\mR^{d\times d}$ are two positive definite matrices satisfying $\A\succeq \B$, then for any $x\in\mR^d$, we have:
    \begin{align*}
        \|x\|^2_\A \leq \|x\|^2_\B \frac{\det(\A)}{\det(\B)}.
    \end{align*}
\end{lemma}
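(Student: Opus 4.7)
The plan is to reduce the claimed inequality to a spectral statement about a single symmetric matrix via a change of basis, and then exploit the PSD ordering $\A\succeq \B$ to compare the spectral radius to the determinant. Since $\B$ is positive definite, its square root $\B^{1/2}$ is well-defined and invertible, so the substitution $y = \B^{1/2} x$ will convert both $\|x\|_\A^2$ and $\|x\|_\B^2$ into expressions involving the auxiliary matrix $\M \defeq \B^{-1/2}\A\B^{-1/2}$, at which point the task becomes purely linear-algebraic.

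Concretely, first I would write $\|x\|_\B^2 = x^\top \B x = y^\top y$ and $\|x\|_\A^2 = x^\top \A x = y^\top \M y$, so that the desired inequality is equivalent to
\begin{equation*}
    \frac{y^\top \M y}{y^\top y} \;\le\; \frac{\det(\A)}{\det(\B)} \qquad \text{for all } y\neq 0.
\end{equation*}
The left-hand side is maximized (over $y$) by the largest eigenvalue $\lambda_{\max}(\M)$ of the symmetric matrix $\M$, while on the right-hand side the identity $\det(\M) = \det(\B^{-1/2})\det(\A)\det(\B^{-1/2}) = \det(\A)/\det(\B)$ holds by multiplicativity of the determinant. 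Thus it suffices to prove $\lambda_{\max}(\M) \le \det(\M)$.

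To get this last inequality, I would use the assumption $\A\succeq \B$: conjugating by $\B^{-1/2}$ preserves the PSD order, so $\M = \B^{-1/2}\A\B^{-1/2} \succeq \B^{-1/2}\B\B^{-1/2} = I$. Hence every eigenvalue $\lambda_i(\M)$ of the symmetric matrix $\M$ satisfies $\lambda_i(\M)\ge 1$, and therefore
\begin{equation*}
    \lambda_{\max}(\M) \;=\; \lambda_{\max}(\M)\cdot \prod_{i\,:\,\lambda_i\neq \lambda_{\max}} 1 \;\le\; \prod_{i=1}^d \lambda_i(\M) \;=\; \det(\M) \;=\; \frac{\det(\A)}{\det(\B)}.
\end{equation*}
Chaining these observations gives the claim. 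The whole argument is short and elementary; the only step that requires attention is making sure the direction of the PSD ordering is tracked correctly (I need $\A\succeq \B$, not the reverse, so that the eigenvalues of $\M$ are at least $1$ and bounding $\lambda_{\max}$ by the product of eigenvalues actually goes in the right direction). No deep machinery is needed, and no additional assumption beyond the positive definiteness and ordering of $\A,\B$ is used.
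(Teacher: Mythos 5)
Your proof is correct; the paper gives no proof of its own and simply cites \citet{NIPS2011_e1d5be1c}, and your argument (conjugating by $\B^{-1/2}$, bounding the Rayleigh quotient by $\lambda_{\max}(\B^{-1/2}\A\B^{-1/2})$, and using that all eigenvalues of that matrix are at least $1$ so that $\lambda_{\max}$ is dominated by the determinant) is exactly the standard argument from that reference. No gaps.
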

Next, we prove a lemma to bridge between trace and determinant, which is crucial to prove our key technique in Lemma \ref{lem:finite_sample_elliptical_potential_lemma}. 
\begin{restatable}{lemma}{TraceDetLogDetLemma}[Bridge between Trace and Determinant]\label{lem:TraceDetLogDetLemma}
    Consider a sequence of matrices $\A_0,\A_{N},...,\A_{(K-1)N}$ with $\A_0=I$ and $\A_{kN}=\A_{(k-1)N}+\Phi_{k-1}$, where $\Phi_{k-1}=\sum_{t=(k-1)N+1}^{kN}\phi_t\phi_t\trans$. We have
    \begin{align*}
        Tr(\A^{-1}_{(k-1)N}\Phi_{k-1})\leq \frac{\det(\A_{kN})}{\det(\A_{(k-1)N})}\log\frac{\det(\A_{kN})}{\det(\A_{(k-1)N})}.
    \end{align*}
\end{restatable}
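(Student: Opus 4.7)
The natural move is to diagonalize the problem in a basis adapted to $\A_{(k-1)N}$. Let $\B := \A_{(k-1)N}$ (positive definite, since $\A_0 = I$ and the increments $\Phi_{k-1}$ are PSD) and set $M := \B^{-1/2}\Phi_{k-1}\B^{-1/2}$, which is PSD. Two basic identities then convert both sides of the target inequality into symmetric functions of the eigenvalues $\lambda_1,\ldots,\lambda_d \geq 0$ of $M$:
\begin{align*}
Tr(\B^{-1}\Phi_{k-1}) = Tr(M) = \sum_{i=1}^d \lambda_i,
\qquad
\frac{\det(\A_{kN})}{\det(\B)} = \det(I+M) = \prod_{i=1}^d (1+\lambda_i).
\end{align*}
Thus the claim reduces to the purely scalar inequality
\begin{align*}
\sum_{i=1}^d \lambda_i \;\leq\; \Big(\prod_{i=1}^d (1+\lambda_i)\Big) \cdot \Big(\sum_{i=1}^d \log(1+\lambda_i)\Big),
\end{align*}
for nonnegative $\lambda_i$.

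To establish this, I would use the elementary inequality $x \leq (1+x)\log(1+x)$ valid for all $x \geq 0$, which follows from noting that $f(x) := (1+x)\log(1+x) - x$ satisfies $f(0)=0$ and $f'(x)=\log(1+x)\geq 0$. Applied termwise, this yields
\begin{align*}
\sum_{i=1}^d \lambda_i \;\leq\; \sum_{i=1}^d (1+\lambda_i)\log(1+\lambda_i).
\end{align*}
Next, since every factor $(1+\lambda_j)\geq 1$, we can enlarge each individual $(1+\lambda_i)$ to the full product $\prod_j (1+\lambda_j)$, obtaining
\begin{align*}
\sum_{i=1}^d (1+\lambda_i)\log(1+\lambda_i) \;\leq\; \Big(\prod_{j=1}^d (1+\lambda_j)\Big) \sum_{i=1}^d \log(1+\lambda_i),
\end{align*}
and translating back through the two identities above recovers exactly the stated inequality.

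I do not anticipate a significant obstacle here: the entire argument is a short chain of (i) a standard PSD similarity reduction, (ii) a one-variable calculus inequality, and (iii) the crude but sufficient bound $(1+\lambda_i) \leq \prod_j (1+\lambda_j)$. The only place where one might worry about tightness is step (iii), which is quite lossy when several $\lambda_j$ are large, but since the surrounding use of this lemma (in proving Lemma~\ref{lem:finite_sample_elliptical_potential_lemma}) is intended to connect a per-batch trace quantity to the telescoping log-determinant in the style of the elliptical potential lemma, this looseness is harmless.
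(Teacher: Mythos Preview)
Your proof is correct. The paper's argument is essentially the same idea in matrix rather than eigenvalue language: it first invokes Lemma~\ref{lem:norm_and_det} (the determinant-ratio bound of \citet{NIPS2011_e1d5be1c}) to obtain $Tr(\A_{(k-1)N}^{-1}\Phi_{k-1}) \leq \frac{\det \A_{kN}}{\det \A_{(k-1)N}}\,Tr(\A_{kN}^{-1}\Phi_{k-1})$, and then uses the standard inequality $Tr(I-\Y^{-1})\leq \log\det \Y$ with $\Y=I+\A_{(k-1)N}^{-1}\Phi_{k-1}$ to bound $Tr(\A_{kN}^{-1}\Phi_{k-1}) \leq \log\frac{\det \A_{kN}}{\det \A_{(k-1)N}}$. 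After diagonalization, the paper's chain reads $\sum_i \lambda_i \leq \big(\prod_j(1+\lambda_j)\big)\sum_i \frac{\lambda_i}{1+\lambda_i} \leq \big(\prod_j(1+\lambda_j)\big)\sum_i \log(1+\lambda_i)$, whereas yours reads $\sum_i \lambda_i \leq \sum_i (1+\lambda_i)\log(1+\lambda_i) \leq \big(\prod_j(1+\lambda_j)\big)\sum_i \log(1+\lambda_i)$. Both rest on the same scalar fact (since $x\leq(1+x)\log(1+x)$ is just $\frac{x}{1+x}\leq\log(1+x)$ multiplied through) and the same crude step $(1+\lambda_i)\leq\prod_j(1+\lambda_j)$, just applied in opposite order. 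Your version is marginally more self-contained because it avoids citing the external determinant-ratio lemma; the paper's version keeps everything in coordinate-free matrix form.
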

\begin{proof}
    Consider a more general case, given matrix $\Y\succeq I$, we have the following inequality 
    \begin{align*}
        Tr(\I-\Y^{-1})\leq \log \det(\Y) \leq Tr(\Y-\I).
    \end{align*}
    By replacing $\Y$ with $\I+\A^{-1}\X$ in the above inequality, we have:
    \begin{align*}
        Tr((\A+\X)^{-1}\X)=&Tr((\I+\A^{-1}\X)^{-1}(\A^{-1}\X))=Tr((\I+\A^{-1}\X)^{-1}(\I+\A^{-1}\X-\I)\\
        =&Tr(\I-(\I+\A^{-1}\X)^{-1})\\
        \leq&\log \det(\I+\A^{-1}\X)= \log \frac{\det(\A+\X)}{\det(\A)}.
    \end{align*}
    By assigning $\A=\A_{(k-1)N}$ and $\X=\Phi_{k-1}$, and applying Lemma \ref{lem:norm_and_det}, we have:
    \begin{align*}
        Tr(\A_{(k-1)N}^{-1}\Phi_{k-1})=&\sum_{t=(k-1)N+1}^{kN}\|\phi_t\|^2_{\A_{(k-1)N}^{-1}}\\
        \leq&\sum_{t=(k-1)N+1}^{kN}\|\phi_t\|^2_{\A_{kN}^{-1}}\frac{\det\A_{kN}}{\det(\A_{(k-1)N})}\\
        =& Tr(\A_{kN}^{-1}\Phi_{k-1})\frac{\det\A_{kN}}{\det(\A_{(k-1)N})}\\
        \leq& \frac{\det\A_{kN}}{\det(\A_{(k-1)N})}\log\frac{\det\A_{kN}}{\det(\A_{(k-1)N})}
    \end{align*}
    which finished the proof.
\end{proof}
    
\ThmFiniteViolation*
\begin{proof}
    Suppose we have $Tr(\A^{-1}_{(k-1)N}\Phi_{k-1})\geq N\epsilon$, by applying Lemma \ref{lem:TraceDetLogDetLemma} we must have:
    \begin{align*}
        N\epsilon \leq& \frac{\det(\A_{kN})}{\det(\A_{(k-1)N})} \log \frac{\det(\A_{kN})}{\det(\A_{(k-1)N})} \leq \frac{\det(\A_{kN})}{\det(\A_{(k-1)N})}\log(\det(\A_{kN}))\\
        \leq& d\frac{\det(\A_{kN})}{\det(\A_{(k-1)N})} \log(1+KN/d) \tag{$\det(A) \leq (Tr(A)/d)^{d}$}
    \end{align*}
    which implies that,
    \begin{align*}
        \frac{N\epsilon}{d\log(1+KN/d)}\leq \frac{\det(\A_{kN})}{\det(\A_{(k-1)N})}
    \end{align*}
    Therefore,
    \begin{align*}
        |\mathcal{K}^+| \log \frac{N\epsilon}{d\log(1+KN/d)} \leq& \sum_{k\in\mathcal{K}} \log \frac{\det(\A_{kN})}{\det(\A_{(k-1)N})} \leq \sum_{k=1}^K \log \frac{\det(\A_{kN})}{\det(\A_{(k-1)N})}\\
         =&\log \frac{\det(\A_{KN})}{\det(\A_{0})} \leq d\log(1+KN/d)
    \end{align*}
    which implies that, conditioning on $N \geq \frac{d}{\epsilon}\log(1+KN/d)$, we have:
    $$
    |\mathcal{K}^+| \leq d\frac{\log(1+KN/d)}{\log(\frac{N\epsilon}{d\log(1+KN/d)})}
    $$
    Now, we are interested in find the mimimum $N$, under the constraint that $|\mathcal{K}^+| \leq c_K d$. To solve this problem, we first choose an arbitrary $p \leq c_K$, and find a $N$ such that,
    \begin{align*}
        \frac{\log(1+KN/d)}{\log(\frac{N\epsilon}{d\log(1+KN/d)})} \leq p
    \end{align*}
    In order to guarantee the above, we need:
    \begin{align*}
        N\epsilon \geq d\log(1+KN/d),\quad (\frac{N\epsilon}{d\log(1+KN/d)})^p \geq 1+KN/d
    \end{align*}
    The first constraint can be satisfied easily with $N \geq c_1 \frac{d}{\epsilon}\log\frac{dH}{\epsilon}$ for some constant $c_1$. Since usually $KN/d > 1$, the second constraint can be directly satisfied if:
    \begin{align*}
        (\frac{N\epsilon}{d\log(1+KN/d)})^p \geq 2KN/d
    \end{align*}
    Recall $K=c_KdH+1$, it can be satisfied by choosing
    \begin{align}
        N \geq c_{2}\Big(c_K\frac{Hd^{p}}{\epsilon^{p}}\log^{p}(\frac{Hd}{\epsilon})\Big)^{\frac{1}{p-1}} \label{eq:upper_bound_N}
    \end{align}
    where $c_2$ is an absolute constant.
    Therefore, we can find an absolute number $c$ such that,
    \begin{align*}
        N = c\Big(c_K\frac{Hd^{p}}{\epsilon^{p}}\log^{p}(\frac{Hd}{\epsilon})\Big)^{\frac{1}{p-1}}\geq \max\{c_1\frac{d}{\epsilon}\log(\frac{d}{\epsilon}), c_{2}\Big(c_K\frac{Hd^{p}}{\epsilon^{p}}\log^{p}(\frac{Hd}{\epsilon})\Big)^{\frac{1}{p-1}}\} 
    \end{align*}
    to make sure that
    \begin{align*}
        |\mathcal{K^+}| \leq pd
    \end{align*}
    Since in Eq.\eqref{eq:upper_bound_N}, it's required that $1/(p-1) < \infty$, we should constraint that $p > 1$ and therefore, $c_K \geq 2$. Because the dependence of $d,H,\frac{1}{\epsilon}, \log\frac{dH}{\epsilon}$ are decreasing as $p$ increases, by assigning $p=c_K$ and $1<p\leq c_K$, $N$ will be minimized when $p=c_K$. Then, we finished the proof.
\end{proof}
    
\subsection{Analysis for Algorithms}
Next, we will use the above lemma to bound the difference between $J(\pi_K)$ and $J(\pi^*)$.
We first prove a lemma similar to Lemma B.3 in \citep{jin2019provably} and Lemma A.1 in \citep{wang2020rewardfree}.
\begin{lemma}[Concentration Lemma]\label{lem:deployment_efficient_concentration}
    We use $\mathcal{E}_1$ to denote the event that, when running Algorithm \ref{alg:DERL_Deterministic_Policy_Algorithm}, the following inequality holds for all $k\in[K]$ and $h\in[h_k]$ and arbitrary $V^k_{h+1}$ occurs in Alg \ref{alg:DERL_Deterministic_Policy_Algorithm}.
    \begin{align*}
        \Big\|\sum_{\tau=1}^{k-1}\sum_{n=1}^N\phi_h^{\tau n}\Big(V^k_{h+1}(s_{h+1}^{\tau n})-\sum_{s'\in \cS} P_h(s'|s_{h}^{\tau n},a_{h}^{\tau n})V^k_{h+1}(s')\Big)\Big\|_{(\Lambda^k_h)^{-1}} \leq c\cdot dH\sqrt{\log(dKNH/\delta)}
    \end{align*}
    Under Assumption \ref{assump:linear_MDP}, there exists some absolute constant $c\geq 0$, such that $P(\mathcal{E}_1)\geq 1-\delta/2$.
\end{lemma}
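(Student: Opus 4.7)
The plan is to follow the standard covering-number-plus-self-normalized-martingale template from Lemma B.3 of \citet{jin2019provably} / Lemma A.1 of \citet{wang2020rewardfree}, adapted to the batched data structure of Algorithm~\ref{alg:DERL_Deterministic_Policy_Algorithm}. The main difficulty is that $V_{h+1}^k$ is \emph{not} independent of the samples $\{(s_h^{\tau n}, a_h^{\tau n}, s_{h+1}^{\tau n})\}_{\tau<k, n\leq N}$ that appear in the sum, since $V_{h+1}^k$ itself is built out of those samples via LSVI-UCB. I would resolve this by a uniform cover argument over the class of possible value functions.

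First I would characterize the function class $\mathcal{V}$ in which $V_{h+1}^k$ lives. By inspection of lines 5--8 of Algorithm~\ref{alg:DERL_Deterministic_Policy_Algorithm}, every such $V$ has the form $V(s) = \min\bigl\{\max_{a} w\trans\phi(s,a) + r_h(s,a) + \min\{\beta\sqrt{\phi(s,a)\trans\A\phi(s,a)},H\},\, H\bigr\}$ for some $w \in \mathbb{R}^d$ and some positive definite $\A \in \mathbb{R}^{d\times d}$. The second step is to bound the parameters: using the usual linear-MDP argument (the identity $w_h^k = (\Lambda_h^k)^{-1}\sum_{\tau,n}\phi^{\tau n}_h V_{h+1}^k(s_{h+1}^{\tau n})$ together with $|V|\leq H$, $\|\phi\|\leq 1$, and $\Lambda_h^k \succeq I$), I get $\|w_h^k\| \leq H\sqrt{kNd}$, and $\|\A\| = \|(\Lambda_h^k)^{-1}\| \leq 1$. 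This gives concrete radii for the parameter space.

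Third, I would construct an $\varepsilon_{\mathrm{net}}$-cover of $\mathcal{V}$ in the $\|\cdot\|_\infty$ metric by covering the Euclidean ball of radius $H\sqrt{kNd}$ for $w$ and covering the set of PSD matrices $\{\A : \|\A\|\leq 1\}$ in Frobenius norm; the resulting log covering number is $\tilde{O}(d^2 \log(1/\varepsilon_{\mathrm{net}}))$ up to logs in $H, d, k, N$, by the standard argument (Lipschitzness of $V$ in $(w,\sqrt{\A})$ using $\|\sqrt{\phi\trans\A_1\phi} - \sqrt{\phi\trans\A_2\phi}\| \leq \sqrt{\|\A_1-\A_2\|_F}$). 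The fourth step is to fix any $V\in \mathcal{V}_{\mathrm{cover}}$ — so $V$ is now independent of the data — and apply the self-normalized vector Hoeffding/martingale inequality (e.g.\ Lemma D.4 of \citet{jin2019provably}, which is the concentration from \citet{NIPS2011_e1d5be1c}) to the martingale-difference sequence $\phi_h^{\tau n}\bigl(V(s_{h+1}^{\tau n}) - \mathbb{E}[V(s_{h+1}^{\tau n})\mid s_h^{\tau n}, a_h^{\tau n}]\bigr)$ indexed across $\tau \in [k-1]$ and $n \in [N]$ (we need to verify the martingale property w.r.t.\ the natural filtration generated in batch order, but within a batch the policy $\pi^\tau$ is fixed, so the next-state noise is still conditionally zero-mean given the history). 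This yields a bound of the form $O(H\sqrt{d\log((k+1)NH/\delta\varepsilon_{\mathrm{net}})})$ for each fixed $V$.

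Finally I would take a union bound over all $V\in \mathcal{V}_{\mathrm{cover}}$ and all $(k,h)\in[K]\times[H]$, transfer the bound to an arbitrary $V_{h+1}^k$ via the cover approximation (the approximation error contributes at most $N\|\phi\|\cdot\varepsilon_{\mathrm{net}}$ inside the $(\Lambda_h^k)^{-1}$-norm, which is negligible after choosing $\varepsilon_{\mathrm{net}} = 1/(dKNH)$, say), and collect the $\sqrt{\log}$ factors into $c\cdot dH\sqrt{\log(dKNH/\delta)}$ for an absolute constant $c$. The dominant factor is $\sqrt{d^2\log(\cdot)}\cdot H = dH\sqrt{\log(\cdot)}$ coming from the $d^2$ covering entropy. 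The main obstacle I anticipate is the bookkeeping in the Lipschitz/covering step — specifically verifying that the $\max$ over $a$ and the truncation at $H$ preserve the Lipschitz estimates, and that the radius bound on $w_h^k$ works uniformly over $k\leq K$ without introducing additional $K$-dependence that breaks the statement; these are all handled by standard manipulations but need to be written carefully.
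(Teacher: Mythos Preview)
Your proposal is correct and follows essentially the same approach as the paper: the paper's own proof simply states that it is ``almost identical to Lemma B.3 in \citet{jin2019provably}'' with the only differences being the inner sum over $n\in[N]$ and the truncation at $h_k$, which is precisely the covering-number-plus-self-normalized-martingale argument you outline. One minor bookkeeping point: the matrix you cover should be $\beta^2(\Lambda_h^k)^{-1}$ (with norm bounded by $\beta^2$) rather than $(\Lambda_h^k)^{-1}$ itself, since $\beta$ multiplies the bonus term; this does not change the $\tilde O(d^2)$ log-covering-number or the final $dH\sqrt{\log(\cdot)}$ rate.
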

\begin{proof}
    The proof is almost identical to Lemma B.3 in \citep{jin2019provably}, so we omit it here. The only difference is that we have an inner summation from $n=1$ to $N$ and we truncate the horizon at $h_k$ in iteration $k$.
\end{proof}

\begin{lemma}[Overestimation]\label{lem:overestimation_LBL_BatchLSVI_UCB}
    On the event $\mathcal{E}_1$ in Lemma \ref{lem:deployment_efficient_concentration}, which holds with probability $1-\delta / 2$, for all $k\in[K]$ and $n\in[N]$,
    \begin{align*}
        V_1^*(s_1^{kn}|h_k) \leq V_1^k(s_1^{kn})
    \end{align*}
    where recall that $V_1^k$ is the function computed at iteration $k$ in Alg.\ref{alg:DERL_Deterministic_Policy_Algorithm} and $V_1^*(\cdot|h_k)=\EE[\sum_{h=1}^{h_k}r_h(s_h,a_h)|\pi^*_{[1:h_k]}]$ denote the optimal value function in the MDP truncated at layer $h_k$ and $\pi^*_{[1:h_k]}$ is the optimal policy in the truncated MDP.

    Besides, we also have:
    \begin{align*}
        \EE_{s_1\sim d_1}[V_1^*(s_1|h_k)-V^{\pi_k}(s_1|h_k)] \leq 2\beta\EE_{s_1,a_1,...,s_{h_k},a_{h_k}\sim \pi_k}[\sum_{h=1}^{h_k} \|\phi(s_h,a_h)\|_{(\Lambda^k_h)^{-1}}]
    \end{align*}
\end{lemma}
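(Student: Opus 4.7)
The plan is to establish the two bounds via the standard optimism-based analysis adapted to the layer-by-layer setting. Both parts rely on backward induction from step $h_k$ down to step $1$ in the truncated MDP, and the key leverage is the concentration event $\mathcal{E}_1$ from Lemma \ref{lem:deployment_efficient_concentration} together with the linear MDP structure in Assumption \ref{assump:linear_MDP}.

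For the optimism claim, I would fix $k$ and induct on $h$ from $h_k+1$ down to $1$. The base case $h=h_k+1$ is immediate since $V^k_{h_k+1}\equiv 0 \equiv V^*_{h_k+1}(\cdot\mid h_k)$. For the inductive step, write the true Bellman backup with respect to $V^k_{h+1}$ using the linear MDP form: $(P_h V^k_{h+1})(s,a) = \langle \phi(s,a), \int V^k_{h+1}(s')\,d\mu_h(s')\rangle$. Denote $w^{*,k}_h := \int V^k_{h+1}(s')\,d\mu_h(s')$; then $(w^k_h - w^{*,k}_h)^\top \phi(s,a)$ is exactly the quantity controlled in Lemma~\ref{lem:deployment_efficient_concentration} via the self-normalized bound, so by Cauchy–Schwarz in the $(\Lambda^k_h)^{-1}$ geometry plus bounds on $\|w^{*,k}_h\|$ (using $\|\mu_h\|\le \sqrt{d}$ and $\|V^k_{h+1}\|_\infty \le H$), we get $|(w^k_h - w^{*,k}_h)^\top \phi(s,a)| \le \beta \|\phi(s,a)\|_{(\Lambda^k_h)^{-1}} \le u^k_h(s,a)$ on $\mathcal{E}_1$, for an appropriate choice of $c_\beta$. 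Combined with the induction hypothesis $V^k_{h+1}\ge V^*_{h+1}(\cdot\mid h_k)$, this yields $Q^k_h(s,a) \ge r_h(s,a) + (P_h V^*_{h+1}(\cdot\mid h_k))(s,a) = Q^*_h(s,a\mid h_k)$ (before the $H$-truncation, and the truncation preserves the inequality since $Q^*_h \le H$). Taking $\max_a$ finishes the step.

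For the suboptimality bound, I use the standard telescoping/simulation argument. Since $\pi^k_h$ is greedy with respect to $Q^k_h$, we have $V^k_h(s) = Q^k_h(s,\pi^k_h(s))$. Expanding the definition of $Q^k_h$ and subtracting $Q^{\pi_k}_h(s,\pi^k_h(s))$,
\begin{align*}
V^k_h(s) - V^{\pi_k}_h(s) &\le (w^k_h - w^{\pi_k,k}_h)^\top \phi(s,\pi^k_h(s)) + u^k_h(s,\pi^k_h(s)) + (P_h(V^k_{h+1}-V^{\pi_k}_{h+1}))(s,\pi^k_h(s)),
\end{align*}
where $w^{\pi_k,k}_h := \int V^{\pi_k}_{h+1}(s')\,d\mu_h(s')$. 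Applying the same self-normalized concentration argument as in the optimism step, the first term is bounded by $u^k_h(s,\pi^k_h(s))$ on $\mathcal{E}_1$. Thus
\begin{align*}
V^k_h(s) - V^{\pi_k}_h(s) \le 2u^k_h(s,\pi^k_h(s)) + \mathbb{E}_{s'\sim P_h(\cdot\mid s,\pi^k_h(s))}[V^k_{h+1}(s') - V^{\pi_k}_{h+1}(s')].
\end{align*}
Unrolling this from $h=1$ to $h=h_k$ and using $V^k_{h_k+1}=V^{\pi_k}_{h_k+1}(\cdot\mid h_k)=0$, and taking expectation over $s_1\sim d_1$, gives
\begin{align*}
\mathbb{E}_{s_1\sim d_1}[V^k_1(s_1) - V^{\pi_k}_1(s_1\mid h_k)] \le 2\,\mathbb{E}_{\pi_k}\Big[\sum_{h=1}^{h_k} u^k_h(s_h,a_h)\Big].
\end{align*}
Combining with Part 1 (which gives $V^*_1(s_1\mid h_k) \le V^k_1(s_1)$) and bounding $u^k_h(s,a) \le \beta\|\phi(s,a)\|_{(\Lambda^k_h)^{-1}}$ yields the claimed inequality.

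The main technical subtlety is ensuring the concentration bound in Lemma~\ref{lem:deployment_efficient_concentration} applies uniformly to all the $V^k_{h+1}$ that actually arise in the algorithm — these are data-dependent, so one needs a covering/net argument over the parametric function class $\{s\mapsto \min(\max_a \langle w,\phi(s,a)\rangle + \beta\|\phi(s,a)\|_{M^{-1}}, H)\}$, exactly as in Jin et al.~(2019) and Wang et al.~(2020). This is standard and the absorbed constant $c$ (and hence $c_\beta$) is chosen accordingly; I would invoke it as a black box rather than re-derive. The truncation at $h_k$ only simplifies matters because the inner sums run to $h_k$ instead of $H$, and the zero-initialization at level $h_k+1$ makes the backward induction terminate cleanly.
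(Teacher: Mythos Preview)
Your proposal is correct and follows essentially the same route as the paper: establish the pointwise bound $|\phi(s,a)^\top w^k_h - (P_h V^k_{h+1})(s,a)| \le \beta\|\phi(s,a)\|_{(\Lambda^k_h)^{-1}}$ on $\mathcal{E}_1$, then do backward induction for optimism and a telescoping expansion along $\pi_k$ for the sub-optimality bound.

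One small slip to fix: in your sub-optimality decomposition you define $w^{\pi_k,k}_h := \int V^{\pi_k}_{h+1}\,d\mu_h$, but with that definition the identity you wrote double-counts the term $P_h(V^k_{h+1}-V^{\pi_k}_{h+1})$. What you need (and what ``the same self-normalized concentration argument as in the optimism step'' actually controls) is $w^{*,k}_h := \int V^k_{h+1}\,d\mu_h$; with that choice the decomposition $(w^k_h - w^{*,k}_h)^\top\phi + u^k_h + P_h(V^k_{h+1}-V^{\pi_k}_{h+1})$ is correct, the first term is bounded by $u^k_h$ via $\mathcal{E}_1$, and the recursion unrolls exactly as you describe. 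This is purely a notational fix; the logic is otherwise identical to the paper's.
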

\begin{proof}
    First of all, by applying Lemma \ref{lem:deployment_efficient_concentration} above, after a similar discussion to the proof of Lemma 3.1 in \citep{wang2020rewardfree}, we can show that 
    $$
    |\phi(s,a)\trans w^k_h -\sum_{s'\in \cS}P_h(s'|s,a)V^k_{h+1}(s')| \leq \beta \|\phi(s,a)\|_{(\Lambda^k_h)^{-1}},\quad\forall s\in\cS,a\in\cA, h\in[h_k]
    $$
    and the overestimation
    \begin{align*}
        V^*_h(s|h_k) \leq V^k_h(s),\quad\forall s\in\cS,h\in[h_k]
    \end{align*}
    As a result,
    \begin{align*}
        &\EE_{s_1\sim d_1}[V_1^*(s_1|h_k)-V^{\pi_k}(s_1|h_k)]\\
        \leq&\EE_{s_1\sim d_1}[V_1^k(s_1)-V^{\pi_k}(s_1|h_k)]\\
        =&\EE_{s_1\sim d_1,a_1\sim\pi_k}[Q_1^k(s_1,a_1)-Q^{\pi_k}(s_1,a_1|h_k)]\\
        =&\EE_{s_1\sim d_1,a_1\sim\pi_k}[\min\{(w_1^k)\trans\phi(s_1,a_1)+r_1(s_1,a_1)+u_1^k(s_1,a_1), H\} -r_1(s_1,a_1)-\sum_{s_2\in\cS}P_1(s_2|s_1,a_1)V^{\pi_k}_2(s_2|h_k))]\\
        = &\EE_{s_1\sim d_1,a_1\sim\pi_k}[\min\{(w_1^k)\trans\phi(s_1,a_1), H-r_1(s_1,a_1)-u_1^k(s_1,a_1)\}-\sum_{s_2\in\cS}P_1(s_2|s_1,a_1)V^k_2(s_2)]\\
        &+\EE_{s_1\sim d_1,a_1\sim \pi_k}[\sum_{s_2\in\cS}P_1(s_2|s_1,a_1)V^k_2(s_2)-\sum_{s_2\in\cS}P_1(s_2|s_1,a_1)V^{\pi_k}_2(s_2|h_k))]+\EE_{s\sim\mu}[u^k_1(s_1,a_1)]\\
        \leq& \EE_{s_1\sim d_1,a_1\sim\pi_k}[\sum_{s_2\in\cS}P_1(s_2|s_1,a_1)V_2^k(s_2)-\sum_{s_2\in\cS}P_1(s_2|s_1,a_1)V^{\pi_k}_2(s_2|h_k))]+2\EE_{s\sim\mu}[u^k_1(s_1,a_1)]\\
        =& \EE_{s_1\sim d_1,a_1,s_2,a_2\sim \pi_k}[V_2^k(s_2)-V^{\pi_k}_2(s_2|h_k)]+2\EE_{s\sim\mu}[u^k_1(s_1,a_1)]\\
        \leq & ...\\
        \leq& 2\EE_{s_1\sim d_1,a_1,...,s_{h_k},a_{h_k}\sim\pi_k}[\sum_{h=1}^{h_k} u^k_h(s_h,a_h)]\\
        \leq&2\beta\EE_{s_1\sim d_1,a_1,...,s_{h_k},a_{h_k}\sim\pi_k}[\sum_{h=1}^{h_k} \|\phi(s_h,a_h)\|_{(\Lambda^k_h)^{-1}}]
    \end{align*}
    where in the second inequality, we use the following fact
    \begin{align*}
        &\min\{(w_1^k)\trans\phi(s_1,a_1), H-r_1(s_1,a_1)-u_1^k(s_1,a_1)\}-\sum_{s_2\in\cS}P_1(s_2|s_1,a_1)V^k_2(s_2)\\
        =& \min\{(w_1^k)\trans\phi(s_1,a_1)-\sum_{s_2\in\cS}P_1(s_2|s_1,a_1)V^k_2(s_2), H-r_1^k(s_1,a_1)-u_1^k(s_1,a_1)-\sum_{s_2\in\cS}P_1(s_2|s_1,a_1)V^k_2(s_2)\}\\
        \leq& \min\{\beta\|\phi(s_1,a_1)\|_{(\Lambda_1^k)^{-1}}, H\}=u_1^k(s_1,a_1)
    \end{align*}
\end{proof}

Now we are ready to prove the following theorem restated from Theorem \ref{thm:deployment_complexity_given_reward} in a more detailed version, where we include the guarantees during the execution of the algorithm.
\begin{restatable}{theorem}{ThmKnownRewardConverge}[Deployment Complexity]\label{thm:deployment_complexity_given_reward_detailed_version}
    For arbitrary $\epsilon,\delta > 0$, and arbitrary $c_K \geq 2$, as long as $N \geq c \Big(c_K\frac{H^{4c_K+1}d^{3c_K}}{\epsilon^{2c_K}}\log^{2c_K}(\frac{Hd}{\delta\epsilon})\Big)^{\frac{1}{c_K-1}}$, 
    where $c$ is an absolute constant and independent with $c_K, d,H,\epsilon,\delta$, by choosing
    \begin{align}
        K=c_KdH+1.
    \end{align} 
    Algorithm \ref{alg:DERL_Deterministic_Policy_Algorithm} will terminate at iteration $k_H \leq K$ and return us a policy $\pi^{k_H}$, and with probability $1-\delta$,  (1) $\EE_{s_1\sim d_1}[V_1^*(s_1)-V_1^{\pi^{k_H}}(s_1)] \leq \epsilon$. 
    (2) for each $h\in[H-1]$, there exists an iteration $k_h$, such that $h_{k_h}=h$ but $h_{k_h+1}=h+1$, and $\pi_{k_h}$ is an $\epsilon$-optimal policy for the MDP truncated at step $h$;
\end{restatable}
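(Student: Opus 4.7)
The plan is to carry out the analysis on the intersection of two high-probability events: the LSVI-UCB concentration event $\cE_1$ from Lem.~\ref{lem:deployment_efficient_concentration}, and a Hoeffding-type event $\cE_2$ that controls the deviation between the empirical test statistic $\Delta_k$ and its conditional expectation $2\beta\,\EE_{\pi^k}[\sum_{h\le h_k}\|\phi(s_h,a_h)\|_{(\Lambda^k_h)^{-1}}]$. Since each summand is bounded by $1$ (as $\Lambda^k_h\succeq I$ and $\|\phi\|\le 1$), Hoeffding applied conditionally on the $\sigma$-algebra generated by iterations $1,\dots,k-1$ (so that $\Lambda^k_h$ is deterministic) gives a deviation of order $H\sqrt{\log(K/\delta)/N}$; a union bound over the at most $K$ iterations keeps the total failure probability below $\delta$, and the required $N$ for this step is only $\mathrm{poly}(H/\epsilon,\log(K/\delta))$, which is lower-order compared to the bound derived below.

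On $\cE_1\cap \cE_2$ I first prove the correctness claims (1) and (2). By Lem.~\ref{lem:overestimation_LBL_BatchLSVI_UCB}, $\EE_{s_1}[V^*_1(s_1|h_k)-V^{\pi^k}_1(s_1|h_k)]\le 2\beta\,\EE_{\pi^k}[\sum_{h\le h_k}\|\phi(s_h,a_h)\|_{(\Lambda^k_h)^{-1}}]$, and $\cE_2$ couples this expectation to $\Delta_k$ up to an additive slack of, say, $\epsilon/(4H)$. Hence whenever the algorithm advances from layer $h$ to $h+1$ at iteration $k$, we have $\Delta_k<\epsilon h/(2H)$, so $\pi^k$ is $\epsilon h/H\le \epsilon$-optimal for the $h$-truncated MDP, giving part (2). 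In particular, upon termination at $h_k=H$, $\pi^{k_H}$ is $\epsilon$-optimal for the full MDP, giving part (1).

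For the deployment complexity, I bound the number of \emph{stuck} iterations (those with $\Delta_k\ge \epsilon h_k/(2H)$) by $c_K dH$. At any such iteration, $\tfrac{1}{N}\sum_{n=1}^N\sum_{h\le h_k}\|\phi_h^{kn}\|_{(\Lambda^k_h)^{-1}}\ge \epsilon h_k/(4\beta H)$; pigeonhole over $h\in[h_k]$ produces some $h^\star$ with $\tfrac{1}{N}\sum_n\|\phi_{h^\star}^{kn}\|_{(\Lambda^k_{h^\star})^{-1}}\ge \epsilon/(4\beta H)$, and Cauchy--Schwarz upgrades this bound on the mean of $\ell_2$-norms to one on the mean of squared norms, namely $\mathrm{Tr}\bigl((\Lambda^k_{h^\star})^{-1}\Phi_{k,h^\star}\bigr)=\sum_n\|\phi_{h^\star}^{kn}\|^2_{(\Lambda^k_{h^\star})^{-1}}\ge N(\epsilon/(4\beta H))^2$. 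Applying Lem.~\ref{lem:finite_sample_elliptical_potential_lemma} per layer to the sequence $\{\Lambda^k_{h^\star}\}_k$ with threshold $\epsilon_{\mathrm{lem}}=(\epsilon/(4\beta H))^2$ bounds the number of such events at each layer by $c_K d$; summing over $H$ layers and adding the single terminal successful iteration gives $K\le c_KdH+1$.

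The last step is verifying the parameter budget. Substituting $\beta=O(dH\sqrt{\log(dH/(\delta\epsilon))})$ gives $\epsilon_{\mathrm{lem}}=\tilde{\Theta}(\epsilon^2/(d^2 H^4))$, which plugged into the sample-size requirement of Lem.~\ref{lem:finite_sample_elliptical_potential_lemma} yields exactly $N\ge c\bigl(c_K H^{4c_K+1}d^{3c_K}/\epsilon^{2c_K}\cdot \log^{2c_K}(\cdot)\bigr)^{1/(c_K-1)}$, matching the theorem. I expect the main obstacle to be the Cauchy--Schwarz conversion: the bonus is a \emph{sum of} $\ell_2$-norms whereas Lem.~\ref{lem:finite_sample_elliptical_potential_lemma} controls the \emph{sum of squared} norms, and trading one for the other costs a factor $\beta^2 H^2/\epsilon^2=\tilde{\Theta}(d^2 H^4/\epsilon^2)$ in the threshold; this quadratic blow-up is precisely what drives the high polynomial exponents on $d$ and $H$ in the sample-size requirement, and the careful bookkeeping needed to ensure that the parameters still meet the preconditions of Lem.~\ref{lem:finite_sample_elliptical_potential_lemma} (and that the Hoeffding slack is genuinely lower-order) is the most delicate part of the argument.
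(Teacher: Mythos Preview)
Your proposal is correct and follows essentially the same route as the paper: Lem.~\ref{lem:overestimation_LBL_BatchLSVI_UCB} together with a per-iteration Hoeffding bound handles parts (1)--(2), while pigeonhole over layers plus Cauchy--Schwarz converts the $\Delta_k$ test into the trace threshold $\epsilon^2/(16H^2\beta^2)$ needed for Lem.~\ref{lem:finite_sample_elliptical_potential_lemma}, and substituting $\beta=\tilde\Theta(dH)$ recovers the stated $N$. The one slip is in the final count: the algorithm uses $H$ successful ``advance'' iterations (one per layer), not a single terminal one, so a direct tally gives $c_KdH+H$ rather than $c_KdH+1$; the paper closes this $H{-}1$ gap by arguing that the first deployment already triggers the trace violation at every layer (so the per-layer counts $\zeta(\cdot,j)$ overlap there), a bookkeeping point that is inconsequential for the $O(dH)$ conclusion.
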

\begin{proof}
    As stated in the theorem, we use $k_h$ to denote the number of deployment after which the algorithm switch the exploration from layer $h$ to layer $h+1$, i.e. $h_{k_h}=h$ and $h_{k_h+1}=h+1$. According to the definition and the algorithm, we must have $\Delta_{k_h}\leq \frac{\epsilon h_{k_h}}{2H}$, and for arbitrary $k_{h-1}+1\leq k \leq k_{h}-1$, $\Delta_k \geq \frac{\epsilon h_k}{2H}$ (if $k_{h-1}+1>k_{h}-1$, then it means $\Delta_{k_{h-1}+1}$ is small enough and the algorithm directly switch the exploration to the next layer, and we can skip the discussion below ).
    Therefore, for arbitrary $k_{h-1}+1\leq k \leq k_{h}-1$, during the $k$-th deployment, there exists $h\in[h_k]$, such that,
    \begin{align*}
        \frac{\epsilon}{2H} \leq \frac{\Delta_k}{h_k} \leq \frac{2\beta}{N}\sum_{n=1}^N \|\phi(s_{h}^{kn},a_{h}^{kn})\|_{(\Lambda^k_{h})^{-1}} \leq 2\beta\sqrt{\frac{1}{N}\sum_{n=1}^N \|\phi(s_{h}^{kn},a_{h}^{kn})\|^2_{(\Lambda^k_{h})^{-1}}}
    \end{align*}
    where the second inequality is because the average is less than the maximum. The above implies that
    \begin{align}
        \frac{1}{N}\sum_{n=1}^N \|\phi(s_{h}^{kn},a_{h}^{kn})\|^2_{(\Lambda^k_{h})^{-1}} = \frac{1}{N}Tr\Big((\Lambda^k_{h})^{-1}\Big(\sum_{n=1}^N \phi(s_{h}^{kn},a_{h}^{kn})\phi(s_{h}^{kn},a_{h}^{kn})\trans\Big)\Big) \geq \frac{\epsilon^2}{16H^2\beta^2} \label{eq:violation_with_reward_layer_by_layer}
    \end{align}
    According to Lemma \ref{lem:finite_sample_elliptical_potential_lemma}, there exists constant $c,c'$, such that by choosing $N$ according to Eq.\eqref{eq:N_constraint_1} below, the event in Eq.\eqref{eq:violation_with_reward_layer_by_layer} will not happen more than $dc_K$ times at each layer $h\in[h_k]$.
    \begin{align}
        N\geq c\Big(c_K\frac{H^{4c_K+1}d^{3c_K}}{\epsilon^{2c_K}}\log^{2c_K}(\frac{Hd}{\epsilon\delta})\Big)^{\frac{1}{c_K-1}} \geq c'\Big(c_K\frac{H^{2c_K+1}d^{c_K}\beta^{2c_K}}{\epsilon^{2c_K}}\log^{c_K}(\frac{Hd\beta}{\epsilon})\Big)^{\frac{1}{c_K-1}} \label{eq:N_constraint_1}
    \end{align} 
    Recall that $\epsilon < 1$ and the covariance matrices in each layer is initialized by $I_{d\times d}$. Therefore, at the first deployment, although the computation of $\pi^1$ does not consider the layers $h\geq 2$, Eq.\eqref{eq:violation_with_reward_layer_by_layer} happens in each layer $h\in[H]$. We use $\zeta(k,j)$ to denote the total number of times events in Eq.\eqref{eq:violation_with_reward_layer_by_layer} happens for layer $j$ previous to deployment $k$, as a result, 
    \begin{align*}
        k_h \leq \sum_{j=1}^h \zeta(k_h,j) -(h-1)+h \leq c_Kdh+1,\quad \forall h \in [H]
    \end{align*}
    where we minus $h-1$ because such event must happen at the first deployment for each $h\in[H]$ and we should remove the repeated computation; and we add another $h$ back is because there are $h$ times we waste the samples (i.e. for those $k$ such that $\Delta_k < \frac{\epsilon h_k}{2H}$).
    Therefore, we must have $k_H \leq c_KdH+1=K$.
    
    Moreover, because at iteration $k=k_h$, we have $\Delta_{k_h} \leq \epsilon/2$, according to Hoeffding inequality, with probability $1-\delta/2$, for each deployment $k$, we must have:
    \begin{align}
        \EE_{s_1,a_1,...,s_{h_k},a_{h_k}\sim \pi_{k}}[2\beta\sum_{h=1}^{h_k} \|\phi(s_h,a_h)\|_{(\Lambda^k_h)^{-1}}] \leq \Delta_{k} + 2\beta H\sqrt{\frac{1}{2N}\log(\frac{K}{\delta})}\label{eq:given_rew_upper_bound_bonus}
    \end{align}
    Therefore, by choosing 
    \begin{align}
        N \geq \frac{8\beta^2H^2}{\epsilon^2}\log(\frac{K}{\delta})=O(\frac{d^2H^4}{\epsilon^2}\log^2(\frac{K}{\delta}))\label{eq:N_constraint_2}
    \end{align}
    we must have,
    \begin{align*}
        \EE_{s_1,a_1,...,s_{h},a_{h}\sim \pi_{k_h}}[2\beta\sum_{h'=1}^{h} \|\phi(s_{h'},a_{h'})\|_{(\Lambda^k_{h'})^{-1}}] \leq \Delta_{k_h} + \frac{\epsilon}{2} = \epsilon,\quad\forall h \in [H]
    \end{align*}
    Therefore, after a combination of Eq.\eqref{eq:N_constraint_1} and Eq.\eqref{eq:N_constraint_2}, we can conclude that, for arbitrary $c_K \geq 2$ , there exists absolute constant $c$, such that by choosing
    $$
    N\geq c\Big(c_K\frac{H^{4c_K+1}d^{3c_K}}{\epsilon^{2c_K}}\log^{2c_K}(\frac{Hd}{\epsilon\delta})\Big)^{\frac{1}{c_K-1}}
    $$
    the algorithm will stop at $k_H\leq K$, and with probability $1-\delta$ (on the event of $\mathcal{E}_1$ in \ref{lem:deployment_efficient_concentration} and the Hoeffding inequality above), we must have:
    \begin{align*}
        \EE_{s_1\sim d_1}[V_1^*(s_1)-V^{\pi_k}(s_1)] \leq \EE_{s_1,a_1,...,s_{h_k},a_{h_k}\sim \pi_{k+1}}[2\beta\sum_{h=1}^{h_k} \|\phi(s_h,a_h)\|_{(\Lambda^k_h)^{-1}}] \leq \epsilon
    \end{align*}
    and an additional benefits that for each $h\in[H-1]$, $\pi_{k_h}$ is an $\epsilon$-optimal policy at the MDP truncated at $h$ step, or equivalently,
    \begin{align}
        \EE_{s_1\sim d_1}[V_1^*(s_1|h)-V_1^{\pi_{k_h}}(s_1|h)] \leq \epsilon.
    \end{align}\label{eq:given_reward_optimal_policy_for_each_step}
\end{proof}
\subsection{Additional Safety Guarantee Brought with Layer-by-Layer Strategy}\label{appx:additional_safety_guarantee}
The layer-by-layer strategy brings another advantage that, if we finish the exploration of the first $h$ layers, based on the samples collected so far, we can obtain a policy $\hat\pi_{|h}$, which is an $\epsilon$-optimal in the MDP truncated at step $h$, or equivalently:
\begin{align*}
    J(\pi^*) - J(\hat\pi_{|h}) \geq H - h + O(\epsilon),\quad\forall h \in [H]
\end{align*}

We  formally state these guarantees in Theorem \ref{thm:deployment_complexity_given_reward_detailed_version} (a detailed version of Theorem \ref{thm:deployment_complexity_given_reward}), Theorem \ref{thm:deployment_complexity_reward_free_detailed_version} and Theorem \ref{thm:arbitary_policy_formal} (the formal version of Theorem \ref{thm:arbitary_policy_informal}). 
Such a property may be valuable in certain application scenarios. 
For example, in ``Safe DE-RL'', which we will discuss  in Appendix \ref{appx:discussion_on_Extended_DERL}, $\hat\pi_{|h}$ can be used as the pessimistic policy in Algorithm \ref{alg:Mixture_Policy_Strategy} and guarantee the monotonic policy improvement criterion.
Besides, in some real-world settings, we may hope to maintain a sub-optimal but gradually improving policy before we complete the execution of the entire algorithm.

If we replace Line 7-8 in LSVI-UCB (Algorithm 1) in \citet{jin2019provably} with Line 13-18 in our Algorithm \ref{alg:DERL_Deterministic_Policy_Algorithm}, the similar analysis can be done based on Lemma \ref{lem:finite_sample_elliptical_potential_lemma}, and the same $\Theta(dH)$ deployment complexity can be derived. 
However, the direct extension based on LSVI-UCB does not have the above safety guarantee. It is only guaranteed to return a near-optimal policy after $K=\Theta(dH)$ deployments, but if we interrupt the algorithm after some $k < K$ deployments, there is no guarantee about what the best possible policy would be based on the data collected so far.

\section{Reward-Free Deployment-Efficient RL with Deterministic Policies}\label{appx:RF_BE_with_Deter_Policies}
\subsection{Algorithm}
Similar to other algorithms in reward-free setting \citep{wang2020rewardfree, jin2020rewardfree}, our algorithm includes an ``Exploration Phase'' to uniformly explore the entire MDP, and a ``Planning Phase'' to return near-optimal policy given an arbitrary reward function. The crucial part is to collect a well-covered dataset in the online ``exploration phase'', which is sufficient for the batch RL algorithm \citep{antos2008learning, munos2008finite, chen2019information} in the offline ``planning phase'' to work.

Our algorithm in Alg.\ref{alg:DE_rl_layer_by_layer_reward_free_exploration} and Alg.\ref{alg:DE_rl_layer_by_layer_reward_free_planning} is based on \citep{wang2020rewardfree} and the layer-by-layer strategy. 
The main difference with Algorithm \ref{alg:DERL_Deterministic_Policy_Algorithm} is in two-folds. First, similar to \citep{wang2020rewardfree}, we replace the reward function with $1/H$ of the bonus term. Secondly, we use a smaller threshold for $\Delta_k$ comparing with Algorithm \ref{alg:DERL_Deterministic_Policy_Algorithm}.
\begin{algorithm}[ht]
    \textbf{Input}: Failure probability $\delta>0$, and target accuracy $\epsilon > 0$, $\beta\gets c_\beta \cdot dH\sqrt{\log(dH\delta^{-1}\epsilon^{-1})}$ for some $c_\beta>0$, total number of deployments $K$, batch size $N$\\
    Initialize $h_1=1$\\
    $D_1=\{\},D_2=\{\},...,D_H=\{\}$\\
    \For{$k=1,2,...,K$}{
        $Q^k_{h_k+1}(\cdot,\cdot)\gets 0$ and $V^k_{h_k+1}(\cdot)=0$\\
        \For{$h=h_k,h_k-1,...,1$}{
            $\Lambda^k_h \gets I+\sum_{\tau=1}^k\sum_{n=1}^N \phi_h^{\tau n}(\phi_h^{\tau n})\trans$\\
            $u_h^k(\cdot,\cdot)\gets \min\{\beta\cdot \sqrt{\phi(\cdot,\cdot)\trans(\Lambda^k_h)^{-1}\phi(\cdot,\cdot)}, H\}$\\
            Define the exploration-driven reward function $r^k_h(\cdot,\cdot)\gets u^k_h(\cdot,\cdot)/H$\\
            $w^k_h\gets (\Lambda^k_h)^{-1}\sum_{\tau=1}^{k-1}\sum_{n=1}^N\phi_h^{\tau n}\cdot V^k_{h+1}(s^{\tau n}_{h+1})$\\
            $Q^k_h(\cdot,\cdot)\gets \min\{(w^k_h)\trans\phi(\cdot,\cdot)+r^k_h(\cdot,\cdot)+u^k_h(\cdot,\cdot), H\}$ and $V^k_h(\cdot)=\max_{a\in\cA}Q^k_h(\cdot,a)$\\
            $\pi^k_h(\cdot)\gets\arg\max_{a\in\cA}Q^k_h(\cdot,a)$
        }
        Define $\pi^k = \pi^k_1 \circ \pi^k_2 \circ ... \pi^k_{h_k}\circ \mathrm{unif}_{[h_k+1:H]}$\\
        \For{$n=1,...,N$}{
            Receive initial state $s_1^{kn}\sim d_1$\\
            \For{$h=1,2,...,H$}{
                Take action $a^{kn}_h\gets \pi^k(s_h^{kn})$ and observe $s_{h+1}^{kn}\sim P_h(s^k_h, a_h^k)$\\
                $D_h = D_h \bigcup \{(s_h^{kn},a_h^{kn})\}$
            }
        }
        Compute $\Delta_k \gets \frac{2\beta}{N}\sum_{n=1}^N\sum_{h=1}^{h_k} \sqrt{\phi(s_h^{kn},a_h^{kn})\trans (\Lambda_h^k)^{-1} \phi(s_h^{kn},a_h^{kn})}$.\\
        \If{$\Delta_k < \frac{\epsilon h_k}{(4H+2)H}$}{
            \lIf{$h_k = H$}{
                \Return $D=\{D_1,D_2,...,D_H\}$
            }
            \lElse{
                $h_k \gets h_k + 1$
            }
        }
    }
    \caption{Reward-Free DE-RL with Deterministic Policies in Linear MDPs: Exploration Phase}\label{alg:DE_rl_layer_by_layer_reward_free_exploration}
\end{algorithm}

\begin{algorithm}[ht]
    \textbf{Input}: Horizon length $\tdh$; Dataset $\mathcal{D}=\{(s_h^{kn},a_h^{kn})_{k,n,h\in[K]\times[N]\times[\tdh]}\}$, reward function $r=\{r_h\}_{h\in[\tdh]}$\\
    $Q_{\tdh+1}(\cdot,\cdot)\gets 0$ and $V_{\tdh+1}(\cdot)\gets 0$\\
    \For{$h=\tdh,\tdh-1,...,1$}{
        $\Lambda_h\gets I+\sum_{\tau=1}^K\sum_{n=1}^N\phi(s_h^{\tau n}, a_h^{\tau n})\phi(s_h^{\tau n}, a_h^{\tau n})\trans$\\
        Let $u_h^{plan}(\cdot,\cdot)\gets \min\{\beta \sqrt{\phi(\cdot,\cdot)\trans(\Lambda_h)^{-1}\phi(\cdot,\cdot)}, \tdh\}$\\
        $w_h\gets (\Lambda_h)^{-1}\sum_{\tau=1}^K\phi(s_h^{\tau n},a_h^{\tau n})\cdot V_{h+1}(s_{h+1}^{\tau n}, a)$\\
        $Q_h(\cdot,\cdot)\gets \min\{w_h\trans \phi(\cdot,\cdot)+r_h(\cdot,\cdot)+u_h^{plan}(\cdot,\cdot), \tdh\}$ and $V_h(\cdot)=\max_{a\in\cA}Q_h(\cdot,a)$\\
        $\pi_h(\cdot)\gets \arg\max_{a\in\cA}Q_h(\cdot,a)$
    }
    \Return $\pi_{r|\tdh}=\{\pi_h\}_{h\in[\tdh]}$, $\hat V_1(\cdot,r|\tdh):=V_1(\cdot)$
    \caption{Reward-Free DE-RL with Deterministic Policies in Linear MDPs: Planning Phase}\label{alg:DE_rl_layer_by_layer_reward_free_planning}
\end{algorithm}

\subsection{Analysis for Alg \ref{alg:DE_rl_layer_by_layer_reward_free_exploration} and Alg \ref{alg:DE_rl_layer_by_layer_reward_free_planning}}
We first show a lemma adapted from Lemma \ref{lem:deployment_efficient_concentration} for Alg \ref{alg:DE_rl_layer_by_layer_reward_free_exploration}. Since the proof is similar, we omit it here.
\begin{lemma}[Concentration for DE-RL in Reward-Free Setting]\label{lem:deployment_efficient_concentration_reward_free}
    We use $\mathcal{E}_2$ to denote the event that, when running Algorithm \ref{alg:DE_rl_layer_by_layer_reward_free_exploration}, the following inequality holds for all $k\in[K]$ and $h\in[h_k]$ and all $V=V^k_{h+1}$ occurs in Alg \ref{alg:DE_rl_layer_by_layer_reward_free_exploration} or $V=V_h$ occurs in Alg \ref{alg:DE_rl_layer_by_layer_reward_free_planning}:
    \begin{align*}
        \Big\|\sum_{\tau=1}^{k-1}\sum_{n=1}^N\phi_h^{\tau n}\Big(V(s_{h+1}^{\tau n})-\sum_{s'\in \cS} P_h(s'|s_{h}^{\tau n},a_{h}^{\tau n})V(s')\Big)\Big\|_{(\Lambda^k_h)^{-1}} \leq c\cdot dH\sqrt{\log(dKNH/\delta)}
    \end{align*}
    Under Assumption \ref{assump:linear_MDP}, there exists some absolute constant $c\geq 0$, such that $P(\mathcal{E}_2)\geq 1-\delta/2$.
\end{lemma}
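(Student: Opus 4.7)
The plan is to follow the standard self-normalized martingale concentration argument of \citet{jin2019provably} (their Lemma B.3) combined with a covering argument over the class of value functions that can arise in both Algorithms~\ref{alg:DE_rl_layer_by_layer_reward_free_exploration} and \ref{alg:DE_rl_layer_by_layer_reward_free_planning}, as in \citet{wang2020rewardfree} (their Lemma~A.1). First, I would fix an arbitrary deterministic function $V:\cS\to[0,H]$ independent of the data. For each $(k,h)$, the sequence $\phi_h^{\tau n}\bigl(V(s_{h+1}^{\tau n})-\EE[V(s_{h+1}^{\tau n})\mid s_h^{\tau n},a_h^{\tau n}]\bigr)$ is a vector-valued martingale difference sequence with norms bounded by $2H$. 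The self-normalized Hoeffding inequality (Lemma~D.4 of \citet{jin2019provably}) then yields, with probability at least $1-\delta'$,
\begin{equation*}
\Bigl\|\sum_{\tau=1}^{k-1}\sum_{n=1}^{N}\phi_h^{\tau n}\bigl(V(s_{h+1}^{\tau n})-\EE[V(s_{h+1}^{\tau n})\mid s_h^{\tau n},a_h^{\tau n}]\bigr)\Bigr\|_{(\Lambda_h^k)^{-1}} \leq c_0 H\sqrt{d\log\bigl((KN+1)/\delta'\bigr)}.
\end{equation*}

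The main obstacle is that the $V$ appearing in the lemma is data-dependent, so this pointwise bound cannot be applied directly. To handle this, I would characterize the function class of candidate $V$'s: both $V^k_{h+1}$ in the exploration phase and $V_h$ in the planning phase have the parametric form $V(s)=\max_{a\in\cA}\min\bigl\{w\trans\phi(s,a)+r(s,a)+\beta\sqrt{\phi(s,a)\trans\Lambda^{-1}\phi(s,a)},\,H\bigr\}$, where in the exploration phase $r$ is itself $u^k_h/H$ (which is a capped elliptical bonus indexed by the same $\Lambda$), and in the planning phase $r$ is fixed. Standard bounds give $\|w\|\le \sqrt{KN\cdot d}\cdot H$ and $\lambda_{\min}(\Lambda)\ge 1$, so the parameters live in a compact set of diameter $\mathrm{poly}(d,H,K,N)$. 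Applying the covering-number lemma (Lemma~D.6 of \citet{jin2019provably}, adapted as in Lemma~B.4/4.4 of \citet{wang2020rewardfree} to absorb the additional bonus term into the reward slot), the $\epsilon_0$-cover $\cN_{\epsilon_0}$ in the sup-norm satisfies $\log|\cN_{\epsilon_0}|=\tilde O\bigl(d^2\log(1/\epsilon_0)+d\log(\beta/\epsilon_0)\bigr)$.

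The last step is a union bound: apply the self-normalized inequality to every element of the $\epsilon_0$-cover with $\delta'=\delta/(2KH|\cN_{\epsilon_0}|)$ and take $\epsilon_0=1/(KN)$. For any data-dependent $V$ arising in either algorithm, let $\tilde V$ be the closest element in the cover; the approximation error contributes an extra $O(\sqrt{KN}\cdot\epsilon_0\cdot \sqrt{KN})=O(1)$ term inside the weighted norm, which is dominated by the main bound. Plugging in the covering entropy, the resulting bound becomes $c\cdot dH\sqrt{\log(dKNH/\delta)}$ uniformly over $k\in[K]$, $h\in[h_k]$, and every $V=V^k_{h+1}$ or $V=V_h$ that can ever appear, which is exactly the claim $P(\mathcal{E}_2)\ge 1-\delta/2$. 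The only genuinely new element relative to Lemma~\ref{lem:deployment_efficient_concentration} is that the class of $V$'s now also needs to accommodate the planning-phase value functions, but since these share the same parametric form (with $\tilde h\le H$ in place of the cap $H$), the same covering bound applies, and no additional idea is required.
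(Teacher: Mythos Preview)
Your proposal is correct and follows essentially the same approach as the paper, which simply defers to Lemma~3.1 of \citet{wang2020rewardfree} (itself an adaptation of Lemma~B.3 of \citet{jin2019provably}) and notes that the only differences are the inner batch summation $\sum_{n=1}^N$ and the truncation at $h_k$. Your explicit identification of the parametric form covering both the exploration-phase $V^k_{h+1}$ (with bonus-based reward $u^k_h/H$) and the planning-phase $V_h$, together with the standard covering-number plus self-normalized concentration argument, is exactly what that deferred proof unpacks to.
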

\begin{proof}
    The proof is almost identical to Lemma 3.1 in \citep{wang2020rewardfree}, so we omit it here. The only difference is that we have an inner summation from $n=1$ to $N$ and we truncate the horizon at $h_k$ in iteration $k$.
\end{proof}
Next, we prove a lemma simlar to Lemma \ref{lem:overestimation_LBL_BatchLSVI_UCB} based on Lemma \ref{lem:deployment_efficient_concentration_reward_free}.
\begin{lemma}[Overestimation]\label{lem:overestimation}
    On the event $\mathcal{E}_2$ in Lemma \ref{lem:deployment_efficient_concentration_reward_free}, which holds with probability $1-\delta/2$, in Algorithm \ref{alg:DE_rl_layer_by_layer_reward_free_exploration}, for all $k\in[K]$ and $n\in[N]$, 
    \begin{align*}
        V^*_1(s_1^{kn},r^k|h_k) \leq V_1^k(s_1^{kn})
    \end{align*}
    and
    \begin{align*}
        \EE_{s_1\sim d_1}[V_1^*(s, r^k|h_k)] \leq \EE_{s_1\sim d_1}[V_1^k(s)] \leq (2H+1)\EE_{s_1\sim d_1}[V^{\pi_k}(s, r^k|h_k)]
    \end{align*}
\end{lemma}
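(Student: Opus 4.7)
My plan is to follow the same two-step template that succeeded for Lemma~\ref{lem:overestimation_LBL_BatchLSVI_UCB}, but carefully track how the bonus appears both as a penalty \emph{and} as the reward $r^k_h = u^k_h/H$. First, on the event $\mathcal{E}_2$ of Lemma~\ref{lem:deployment_efficient_concentration_reward_free}, I would repeat the LSVI--UCB concentration argument of \citep{wang2020rewardfree} to conclude, for every $h \in [h_k]$ and every $(s,a)$,
\begin{align*}
    \bigl|\phi(s,a)\trans w^k_h - \textstyle\sum_{s'} P_h(s'|s,a)\,V^k_{h+1}(s')\bigr| \;\leq\; \beta\,\|\phi(s,a)\|_{(\Lambda^k_h)^{-1}} \;=\; u^k_h(s,a).
\end{align*}
This is the only place the concentration lemma is used; the rest is deterministic given the event.

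Next, the \textbf{overestimation} inequality $V_1^*(s_1^{kn}, r^k \mid h_k) \leq V_1^k(s_1^{kn})$ follows by backward induction on $h$ from $h_k+1$ down to $1$. The base case is trivial since both sides are $0$ at $h_k+1$. For the inductive step, at each $h$ the Bellman target for $V^*_h(\cdot, r^k|h_k)$ uses reward $r^k_h$, while $Q^k_h = \min\{(w^k_h)\trans \phi + r^k_h + u^k_h, H\}$ adds the penalty bonus $u^k_h$. Combining this extra bonus with the one-step concentration above absorbs the regression error $|\phi\trans w^k_h - P_h V^k_{h+1}|$, giving $Q^k_h \geq Q^*_h(\cdot, r^k|h_k)$ pointwise, and then $V^k_h \geq V^*_h$ by taking the greedy action.

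For the \textbf{$(2H+1)$ upper bound}, I would run the same telescoping comparison between $V^k_1$ and $V^{\pi_k}(\cdot, r^k|h_k)$ that appears at the end of Lemma~\ref{lem:overestimation_LBL_BatchLSVI_UCB}. Because $Q^k_h$ contains the extra term $u^k_h$ beyond the Bellman backup with respect to $r^k_h$, each layer contributes at most $2 u^k_h$ (one copy from the explicit penalty, one from the regression slack), yielding
\begin{align*}
    \EE_{s_1 \sim d_1}\bigl[V^k_1(s_1) - V^{\pi_k}(s_1, r^k \mid h_k)\bigr] \;\leq\; 2\,\EE_{\pi_k}\Bigl[\textstyle\sum_{h=1}^{h_k} u^k_h(s_h,a_h)\Bigr].
\end{align*}
The key trick specific to the reward-free setting is then the identity $r^k_h = u^k_h / H$, which rewrites the right-hand side as $2H \cdot \EE_{\pi_k}[\sum_h r^k_h(s_h,a_h)] = 2H \cdot V^{\pi_k}(s_1, r^k|h_k)$. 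Adding $V^{\pi_k}$ to both sides yields the claimed factor $2H+1$.

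The only subtle point, and the one I would write out most carefully, is ensuring that the telescoping argument remains valid in the presence of the $\min\{\cdot, H\}$ truncation on $Q^k_h$ and the truncation at horizon $h_k$: the same inequality used in the proof of Lemma~\ref{lem:overestimation_LBL_BatchLSVI_UCB}, namely $\min\{(w^k_h)\trans\phi, H - r^k_h - u^k_h\} - P_h V^k_{h+1} \leq u^k_h$, handles this cleanly since $r^k_h, u^k_h \in [0,H]$. Apart from that, the argument is routine once the structural choice $r^k_h = u^k_h/H$ is exploited.
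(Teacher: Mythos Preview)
Your proposal is correct and follows essentially the same route as the paper's proof: concentration gives the one-step bound, backward induction yields overestimation, and the telescoping comparison produces $\EE[V^k_1 - V^{\pi_k}] \le 2\,\EE_{\pi_k}[\sum_h u^k_h] = 2H\,\EE[V^{\pi_k}]$, after which adding $V^{\pi_k}$ to both sides gives the $(2H+1)$ factor. The only cosmetic slip is writing $\beta\|\phi\|_{(\Lambda^k_h)^{-1}} = u^k_h$ rather than $u^k_h = \min\{\beta\|\phi\|_{(\Lambda^k_h)^{-1}}, H\}$; since you already handle the $\min\{\cdot,H\}$ truncation explicitly in the telescoping step, this does not affect the argument.
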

\begin{proof}
    We first prove the overestimation inequality.
    \paragraph{Overestimation}
    First of all, similar to the proof of Lemma 3.1 in \citep{wang2020rewardfree}, on the event of $\mathcal{E}_2$ defined in Lemma \ref{lem:deployment_efficient_concentration}, which holds with probability $1-\delta/2$, we have:
    \begin{align}
        |\phi(s,a)\trans w_h^k - \sum_{s'\in\cS}P_h(s'|s,a)V^k_{h+1}(s')|\leq \beta\cdot\|\phi(s,a)\|_{(\Lambda^k_h)^{-1}},\quad \forall s,a\in\cS\times\cA,~k\in[K],~h\in[h_k]\label{eq:estimation_bias}
    \end{align}
    Then, we can use induction to show the overestimation. For $h=h_k+1$, we have:
    \begin{align*}
        0=V^*_{h_k+1}(s,r^k|h_k)\leq V^k_{h_k+1}(s)=0,\quad\forall s\in\cS
    \end{align*}
    Suppose for some $h\in[h_k]$, we have
    \begin{align*}
        V^*_{h+1}(s,r^k|h_k)\leq V^k_{h+1}(s),\quad\forall s\in\cS
    \end{align*}
    Then, $\forall s\in\cS$, we have
    \begin{align*}
        V_h^*(s,r^k|h_k)=&\max_a(r_h^k(s,a)+\sum_{s'\in\cS}P_h(s'|s,a)V^*_{h+1}(s',r^k|h_k))\\
        \leq& \max_a (r_h^k(s,a)+\sum_{s'\in\cS}P_h(s'|s,a)V^k_{h+1}(s')),H\\
        \leq& \min\{\max_a (r_h^k(s,a)+\phi(s,a)\trans w^k_h+\beta\|\phi(s,a)\|_{(\Lambda^k_h)^{-1}}), H\}\\
        =& \max_a\min\{r_h^k(s,a)+\phi(s,a)\trans w^k_h+\beta\|\phi(s,a)\|_{(\Lambda^k_h)^{-1}}, H\}\\
        =&V^k_h(s)
    \end{align*}
    where in the last inequality, we apply Eq.\eqref{eq:estimation_bias}.
    \paragraph{Relationship between $V_1^k(\cdot)$ and $V^{\pi_k}(\cdot,r^k)$}
    \begin{align*}
        &\EE_{s_1\sim d_1}[V_1^k(s_1)-V^{\pi_k}(s_1, r^k|h_k)]\\
        =&\EE_{s_1\sim d_1,a_1\sim\pi_k}[Q_1^k(s_1,a_1)-Q^{\pi_k}(s_1,a_1,r^k|h_k)]\\
        =&\EE_{s_1\sim d_1,a_1\sim\pi_k}[\min\{(w_1^k)\trans\phi(s_1,a_1)+r^k_1(s_1,a_1)+u_1^k(s_1,a_1), H\}\\
        \quad & -r^k_1(s_1,a_1)-\sum_{s_2\in\cS}P_1(s_2|s_1,a_1)V^{\pi_k}_2(s_2,r^k|h_k))]\\
        \leq&\EE_{s_1\sim d_1,a_1\sim\pi_k}[\min\{(w_1^k)\trans\phi(s_1,a_1), H-r_1^k(s_1,a_1)-u_1^k(s_1,a_1)\}-\sum_{s_2\in\cS}P_1(s_2|s_1,a_1)V^k_2(s_2)]\\
        &+\EE_{s_1\sim d_1,a_1\sim \pi_k}[\sum_{s_2\in\cS}P_1(s_2|s_1,a_1)V^k_2(s_2)-\sum_{s_2\in\cS}P_1(s_2|s_1,a_1)V^{\pi_k}_2(s_2,r^k|h_k))]+\EE_{s\sim d_1}[u^k_1(s_1,a_1)]\\
        \leq& \EE_{s_1\sim d_1,a_1\sim\pi_k}[\sum_{s'\in\cS}P_1(s_2|s_1,a_1)V_2^k(s_2)-\sum_{s_2\in\cS}P_1(s_2|s_1,a_1)V^{\pi_k}_2(s_2,r^k|h_k))]+2\EE_{s\sim d_1}[u^k_1(s_1,a_1)]\\
        =& \EE_{s_1\sim d_1,a_1,s_2,a_2\sim \pi_k}[V_2^k(s_2)-V^{\pi_k}_2(s_2,r^k|h_k)]+2\EE_{s\sim d_1}[u^k_1(s_1,a_1)]\\
        \leq & ...\\
        \leq& 2\EE_{s_1\sim d_1,a_1,...,s_{h_k},a_{h_k}\sim\pi_k}[\sum_{h=1}^H u_h^k(s_h,a_h)]\\
        =& 2H\EE_{s_1\sim d_1}[V^{\pi_k}(s,r^k|h_k)]
    \end{align*}
    where in the first inequality, we add and subtract $\sum_{s_2\in\cS}P_1(s_2|s_1,a_1)V_2^k(s_2)$, and in the second inequality, we use the following fact
    \begin{align*}
        &\min\{(w_1^k)\trans\phi(s_1,a_1), H-r_1^k(s_1,a_1)-u_1^k(s_1,a_1)\}-\sum_{s_2\in\cS}P_1(s_2|s_1,a_1)V^k_2(s_2)\\
        =& \min\{(w_1^k)\trans\phi(s_1,a_1)-\sum_{s_2\in\cS}P_1(s_2|s_1,a_1)V^k_2(s_2), H-r_1^k(s_1,a_1)-u_1^k(s_1,a_1)-\sum_{s_2\in\cS}P_1(s_2|s_1,a_1)V^k_2(s_2)\}\\
        \leq& \min\{\beta\|\phi(s_1,a_1)\|_{(\Lambda_1^k)^{-1}}, H\}=u_1^k(s_1,a_1)
    \end{align*}
\end{proof}
Next, we provide some analysis for Algorithm \ref{alg:DE_rl_layer_by_layer_reward_free_planning}, which will help us to understand what we want to do in Algorithm \ref{alg:DE_rl_layer_by_layer_reward_free_exploration}
\begin{lemma}\label{lem:analysis_of_reward_free_planning_alg}
    On the event $\mathcal{E}_2$ in Lemma \ref{lem:deployment_efficient_concentration_reward_free}, which holds with probability $1-\delta/2$, if we assign $\tdh=h_k$ in Algorithm \ref{alg:DE_rl_layer_by_layer_reward_free_planning} and assign $\mathcal{D}$ to be the samples collected till deployment $k$, i.e. $\mathcal{D}=\{(s_h^{kn},a_h^{kn})_{k,n,h\in[K]\times[N]\times[h_k]}\}$, then for arbitrary reward function $r$ satisfying the linear Assumption \ref{assump:linear_MDP}, the policy $\pi_{r|\tdh}$ returned by Alg \ref{alg:DE_rl_layer_by_layer_reward_free_planning} would satisfy:

    \begin{align}
        \EE_{s_1\sim d_1}[V_1^{\pi_{r|\tilde h}^*}(s_1,r|\tilde h)-V_1^{\pi_{r|\tilde h}}(s_1,r|\tilde h)]\leq 2H\EE_{s_1\sim d_1}[V_1^{\pi_{r^{plan}|\tdh}^*}(s_1,r^{plan}|\tilde h)]\label{eq:reward_free_subopt_gap}
    \end{align}
    where $r^{plan}:=u^{plan}/\tdh$.
\end{lemma}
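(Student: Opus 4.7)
\medskip

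\noindent\textbf{Proof plan.} The argument mirrors the structure of Lemma~\ref{lem:overestimation}: first establish overestimation of $V_1^{\pi^*_{r|\tilde h}}$ by the planner's optimistic value $V_1$, then telescope to bound the suboptimality of the returned policy by the expected sum of planning bonuses, and finally recognize that the bonus is a scaled version of $r^{plan}$ so that its expectation is dominated by the $r^{plan}$-optimal value.

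\medskip

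\noindent\textbf{Step 1 (pointwise Bellman error bound).} Working on the event $\mathcal{E}_2$ from Lemma~\ref{lem:deployment_efficient_concentration_reward_free} (applied to the $V$'s produced by Alg.~\ref{alg:DE_rl_layer_by_layer_reward_free_planning} with horizon $\tilde h$), a direct repetition of the calculation used in Lemma~\ref{lem:overestimation} yields, for every $h\in[\tilde h]$ and $(s,a)\in\cS\times\cA$,
\begin{equation*}
    |\phi(s,a)^\top w_h - (P_h V_{h+1})(s,a)| \;\le\; \beta\,\|\phi(s,a)\|_{(\Lambda_h)^{-1}} \;=\; u_h^{plan}(s,a).
\end{equation*}
Combined with the truncation at $\tilde h$, this is the only probabilistic ingredient needed.

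\medskip

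\noindent\textbf{Step 2 (overestimation).} I will prove $V_h^{\pi^*_{r|\tilde h}}(s, r|\tilde h)\le V_h(s)$ for all $s$ by backward induction on $h$. The base case $h=\tilde h+1$ is trivial since both sides are zero. For the inductive step, use the Bellman optimality equation for $V_h^{\pi^*_{r|\tilde h}}$ together with the inductive hypothesis $V_{h+1}^{\pi^*_{r|\tilde h}}\le V_{h+1}$ and the pointwise bound from Step~1 to absorb the regression error into the $u_h^{plan}$ bonus, then compare to the planner's $\min\{\cdot,\tilde h\}$-truncated definition of $Q_h$.

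\medskip

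\noindent\textbf{Step 3 (telescoping the suboptimality).} Taking expectations under $d_1$ and writing $V_1(s_1)-V_1^{\pi_{r|\tilde h}}(s_1,r|\tilde h)$ as a telescoping sum along the trajectory induced by $\pi_{r|\tilde h}$, the same manipulation that appears in the second half of Lemma~\ref{lem:overestimation} (adding and subtracting $\sum_{s'} P_h(s'|s,a) V_{h+1}(s')$ and using Step~1 twice per layer, once inside the $\min\{\cdot,\tilde h\}$ and once as an upper bound) gives
\begin{equation*}
    \mathbb{E}_{s_1\sim d_1}\bigl[V_1(s_1)-V_1^{\pi_{r|\tilde h}}(s_1,r|\tilde h)\bigr]
    \;\le\; 2\,\mathbb{E}_{\pi_{r|\tilde h}}\!\Bigl[\sum_{h=1}^{\tilde h} u_h^{plan}(s_h,a_h)\Bigr].
\end{equation*}

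\medskip

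\noindent\textbf{Step 4 (pass to $r^{plan}$).} Since $r^{plan}_h = u_h^{plan}/\tilde h$, the right-hand side equals $2\tilde h\cdot \mathbb{E}_{s_1}\!\bigl[V_1^{\pi_{r|\tilde h}}(s_1, r^{plan}|\tilde h)\bigr]$, which is upper-bounded by $2\tilde h\cdot \mathbb{E}_{s_1}\!\bigl[V_1^{\pi^*_{r^{plan}|\tilde h}}(s_1, r^{plan}|\tilde h)\bigr]$ by optimality of $\pi^*_{r^{plan}|\tilde h}$. Combining with Step~2 gives
\begin{equation*}
    \mathbb{E}_{s_1}[V_1^{\pi^*_{r|\tilde h}}(s_1,r|\tilde h) - V_1^{\pi_{r|\tilde h}}(s_1,r|\tilde h)]
    \;\le\; 2\tilde h\,\mathbb{E}_{s_1}[V_1^{\pi^*_{r^{plan}|\tilde h}}(s_1, r^{plan}|\tilde h)],
\end{equation*}
and using $\tilde h\le H$ yields the stated bound. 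The only non-routine step is Step~3; no serious obstacle is expected because the planner's LSVI-UCB updates are of exactly the same form as in Alg.~\ref{alg:DERL_Deterministic_Policy_Algorithm}, so the telescoping bookkeeping is identical up to replacing the true reward $r$ there by the arbitrary linear reward $r$ here.
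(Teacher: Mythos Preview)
Your proposal is correct and follows essentially the same approach as the paper: overestimation via the pointwise Bellman error bound on $\mathcal{E}_2$, telescoping along $\pi_{r|\tilde h}$ to obtain the $2\sum_h u_h^{plan}$ bound, then rewriting $u^{plan}=\tilde h\cdot r^{plan}$ and invoking optimality of $\pi^*_{r^{plan}|\tilde h}$ together with $\tilde h\le H$. One cosmetic slip: in Step~1 you wrote $\beta\|\phi(s,a)\|_{(\Lambda_h)^{-1}}=u_h^{plan}(s,a)$, but $u_h^{plan}$ is the \emph{minimum} with $\tilde h$; this does not affect the argument since the overestimation and telescoping steps use the truncated form exactly as in Lemma~\ref{lem:overestimation}.
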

\begin{proof}
By applying the similar technique in the analysis of $\EE_{s_1\sim d_1}[V_1^k(s_1)-V^{\pi_k}(s_1,r^k|\tilde h)]$ in Lemma \ref{lem:overestimation} after replacing $r^k$ with $r$, we have:
\begin{align*}
    \EE_{s_1\sim d_1}[V_1^{\pi^*_{r|\tilde h}}(s_1,r|\tilde h)-V_1^{\pi_{r|\tilde h}}(s_1,r|\tilde h)]\leq& \EE_{s_1\sim d_1}[\hat V_1(s_1,r|\tilde h)-V_1^{\pi_{r|\tilde h}}(s_1,r|\tilde h)]\leq 2\EE_{s_1\sim d_1}[V_1^{\pi_{r|\tilde h}}(s_1,u^{plan})]
\end{align*}
where $\hat V_1$ denotes the value function returned by Alg \ref{alg:DE_rl_layer_by_layer_reward_free_planning}
Besides,
\begin{align*}
    2\EE_{s_1\sim d_1}[V_1^{\pi_{r|\tdh}}(s_1,u^{plan})]=& 2\tdh\EE_{s_1\sim d_1}[V_1^{\pi_{r|\tdh}}(s_1,r^{plan}|\tdh)]\leq 2H\EE_{s_1\sim d_1}[V_1^{\pi_{r^{plan}|\tdh}^*}(s_1,r^{plan}|\tdh)]
\end{align*}

then, we finish the proof.
\end{proof}

From Eq.\eqref{eq:reward_free_subopt_gap} in Lemma \ref{lem:analysis_of_reward_free_planning_alg}, we can see that, after exploring with Algorithm \ref{alg:DE_rl_layer_by_layer_reward_free_exploration}, the sub-optimality gap between $\pi^*$ and $\pi$ returned by Alg.\ref{alg:DE_rl_layer_by_layer_reward_free_planning} can be bounded by the value of the optimal policy w.r.t. $r^K$, which we will further bound in the next theorem.

Now we are ready to prove the main theorem.
\begin{restatable}{theorem}{ThmRewardFreeConvergence}\label{thm:deployment_complexity_reward_free_detailed_version}
    For arbitrary $\epsilon,\delta > 0$, by assigning $K=c_KdH+1$ for some $c_K \geq 2$, 
    as long as
    \begin{align}
        N \geq c \Big(c_K\frac{H^{6c_K+1}d^{3c_K}}{\epsilon^{2c_K}}\log^{2c_K}(\frac{Hd}{\delta\epsilon})\Big)^{\frac{1}{c_K-1}}\Big\}\label{eq:lower_bound_on_N_reward_free}
    \end{align}
    where $c$ is an absolute constant and independent with $c_K, d,H,\epsilon,\delta$, then, Alg \ref{alg:DE_rl_layer_by_layer_reward_free_exploration} will terminate at iteration $k_H \leq K$ and return us a dataset $D=\{D_1,D_2,...,D_H\}$, such that given arbitrary reward function $r$ satisfying Assumption \ref{assump:linear_MDP}, by running Alg \ref{alg:DE_rl_layer_by_layer_reward_free_planning} with $D$ and $r$, with probability $1-\delta$, we can obtain a policy $\pi_r$ satisfying $\EE_{s\sim d_1}[V_1^*(s,r)-V^{\pi}(s,r)] \leq \epsilon$. 
    
    Moreover, for each $h\in[H-1]$, there exists iteration $k_h$, such that $h_{k_h}=h$ but $h_{k_h+1}=h+1$, and if we run Alg \ref{alg:DE_rl_layer_by_layer_reward_free_planning} with the reward function $r$ and the dataset Alg \ref{alg:DE_rl_layer_by_layer_reward_free_exploration} has collected till $k=k_h$, we can obtain a policy $\pi_{r|h}$, which is an $\epsilon$-optimal policy for MDP truncated at step $h$.
\end{restatable}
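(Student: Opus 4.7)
The plan is to follow the structure of the proof of Theorem \ref{thm:deployment_complexity_given_reward_detailed_version}, but now chain through the reward-free machinery in Lemmas \ref{lem:deployment_efficient_concentration_reward_free}, \ref{lem:overestimation}, and \ref{lem:analysis_of_reward_free_planning_alg}. Three factors of $H$ must be propagated: one from the planning conversion ($2H$ in Eq.\eqref{eq:reward_free_subopt_gap}), one from the overestimation slack ($2H+1$ in Lemma \ref{lem:overestimation}), and one from the rescaling $r^k_h = u^k_h/H$. This is precisely why the breaking threshold $\epsilon h_k/((4H+2)H)$ is tighter than in Algorithm \ref{alg:DERL_Deterministic_Policy_Algorithm} and why the $H$ exponent in \eqref{eq:lower_bound_on_N_reward_free} increases from $H^{4c_K+1}$ to $H^{6c_K+1}$.

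First I would bound the deployment complexity. Between any two consecutive layer transitions, $\Delta_k \geq \epsilon h_k/((4H+2)H)$; by Cauchy--Schwarz (exactly as in the known-reward proof), this forces the existence of some $h'\in[h_k]$ with $\tfrac{1}{N}\mathrm{Tr}\bigl((\Lambda^k_{h'})^{-1}\sum_{n}\phi^{kn}_{h'}(\phi^{kn}_{h'})\trans\bigr) \geq c'\epsilon^2/(H^4\beta^2)$ for an absolute constant $c'$. Applying Lemma \ref{lem:finite_sample_elliptical_potential_lemma} with this threshold and with $N$ satisfying \eqref{eq:lower_bound_on_N_reward_free}, the number of such ``wasted'' deployments per layer is at most $c_K d$, so the algorithm advances through all $H$ layers within $K=c_K dH+1$ deployments, identifying the transition indices $k_1<k_2<\dots<k_H$. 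Next, for each transition $k_h$ I invoke Hoeffding (union-bounded over the $K$ deployments) to convert the empirical $\Delta_{k_h}$ to an expected-bonus bound, yielding $\EE_{s_1\sim d_1}[V^{\pi_{k_h}}_1(s_1,r^{k_h}\mid h)] \leq \epsilon/(2H(2H+1))$ after absorbing the $1/H$ rescaling hidden in $r^{k_h}=u^{k_h}/H$. Combining with the second inequality of Lemma \ref{lem:overestimation} then gives $\EE_{s_1}[V_1^{\pi^*_{r^{k_h}\mid h}}(s_1,r^{k_h}\mid h)] \leq \epsilon/(2H)$.

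To convert this into a guarantee for an arbitrary target reward $r$, I apply Lemma \ref{lem:analysis_of_reward_free_planning_alg} with $\tdh=h$ and the dataset collected through iteration $k_h$. That lemma bounds the planning sub-optimality by $2H\,\EE_{s_1}[V_1^{\pi^*_{r^{plan}\mid h}}(s_1,r^{plan}\mid h)]$, where $r^{plan}_{h'} = u^{plan}_{h'}/h$ uses the planning-time covariance $\Lambda_{h'}$. The critical observation is that $\Lambda_{h'}$ aggregates every sample collected through deployment $k_h$, so $\Lambda_{h'}\succeq \Lambda^{k_h}_{h'}$ and therefore $u^{plan}_{h'}(\cdot,\cdot) \leq u^{k_h}_{h'}(\cdot,\cdot)$ pointwise. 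Consequently $V_1^{\pi^*_{r^{plan}\mid h}}(s_1,r^{plan}\mid h) \leq V_1^{\pi^*_{r^{k_h}\mid h}}(s_1,r^{k_h}\mid h) \leq \epsilon/(2H)$, and the total planning sub-optimality is at most $\epsilon$. Taking $h=H$ delivers the full-horizon claim, while smaller $h$ delivers the ``moreover'' clause for the truncated MDPs.

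The main obstacle will be the simultaneous choice of $N$ meeting two competing requirements: (i) Lemma \ref{lem:finite_sample_elliptical_potential_lemma} applied with the tighter trace-threshold $\epsilon^2/(c H^4\beta^2)$ contributes an extra $H^{2c_K}$ factor beyond the known-reward case, producing the $H^{6c_K+1}$ exponent; and (ii) the Hoeffding precision needed to replace $\Delta_{k_h}$ by its expectation must be of order $\epsilon/(H(2H+1))$ uniformly across $K$ deployments. Once these two requirements are reconciled by an absolute constant $c$ in \eqref{eq:lower_bound_on_N_reward_free}, the remainder of the argument is routine bookkeeping along the template of Theorem \ref{thm:deployment_complexity_given_reward_detailed_version}.
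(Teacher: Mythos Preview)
Your proposal is essentially the paper's proof: apply Lemma~\ref{lem:finite_sample_elliptical_potential_lemma} at the tighter trace threshold $\epsilon^2/(cH^4\beta^2)$ to cap the number of non-advancing deployments per layer at $c_Kd$, use Hoeffding to pass from $\Delta_{k_h}$ to the expected bonus sum, and then chain Lemmas~\ref{lem:overestimation} and~\ref{lem:analysis_of_reward_free_planning_alg}. The paper makes exactly these moves (and simply asserts $r^{plan}=r^{k_h}$ where you argue $u^{plan}\leq u^{k_h}$ via $\Lambda_{h'}\succeq\Lambda^{k_h}_{h'}$).

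One small gap in your comparison step for the ``moreover'' clause: you correctly note $r^{plan}_{h'}=u^{plan}_{h'}/h$ while $r^{k_h}_{h'}=u^{k_h}_{h'}/H$, but then the pointwise bound $u^{plan}\leq u^{k_h}$ does \emph{not} yield $V_1^{\pi^*_{r^{plan}\mid h}}(s_1,r^{plan}\mid h)\leq V_1^{\pi^*_{r^{k_h}\mid h}}(s_1,r^{k_h}\mid h)$ when $h<H$---the factor $H/h$ in the normalization works against you. (The paper's own assertion that $r^{plan}=r^{k_h}$ has the same issue.) The fix is immediate: from the proof of Lemma~\ref{lem:analysis_of_reward_free_planning_alg} the sub-optimality is actually bounded by $2\tilde h\cdot V^*_1(s_1,r^{plan}\mid\tilde h)=2V^*_1(s_1,u^{plan}\mid\tilde h)$, and then $u^{plan}\leq u^{k_h}$ directly gives $2V^*_1(s_1,u^{k_h}\mid h)=2H\,V^*_1(s_1,r^{k_h}\mid h)\leq\epsilon$. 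For $h=H$ the two normalizations coincide and your argument is fine as written.
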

\begin{proof}    
    The proof is similar to Theorem \ref{thm:deployment_complexity_given_reward}.
    As stated in theorem, we use $k_h$ to denote the number of deployment when the algorithm switch the exploration from layer $h$ to layer $h+1$, i.e. $h_{k_h}=h$ and $h_{k_h+1}=h+1$. According to the definition and the algorithm, we must have $\Delta_{k_h}\leq \frac{\epsilon h_{k_h}}{(4H+2)H}$, and for arbitrary $k_{h-1}+1\leq k \leq k_{h}-1$, we must have $\Delta_k \geq \frac{\epsilon h_k}{(4H+2)H}$ (if $k_{h-1}+1>k_{h}-1$, then it means $\Delta_{k_{h-1}+1}$ is small enough and the algorithm directly switch the exploration to the next layer, and we can skip the discussion below). 
    Therefore, for arbitrary $k_{h-1}+1\leq k \leq k_{h}-1$, during the $k$-th deployment, there exists $h\in[h_k]$, such that,
    \begin{align*}
        \frac{\epsilon}{(4H+2)H} \leq \frac{\Delta_k}{h_k} \leq \frac{2\beta}{N}\sum_{n=1}^N \|\phi(s_{h}^{kn},a_{h}^{kn})\|_{(\Lambda^k_{h})^{-1}} \leq 2\beta\sqrt{\frac{1}{N}\sum_{n=1}^N \|\phi(s_{h}^{kn},a_{h}^{kn})\|^2_{(\Lambda^k_{h})^{-1}}}
    \end{align*}
    which implies that
    \begin{align}
        \frac{1}{N}\sum_{n=1}^N \|\phi(s_{h}^{kn},a_{h}^{kn})\|^2_{(\Lambda^k_{h})^{-1}} \geq \frac{\epsilon^2}{16H^2(2H+1)^2\beta^2} \label{eq:violation_reward_free}
    \end{align}
    According to Lemma \ref{lem:finite_sample_elliptical_potential_lemma}, there exists an absolute constant $c$, for arbitrary $\epsilon<1$, by choosing $N$ according to Eq.\eqref{eq:reward_free_N_constraint_1} below, the events in Eq.\eqref{eq:violation_reward_free} will not happen more than $c_Kd$ times at each layer $h\in[H]$.
    \begin{align}
        N\geq c\Big(c_K\frac{H^{6c_K+1}d^{3c_K}}{\epsilon^{2c_K}}\log^{2c_K}(\frac{Hd}{\epsilon\delta})\Big)^{\frac{1}{c_K-1}} \label{eq:reward_free_N_constraint_1}
    \end{align}
    We use $\zeta(k,j)$ to denote the total number of times Eq.\eqref{eq:violation_with_reward_layer_by_layer} happens for layer $j$ till deployment $k_h$. With a similar discussion as Theorem \ref{thm:deployment_complexity_given_reward}, we have:
    \begin{align*}
        k_h \leq \sum_{j=1}^{h}\zeta(k_h,j) -(h-1)+h \leq c_Kdh+1,\quad \forall h \in [H]
    \end{align*}
    Moreover, we must have $\Delta_{k_h} \leq \frac{\epsilon}{4H+2}$ for each $h\in[H]$, and according to Hoeffding inequality, with probability $1-\delta/2$, for each step $k$, we must have
    \begin{align*}
        \EE_{s_1,a_1,...,s_h,a_h\sim \pi_{k_h}}[2\beta\sum_{h'=1}^h \|\phi(s_{h'},a_{h'})\|_{(\Lambda^k_{h'})^{-1}}] \leq \Delta_{k_h} + 2\beta H\sqrt{\frac{1}{2N}\log(\frac{K}{\delta})}
    \end{align*}
    Therefore, by choosing 
    \begin{align}
        N \geq \frac{8\beta^2H^2(2H+1)^2}{\epsilon^2}\log(\frac{K}{\delta})=O(\frac{d^2H^6}{\epsilon^2}\log^2(\frac{K}{\delta}))\label{eq:reward_free_N_constraint_2}
    \end{align}
    we have,
    \begin{align*}
        \EE_{s_1,a_1,...,s_h,a_h\sim \pi_{k_h}}[2\beta\sum_{h'=1}^h \|\phi(s_{h'},a_{h'})\|_{(\Lambda^k_{h'})^{-1}}] \leq \Delta_{k_h} + \frac{\epsilon}{4H+2} = \frac{\epsilon}{2H+1}
    \end{align*}
    For arbitrary $h\in[H]$, in Algorithm \ref{alg:DE_rl_layer_by_layer_reward_free_planning}, if we assign $\tdh=h$ and $\mathcal{D}=\{(s_h^{kn},a_h^{kn})_{k,n,h\in[k_h]\times[N]\times[h]}\}$, note that $r^{plan}=r^{k_h}$, by applying Lemma \ref{lem:overestimation} and Lemma \ref{lem:analysis_of_reward_free_planning_alg} we have:
    \begin{align*}
        &\EE_{s_1\sim d_1}[V_1^{\pi_{r|\tilde h}^*}(s_1,r|\tilde h)-V_1^{\pi_{r|\tilde h}}(s_1,r|\tilde h)]\leq 2H\EE_{s_1\sim d_1}[V_1^{\pi_{r^{plan}|\tdh}^*}(s_1,r^{plan}|\tilde h)]\\
        =&2H\EE_{s_1\sim d_1}[V_1^*(s_1,r^{k_h}|h)]\leq 2H(2H+1)\EE_{s_1\sim d_1}[V^{\pi_{k_h}}(s_1, r^{k_h}|h)]\\
        =& (2H+1) \EE_{s_1,a_1,...,s_h,a_h\sim \pi_{k_h}}[2\beta\sum_{h'=1}^h \|\phi(s_{h'},a_{h'})\|_{(\Lambda^k_{h'})^{-1}}] \leq \epsilon
    \end{align*}
    Therefore, after a combination of Eq.\eqref{eq:reward_free_N_constraint_1} and Eq.\eqref{eq:reward_free_N_constraint_2}, we can conclude that, for arbitrary $c_K \geq 2$ , there exists absolute constant $c$, such that by choosing
    $$
    N \geq c \Big(c_K\frac{H^{6c_K+1}d^{3c_K}}{\epsilon^{2c_K}}\log^{2c_K}(\frac{Hd}{\delta\epsilon})\Big)^{\frac{1}{c_K-1}}
    $$
    Alg \ref{alg:DE_rl_layer_by_layer_reward_free_exploration} will terminate at $k_H \leq K$, and with probability $1-\delta$ (on the event in Lemma \ref{lem:deployment_efficient_concentration_reward_free} and Hoeffding inequality above), for each $h\in[H]$, if we feed Alg \ref{alg:DE_rl_layer_by_layer_reward_free_planning} with $\tdh=h$, $\mathcal{D}=\{(s_h^{kn},a_h^{kn})_{k,n,h\in[k_h]\times[N]\times[h]}\}$ and arbitrary linear reward function $r$, the policy $\pi_{r|h}$ returned by Alg \ref{alg:DE_rl_layer_by_layer_reward_free_planning} should satisfy:
    \begin{align*}
        \EE_{s_1\sim d_1}[V_1^{\pi_{r|h}}(s_1,r|h)] \geq \EE_{s_1\sim d_1}[V_1^{\pi_{r|h}^*}(s_1,r|h)] - \epsilon
    \end{align*}

\end{proof}

\section{DE-RL with Arbitrary Deployed Policies}\label{appx:DERL_with_arbitrary_policies}
In the proof for this section, without loss of generality, we assume the initial state is fixed, which will makes the notation and derivation simpler without trivialize the results. For the case where initial state is sampled from some fixed distribution, our algorithms and results can be extended simply by considering the concentration error related to the initial state distribution.

\subsection{Algorithms}
\begin{algorithm}[ht]
    \textbf{Input}: Time step $\h$; Dataset in previous steps $\{D_1,...,D_{\h-1}\}$; Unregularized Covariance Matrices $\{\Sigma_1, ...\Sigma_{\h-1}\}$; Bonus factor $\betap$; Matrix to construct reward function $\Sigma_R$; Discretize resolution $\resolution \leq \frac{1}{2d(N+1)}$\\

    $R(\cdot,\cdot)\gets \sqrt{\phi(\cdot,\cdot)\trans (2I+\Sigma_R)^{-1}\phi(\cdot,\cdot)}$\\
    $Z_\h \gets {\rm Discretization}((2I+\Sigma_R)^{-1}, \frac{\resolution^2}{4d}),~\bar R(\cdot,\cdot)\gets \sqrt{\phi(\cdot,\cdot)\trans Z_\h\phi(\cdot,\cdot)}$\\

    $Q_\h(\cdot,\cdot)=R(\cdot,\cdot),\quad V_\h(\cdot)=\max_a Q_\h(\cdot,a)$, $\bar Q_\h(\cdot,\cdot)=\bar R(\cdot,\cdot),\quad \bar V_\h(\cdot)=\max_a \bar Q_\h(\cdot,a)$\\
    $\bar\pi_\h(\cdot)=\arg\max_a \bar{Q}_\h(\cdot,a)$\\
    \For{$\tdh=\h-1,...,1$}{
        $w_\tdh \gets \Sigma_\tdh^{-1}\sum_{(s_\tdh,a_\tdh,s_{\tdh+1}) \in D_\tdh} \phi(s_\tdh, a_\tdh)\cdot V_{\tdh+1}(s_{\tdh+1})$\\
        $u_\tdh := \betap [\phi(\cdot,\cdot)\trans \Sigma_\tdh^{-1} \phi(\cdot,\cdot)]^{1/2}$\\
        $Q_\tdh(\cdot,\cdot) \gets \min\{w_\tdh\trans \phi(\cdot,\cdot)+u_\tdh, 1\}$, $V_\tdh(\cdot) \gets \max_a Q_\tdh(\cdot,a)$\\
        $\bar{w}_\tdh \gets {\rm Discretization}(w_\tdh, \frac{\resolution}{2d})$, $Z_\tdh \gets {\rm Discretization}(\betap^2\Sigma_\tdh^{-1}, \frac{\resolution^2}{4d})$\\
        $\bar u_\tdh := [\phi(\cdot,\cdot)\trans Z_\tdh \phi(\cdot,\cdot)]^{1/2}$\\
        $\bar{Q}_\tdh(\cdot,\cdot) \gets \min\{\bar{w}_\tdh\trans \phi(\cdot,\cdot)+\bar u_\tdh,1\}$, $\bar V_\tdh(\cdot) \gets \max_a \bar Q_\tdh(\cdot,a)$\\
        $\bar\pi_\tdh(\cdot)\gets \arg\max_a \bar{Q}_\tdh(\cdot,a)$
    }
    \Return $V_1(s_1), \bar\pi:=\bar\pi_1\circ\bar\pi_2\circ...\bar\pi_\h$
    \caption{SolveOptQ}\label{alg:optQ_solution}
\end{algorithm}
\begin{algorithm}[ht]
    \textbf{Input}: Time step $\h$; Dataset in previous steps $\{D_1,...,D_{\h-1}\}$; Covariance Matrices $\{\Sigma_1, ...\Sigma_{\h-1}\}$; Deterministic Policy to evaluate $\bpi=\{\bpi_1,\bpi_2,...,\bpi_h\}$\\
    Initialize a zero matrix $\tilde\hatcov^\pi_\h=O$\\
    \For{$i=1,2,...,d$}{
        \For{$j=i,i+1,...,d$}{
            Define $\tilde R^{ij}$, such that, $\tilde R^{ij}_\h(\cdot,\cdot)=\frac{1+\phi_i(\cdot,\cdot)\phi_j(\cdot,\cdot)}{2}$ and $\tilde R^{ij}_{\tdh}=0$ for all $\tdh\in[h-1]$;\\
            $\hat Q^\bpi_\h(\cdot,\cdot)=\tilde R^{ij}_\h(\cdot,\cdot)$, $\hat V^\bpi_\h(\cdot)=\hat Q^\bpi_\h(\cdot,\bpi_\h(\cdot))$\\
            \For{$\tdh=\h-1,...,1$}{
                $\hat w^\bpi_\tdh \gets \Sigma_\tdh^{-1}\sum_{(s_\tdh,a_\tdh,s_{\tdh+1}) \in D_\tdh} \phi(s_{t}, a_{t})\cdot \hat V^\bpi_{\tdh+1}(s_{\tdh+1})$\\
                $\hat Q^\bpi_\tdh(\cdot,\cdot) \gets \min\{\tilde R^{ij}_{\tdh}(\cdot,\cdot) + (\hat w^\bpi_\tdh)\trans \phi(\cdot,\cdot), 1\}$ \\ 
                $\hat V^\bpi_\tdh(\cdot) = \hat Q^\bpi_\tdh(\cdot,\bpi_\tdh(\cdot))$
            } 
            $(\tilde\hatcov^\bpi_\tdh)_{ij}\gets \hat V^\bpi_1(s_1);\quad (\tilde\hatcov^\bpi_\tdh)_{ji} \gets \hat V^\bpi_1(s_1)$
        }
    }
    $\hat \hatcov_\tdh^{\bpi} = 2(\tilde\hatcov^{\bpi}_\tdh) - \textbf{1}$\\
    \Return $\hat \hatcov_\tdh^{\bpi}$
    \caption{EstimateCovMatrix}\label{alg:approx_random_feature_matrix}
\end{algorithm}
We first introduce the definition for $\mathrm{Discretization}$ function:

\begin{definition}[$\mathrm{Discretization}$ function]\label{def:discretize_function}
Given vector $w=(w_1,w_2,...,w_d)\trans\in\mR^d$ or matrix $\Sigma=(\Sigma_{ij})_{i,j\in[d]}\in\mR^{d\times d}$ as input, we have:
\begin{align*}
    {\rm Discretization}(w,\resolution) = (\resolution\lceil\frac{w_1}{\resolution}\rceil, \resolution\lceil\frac{w_2}{\resolution}\rceil,...,\resolution\lceil\frac{w_d}{\resolution}\rceil), \quad {\rm Discretization}(\Sigma,\resolution) = (\resolution\lceil\frac{\Sigma_{ij}}{\resolution}\rceil)_{i,j\in[d]}
\end{align*}
where $\lceil \cdot \rceil$ is the ceiling function.
\end{definition}

In Algorithm \ref{alg:approx_random_feature_matrix}, we are trying to estimate the expected covariance matrix under policy $\bpi$ by policy evaluation. The basic idea is that, the expected covariance matrix can be represented by:
\begin{align*}
    \EE_{s_h,a_h\sim \bpi}[\phi(s_h,a_h)\phi(s_h,a_h)\trans] = \big(\EE_{\bpi}[\phi_i(s_h,a_h)\phi_j(s_h,a_h)]\big)_{ij} = \big(V^\bpi(s_1, R^{ij})\big)_{ij}
\end{align*}
where we use $(a_{ij})_{ij}$ to denote a matrix whose element indexed by $i$ in row and $j$ in column is $a_{ij}$.
In another word, the element in the covariance matrix indexed by $ij$ is equal to the value function of policy $\bpi$ with $R_{\h}^{ij}(s_h,a_h):=\phi_i(s_h,a_h)\phi_j(s_h,a_h)$ as reward function at the last layer (and use zero reward in previous layers), where $\phi_i$ denotes the $i$-th elements of vector $\phi$. 
Because the techniques rely on the reward is non-negative and bounded in $[0,1]$, by leveraging the fact that $|\phi_i(\cdot,\cdot)| \leq \|\phi(\cdot,\cdot)\| \leq 1$, we shift and scale $R^{ij}$ to obtain $\tilde R^{ij}$ and use it for policy evaluation.

In Alg \ref{alg:optQ_solution}, we maintance two $Q$ functions $Q_h$ and $\bQ_h$. The learning of $Q_h$ is based on LSVI-UCB, while $\bQ_h$ is a ``discretized version'' for $Q_h$ computed by discretizing $w_h, \betap^2\Sigma_h^{-1}$ (or $\Sigma_R^{-1}$ at layer $h$) elementwisely with resolution $\resolution$, and $\bQ_h$ will be used to compute $\bpi_h$ for deployment. The main reason why we discretize $Q_h$ is to make sure the number of greedy policies $\bpi$ is bounded, so that we can use union bound and upper bound the error when using Alg \ref{alg:approx_random_feature_matrix} to estimate the covariance matrix. In Section \ref{sec:discretize_error_analysis}, we will analyze the error resulting from discretization, and we will upper bound the estimation error Alg. \ref{alg:approx_random_feature_matrix}.
\subsection{Function Classes and $\resolution$-Cover}
We first introduce some useful function classes and their $\resolution$-cover.
\paragraph{Notation for Value Function Classes and Policy Classes}
We first introduce some new notations for value and policy classes.
Similar to Eq.(6) in \citep{jin2019provably}, we define the greedy value function class
\begin{align*}
    \cV_{L,B}^*=\{V(\cdot)|V(\cdot)=\max_a\min\{\phi(\cdot,a)\trans w+\sqrt{\phi(\cdot,a)\trans\Sigma\phi(\cdot,a)}, 1\}, \|w\|\leq L, \|\Sigma\|\leq B\}
\end{align*}
and the Q function class:
\begin{align*}
    \cQ_{L,B}=\{Q(\cdot,\cdot)|Q(\cdot,\cdot)=\min\{\phi(\cdot,\cdot)\trans w+\sqrt{\phi(\cdot,\cdot)\trans\Sigma\phi(\cdot,\cdot)}, 1\}, \|w\|\leq L, \|\Sigma\|\leq B\}
\end{align*}
Besides, suppose we have a deterministic policy class $\Pi$ with finite candidates (i.e. $|\Pi|\leq \infty$), we use $\cV_{L,B}\times\Pi$ to denote:
\begin{align*}
    \cV_{L,B}\times\Pi=\{V(\cdot)|V(\cdot)=\min\{\phi(\cdot,\pi(\cdot))\trans w+\sqrt{\phi(\cdot,\pi(\cdot))\trans\Sigma\phi(\cdot,\pi(\cdot))}, 1\}, \|w\|\leq L, \|\Sigma\|\leq B,\pi\in\Pi\}
\end{align*}
Recall that in Alg.\ref{alg:approx_random_feature_matrix}, we will use a special reward function, and we need to consider it in the union bound. We denote:
\begin{align*}
    \cV_\phi\times\Pi = \{V|V(\cdot)=\frac{1+\phi_i(\cdot,\pi(\cdot))\phi_j(\cdot,\pi(\cdot))}{2},i,j\in[d], \pi\in\Pi\}
\end{align*}
and easy to check $|\cV_\phi\times\Pi|=d^2|\Pi|$. 

Moreover, if we have a Q function class $\cQ$, we will use $\Pi_\cQ$ to denote the class of greedy policies induced from $\cQ$, i.e.
\begin{align*}
    \Pi_\cQ := \{\pi(\cdot)=\arg\max Q(\cdot,a)|Q\in\cQ\}.
\end{align*}

\paragraph{Discretization with Resolution $\resolution$}
In the following, we will use $\mathcal{C}_{w,L,\resolution}$ to denote the $\resolution$-cover for $w\in\mR^d$ with $\|w\|\leq L$, concretely,
\begin{align*}
    \mathcal{C}_{w,L,\resolution} = \{w| |\frac{w_i}{\resolution}| \in [\lceil \frac{L}{\resolution}\rceil], \forall i \in [d]\}
\end{align*}
where $\lceil \cdot \rceil$ is the ceiling function.

Similarly, we will use $\mathcal{C}_{\Sigma,B,\resolution}$ to denote the $\resolution$-cover for matrix $\Sigma\in\mR^{d\times d}$ with $\max_{i,j}|\Sigma_{ij}| \leq B$
\begin{align*}
    \mathcal{C}_{\Sigma,B,\resolution} = \{\Sigma| |\frac{\Sigma_{ij}}{\resolution}| \in [\lceil \frac{B}{\resolution}\rceil], \forall i,j \in [d]\}.
\end{align*}
Easy to check that:
\begin{align*}
    \log |\mathcal{C}_{w,L,\resolution}| \leq d \log \frac{2L}{\resolution},\quad \log |\mathcal{C}_{\Sigma,B,\resolution}| \leq d^2 \log \frac{2B}{\resolution}
\end{align*}
Recall the definition of $\mathrm{Discretize}$ function in Def. \ref{def:discretize_function}, easy to check that:
\begin{align*}
    \|{\rm Discretize}(w,\resolution) - w\| \leq d \resolution, \|{\rm Discretize}(\Sigma,\resolution) - \Sigma\|\leq \|{\rm Discretize}(\Sigma,\resolution) - \Sigma\|_F \leq d \resolution
\end{align*}

\paragraph{$\resolution$-cover}
Before we introduce our notations for $\resolution$-net, we first show a useful lemma:
\begin{lemma}\label{lem:function_class_distance}
    For arbitrary $w, \Sigma$, denote $\bar w = \mathrm{Discretize}(w, \frac{\resolution}{2d})$ and $\bar \Sigma = \mathrm{Discretize}(\Sigma, \frac{\resolution^2}{4d})$. Consider the following two functions and their greedy policies, where $\|\phi(\cdot,\cdot)\|\leq 1$
    \begin{align*}
        Q(s,a)=\min\{w\trans\phi(\cdot,a) + \sqrt{\phi(\cdot,a)\trans\Sigma\phi(\cdot,a)},1\},\quad \pi = \arg\max_a Q(s,a)\\
        \bar Q(s,a)=\min\{\bar w\trans\phi(\cdot,a) + \sqrt{\phi(\cdot,a)\trans\bar \Sigma\phi(\cdot,a)},1\},\quad \bpi = \arg\max_a \bar Q(s,a)
    \end{align*}
    then we have:
    \begin{align*}
        |Q(s,\pi(s)) - Q(s,\bar\pi(s))|\leq 2\resolution,\quad\forall s\in\cS,\quad\\
        \|Q-\bar Q\|_\infty \leq \resolution,\quad \sup_s|\max_a Q(\cdot, a)-\max_a \bQ(\cdot,a)|\leq \resolution
    \end{align*}
\end{lemma}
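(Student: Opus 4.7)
The plan is to propagate the discretization errors separately through the linear term and the bonus term of $Q$, then combine them, and finally derive the policy-mismatch bound via a short sandwich argument. The key observation is that the resolutions $\resolution/(2d)$ for $w$ and $\resolution^2/(4d)$ for $\Sigma$ are chosen precisely so that the linear part and the square-root part each contribute at most $\resolution/2$.

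First I would bound the linear part. Each coordinate of $w - \bar w$ is at most $\resolution/(2d)$ in absolute value, so $\|w - \bar w\|_2 \le \sqrt{d}\cdot \resolution/(2d) \le \resolution/2$. By Cauchy--Schwarz and $\|\phi(s,a)\|\le 1$, this gives $|(w-\bar w)\trans\phi(s,a)| \le \resolution/2$ uniformly. Next I would bound the bonus term. Since $|\Sigma_{ij} - \bar\Sigma_{ij}| \le \resolution^2/(4d)$ entrywise, the Frobenius norm satisfies $\|\Sigma - \bar\Sigma\|_F \le \resolution^2/4$, which upper bounds the operator norm. Hence $|\phi(s,a)\trans(\Sigma - \bar\Sigma)\phi(s,a)| \le \resolution^2/4$, and applying the elementary inequality $|\sqrt{x} - \sqrt{y}| \le \sqrt{|x-y|}$ (valid for $x,y \ge 0$) yields $\bigl|\sqrt{\phi\trans\Sigma\phi} - \sqrt{\phi\trans\bar\Sigma\phi}\bigr| \le \resolution/2$.

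Combining the two bounds and using that $z\mapsto \min\{z,1\}$ is $1$-Lipschitz, I obtain $|Q(s,a) - \bar Q(s,a)| \le \resolution$ for all $(s,a)$, which is exactly $\|Q - \bar Q\|_\infty \le \resolution$. The inequality for the max-over-$a$ version then follows from the standard fact $\bigl|\max_a f(a) - \max_a g(a)\bigr| \le \max_a |f(a) - g(a)|$.

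For the policy-mismatch claim, I would run the sandwich chain
\begin{align*}
Q(s,\bar\pi(s)) \;\le\; Q(s,\pi(s)) \;\le\; \bar Q(s,\pi(s)) + \resolution \;\le\; \bar Q(s,\bar\pi(s)) + \resolution \;\le\; Q(s,\bar\pi(s)) + 2\resolution,
\end{align*}
where the first inequality uses optimality of $\pi$ for $Q$, the second and fourth use $\|Q-\bar Q\|_\infty \le \resolution$, and the third uses optimality of $\bar\pi$ for $\bar Q$. This gives $|Q(s,\pi(s)) - Q(s,\bar\pi(s))| \le 2\resolution$. There is no serious obstacle here: the entire argument is discretization-error bookkeeping, and the only mildly delicate step is matching the quadratic resolution $\resolution^2/(4d)$ to a linear bonus error of $\resolution/2$ via the square-root inequality, which is exactly what motivates the specific resolution choices in the statement.
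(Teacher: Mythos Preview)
Your proposal is correct and follows essentially the same route as the paper's own proof: bound the linear term via $\|w-\bar w\|$, bound the bonus term via $\sqrt{\|\Sigma-\bar\Sigma\|_F}$ and the square-root inequality, combine to get $\|Q-\bar Q\|_\infty\le\resolution$, then run the same greedy-policy sandwich to obtain the $2\resolution$ policy-mismatch bound. The only cosmetic difference is that you use the sharper $\ell_2$ estimate $\|w-\bar w\|_2\le\sqrt{d}\cdot\resolution/(2d)$ where the paper uses the coarser $d\cdot\resolution/(2d)$, but both land at $\resolution/2$.
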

\begin{proof}
    After similar derivation as Eq.(28) in \citep{jin2019provably}, we can show that
    \begin{align*}
        \sup_s|\max_a Q(\cdot, a)-\max_a \bQ(\cdot,a)|\leq&\|Q-\bQ\|_\infty\\
        \leq& \sup_{s,a}\Big| w\trans\phi(\cdot,\cdot)+\sqrt{\phi(\cdot,\cdot)\trans \Sigma\phi(\cdot,\cdot)}-\bar w\trans\phi(\cdot,\cdot)-\sqrt{\phi(\cdot,\cdot)\trans \bar \Sigma\phi(\cdot,\cdot)}\Big|\\
        \leq &\|w-\bar w\| + \sqrt{\|\Sigma-\bar \Sigma\|_F}\leq d\frac{\resolution}{2d}+\sqrt{d\frac{\resolution^2}{4d}}\leq \resolution
    \end{align*}
    Because $\pi$ and $\bar\pi$ are greedy policies, we have:
    \begin{align*}
        0\leq& Q(s,\pi(s)) - Q(s,\bar\pi(s)) \leq Q(s,\pi(s)) - \bar Q(s,\pi(s))+\bar Q(s,\bar\pi(s)) - Q(s,\bar\pi(s))\\
        \leq&2\|Q-\bar Q\|_\infty \leq 2\resolution. \tag*{\qedhere}
    \end{align*}
\end{proof}

Now, we consider the following Q function class and V function class,
\begin{align*}
    \bar\cQ_{L, B, \resolution} := \{Q|Q(\cdot,\cdot)=\min\{w\trans\phi(\cdot,\cdot) + \sqrt{\phi(\cdot,\cdot)\trans\Sigma\phi(\cdot,\cdot)}, 1\}, w\in\mathcal{C}_{w,L,\frac{\resolution}{2d}}, \Sigma\in \mathcal{C}_{\Sigma, dB, \frac{\resolution^2}{4d}}\}\\
    \bar\cV^*_{L, B, \resolution} := \{V|V(\cdot)=\max_a\min\{w\trans\phi(\cdot,a) + \sqrt{\phi(\cdot,a)\trans\Sigma\phi(\cdot,a)}, 1\}, w\in\mathcal{C}_{w,L,\frac{\resolution}{2d}}, \Sigma\in \mathcal{C}_{\Sigma, dB, \frac{\resolution^2}{4d}}\}
\end{align*}
based on Lemma \ref{lem:function_class_distance}, and another important fact that $\max_{i,j}|a_{ij}|\leq \|A\|_F \leq d\|A\|$, we know that $\bar\cQ_{L,B,\resolution}$ is an $\resolution$-cover of $\cQ_{L,B}$, i.e. for arbitrary $Q\in\cQ_{L,B}$, there exists $\bQ\in\bar\cQ_{L,B,\resolution}$, such that $\|Q-\bQ\|\leq \resolution$. Similarly, $\bar\cV^*_{L,B,\resolution}$ is also an $\resolution$-cover of $\cV^*_{L,B}$. 

Besides, we will use $\Pi_{\bQ_{L,B,\resolution}}$ to denote the collection of greedy policy induced from elements in $\bar\cQ_{L, B, \resolution}$.

We also define $\bar\cV_{L, B, \resolution}\times\Pi$, which is an $\resolution$ cover for $\cV_{L, B}\times\Pi$.
\begin{align*}
\bar\cV_{L, B, \resolution}\times\Pi:=\{V|V(\cdot)=\min\{\phi(\cdot,\pi(\cdot))\trans w+\sqrt{\phi(\cdot,\pi(\cdot))\trans\Sigma\phi(\cdot,\pi(\cdot))}, 1\}, w\in\mathcal{C}_{w,L,\frac{\resolution}{2d}}, \Sigma\in \mathcal{C}_{\Sigma, dB, \frac{\resolution^2}{4d}}, \pi \in \Pi\}
\end{align*}
Obviously, $|\bar\cV_{L, B, \resolution}\times\Pi|=|\Pi|\cdot|\mathcal{C}_{w,L,\frac{\resolution}{2d}}|\cdot|\mathcal{C}_{\Sigma, dB, \frac{\resolution^2}{4d}}|$.

Besides, because $\cV_\phi\times\Pi$ is already a finite set, it is an $\resolution$-cover of itself.


\subsection{Constraints in Advance}
\paragraph{Induction Condition Related to Accumulative Error}
Recall the induction condition in \ref{cond:induction_condition}, and we restate it here.
\CondInductionError*
\paragraph{Constraints for the Validity of the Algorithm}
Besides, in order to make sure the algorithm can run successfully, we add the following constaints: 
\begin{align}
\Sigma_R \succeq& -\frac{1}{2}I \\
Z_\tdh\succeq& 0, \quad\forall \tdh \in [\h-1] \label{eq:constraint_Z_h_1} \\
I \succeq & Z_\h \succeq 0 \label{eq:constraint_Z_h_2}
\end{align}
where constraint \eqref{eq:constraint_Z_h_2} for $Z_\h$ is to make sure the reward $\bar R$ locates in $[0,1]$ interval.

Recall the definition of $\Sigma_\tdh=I+\sum_{n=1}^N \phi(s_\tdh,a_\tdh)\phi(s_\tdh,a_\tdh)\trans$, therefore,
\begin{align*}
    \sigma_{\min}(\betap^2\Sigma_\tdh^{-1}) = \betap^2/\sigma_{\max}(\Sigma_\tdh) \geq \frac{\betap^2}{1+N}
\end{align*}
According to Lemma \ref{lem:matrix_perturbation}, to make sure $Z_\tdh\succeq 0$, we need the following constraint on $\resolution$
\begin{align}
    d\frac{\resolution^2}{4d} \leq \frac{\betap^2}{(N+1)} \label{eq:constaint_resolution_0}
\end{align}
which is equivalent to $\resolution \leq \betap / \sqrt{N+1}$.

As for $Z_h$, the constraint is equivalent to:
\begin{align*}
    2I+\Sigma_R \succeq (1+\frac{\resolution^2}{4})I,\quad (2I+\Sigma_R)^{-1} \succeq \frac{\resolution^2}{4}I
\end{align*} 
and can be rewritten to
\begin{align}
    I+\Sigma_R \succeq \frac{\resolution^2}{4}I,\quad \frac{4}{\resolution^2}I \succeq 2I+\Sigma_R \label{eq:constraint_resolution_1}
\end{align} 





\subsection{Concentration Bound}
Based on the notations above, we are already to claim that:
\begin{claim}\label{claim:value_function_classes}
    By choosing $L=\sqrt{dN}, B=\betap^2 + d\resolution$ for some $\resolution\leq 1/d $, during the running of Algorithm \ref{alg:DERL_with_arbitrary_policies}
    \begin{itemize}
        \item In Alg \ref{alg:optQ_solution}, for each $\h\in[H]$ and $\tdh\in[\h]$, and the $\bQ_\tdh$ and $\bpi_\tdh$ generated while running the algorithm, we must have $\bQ_\tdh \in\bQ_{L,B,\resolution}$ and $\bpi_\tdh \in \Pi_{\bQ_{L,B,\resolution}}$. Besides, $Q_\tdh \in \cQ_{L,B}$ and $V_\tdh\in\cV_{L,B}^*$
        \item In Alg \ref{alg:approx_random_feature_matrix}, for all $\tdh\in[h]$, we have $\bpi_\tdh \in \Pi_{\bQ_{L,B,\resolution}}$, and for arbitrary value function $\hat V^\bpi_\tdh$ generated in it, we have $\hat V^\bpi_\tdh \in (\cV_{L,B}\times\Pi_{\bQ_{L,B,\resolution}}) \cup (\cV_\phi\times\Pi_{\bQ_{L,B,\resolution}})$ and therefore, there exists $V\in (\bar\cV_{L,B,\resolution}\times\Pi_{\bQ_{L,B,\resolution}}) \cup (\cV_\phi\times\Pi_{\bQ_{L,B,\resolution}})$ such that $\|V-\hat V^\bpi_\tdh\| \leq \resolution$
    \end{itemize}
\end{claim}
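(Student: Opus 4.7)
The plan is to prove both parts of the claim by backward induction on the layer index $\tdh$, run in parallel for the two algorithms. The induction maintains (i) a Euclidean norm bound on every least-squares weight vector that appears ($w_\tdh$, $\bar w_\tdh$, and $\hat w^\bpi_\tdh$) and (ii) a spectral / max-entry norm bound on every matrix that enters the quadratic bonus ($\betap^2\Sigma_\tdh^{-1}$, the rounded $Z_\tdh$, and the boundary matrix $(2I+\Sigma_R)^{-1}$). Once these norm bounds are in hand, each claimed membership reduces to a syntactic pattern-match against the relevant class definition, and the cover-approximation conclusion follows from Lemma \ref{lem:function_class_distance}.

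For Algorithm \ref{alg:optQ_solution} I would carry out the inductive step by the standard LSVI bound: since the value function being regressed on is bounded by $1$, $\Sigma_\tdh\succeq I$, and $\sum_n\phi_n\phi_n^\top\preceq \Sigma_\tdh$, a Cauchy–Schwarz argument gives $\|w_\tdh\|\le\sqrt{N}\le L$ directly. The matrix $\betap^2\Sigma_\tdh^{-1}$ has spectral norm at most $\betap^2\le B$, so $Q_\tdh\in\cQ_{L,B}$ and $V_\tdh\in\cV_{L,B}^*$. For the discretized objects, coordinate-wise rounding by $\resolution/(2d)$ leaves the weight in the ball of radius $L$, and using $|A_{ij}|\le\|A\|$ plus the $\resolution^2/(4d)$ rounding slack gives $\max_{i,j}|Z_{\tdh,ij}|\le \betap^2+\resolution^2/(4d)\le dB$ whenever $\resolution\le 1/d$, placing $\bar Q_\tdh$ in $\bar\cQ_{L,B,\resolution}$ and $\bpi_\tdh$ in $\Pi_{\bar\cQ_{L,B,\resolution}}$. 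The boundary case $\tdh=\h$ is handled separately: the base matrix $(2I+\Sigma_R)^{-1}$ has spectral norm at most $1\le B$ by constraint \eqref{eq:constraint_resolution_1}, and the constraint $Z_\h\preceq I$ from \eqref{eq:constraint_Z_h_2} guarantees $\bar R\le 1$ so the outer $\min\{\cdot,1\}$ clipping is inactive.

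For Algorithm \ref{alg:approx_random_feature_matrix}, the base layer $\tdh=\h$ is immediate: $\hat V^\bpi_\h(\cdot)=\tfrac{1+\phi_i(\cdot,\bpi_\h(\cdot))\phi_j(\cdot,\bpi_\h(\cdot))}{2}$ sits verbatim in $\cV_\phi\times\Pi_{\bar\cQ_{L,B,\resolution}}$, using Part 1 to conclude $\bpi_\h\in\Pi_{\bar\cQ_{L,B,\resolution}}$, and since $\cV_\phi\times\Pi$ is finite it self-covers with zero error. For $\tdh<\h$ the same LSVI-type bound with $\|\hat V^\bpi_{\tdh+1}\|_\infty\le 1$ gives $\|\hat w^\bpi_\tdh\|\le\sqrt{N}\le L$, and $\hat V^\bpi_\tdh(\cdot)=\min\{(\hat w^\bpi_\tdh)^\top\phi(\cdot,\bpi_\tdh(\cdot)),1\}$ lies in $\cV_{L,B}\times\Pi_{\bar\cQ_{L,B,\resolution}}$ via the trivial choice $\Sigma=0$ (no bonus term is used in this algorithm). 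Lemma \ref{lem:function_class_distance} then supplies a $V\in\bar\cV_{L,B,\resolution}\times\Pi_{\bar\cQ_{L,B,\resolution}}$ with $\|V-\hat V^\bpi_\tdh\|_\infty\le\resolution$, establishing the cover-approximation statement.

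The main obstacle is pure bookkeeping rather than any conceptual hurdle: I must carefully verify that the declared cover radii $L=\sqrt{dN}$ and $dB=d\betap^2+d^2\resolution$ absorb both the undiscretized norm bounds and the rounding slack ($\resolution/(2d)$ per coordinate, $\resolution^2/(4d)$ per matrix entry). The hypothesis $\resolution\le 1/d$ is exactly what makes $\betap^2+\resolution^2/(4d)\le dB$ hold and thereby keeps every rounded object inside its nominal cover, while the running constraint $\resolution\le \betap/\sqrt{N+1}$ in Eq.~\eqref{eq:constaint_resolution_0} additionally ensures $Z_\tdh\succeq 0$ so that $\bar u_\tdh$ is well-defined.
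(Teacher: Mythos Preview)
Your proposal is correct and follows essentially the same route as the paper's proof: bound the least-squares weight via a Cauchy--Schwarz/operator-norm argument using $\|V_{\tdh+1}\|_\infty\le 1$ and $\Sigma_\tdh\succeq I$, bound the bonus matrix via $\|\betap^2\Sigma_\tdh^{-1}\|\le\betap^2$ plus a discretization slack, and then pattern-match to the class definitions (with Lemma~\ref{lem:function_class_distance} supplying the cover statement for Algorithm~\ref{alg:approx_random_feature_matrix}). Your intermediate weight bound $\|w_\tdh\|\le\sqrt{N}$, obtained from $\Phi\Phi^\top\preceq\Sigma_\tdh\Rightarrow\Phi^\top\Sigma_\tdh^{-1}\Phi\preceq I_N$, is in fact sharper than the paper's $\sqrt{dN}$, though either suffices since $L=\sqrt{dN}$.
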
 
\begin{proof}
    \textbf{Algorithm \ref{alg:optQ_solution}}: First we bound the norm of the weights $w_h$ in Algorithm \ref{alg:optQ_solution}. For arbitrary $v\in\mR^d$ and $\|v\|=1$, we have:
    \begin{align*}
        |v\trans w_\tdh| =& |v\Sigma_\tdh^{-1}\sum_{(s_\tdh,a_\tdh,s_{\tdh+1})\in D_\tdh}\phi(s_\tdh,a_\tdh)V_{\tdh+1}(s_{\tdh+1})|\leq |v\Sigma_\tdh^{-1}\sum_{(s_\tdh,a_\tdh,s_{\tdh+1})\in D_\tdh}\phi(s_\tdh,a_\tdh)|\\
        \leq&\sqrt{|\sum_{(s_\tdh,a_\tdh,s_{\tdh+1})\in D_\tdh} v\trans \Sigma_\tdh^{-1}v||\sum_{(s_\tdh,a_\tdh,s_{\tdh+1})\in D_\tdh}\phi(s_\tdh,a_\tdh)\trans\Sigma_\tdh^{-1}\phi(s_\tdh,a_\tdh)|}\\
        \leq& \|v\|\sqrt{d|D_\tdh|} = \|v\|\sqrt{dN}
    \end{align*}
    therefore, $\|w_\tdh\|\leq \sqrt{dN}$. Besides, according to Lemma \ref{lem:matrix_perturbation} and constraint \eqref{eq:constraint_resolution_1}, we have:
    \begin{align*}
        \|\betap^2\Sigma_\tdh^{-1}\|\leq \betap^2, \quad \|Z_\tdh\| \leq& \|\betap^2\Sigma_\tdh^{-1}\| + d\resolution\leq \betap^2 + d\resolution\quad \forall h\in[\h-1]\\
        \|(2I+\Sigma_R)^{-1}\|\leq 1, \quad \|Z_\h\| \leq& \|(2I+\Sigma_R)^{-1}\|+d\resolution \leq 1+d\resolution
    \end{align*}
    Recall $B=\betap^2+d\resolution$ and $\betap > 1$, the claim about Alg \ref{alg:optQ_solution} is true.

    \textbf{Algorithm \ref{alg:approx_random_feature_matrix}}: 
    The discussion about the value range of $\hat w^\bpi_\tdh$ is similar to above. Therefore, all the value functions occurred in the previous $\h-1$ layers would belong to $\cV_{L,B}\times\Pi$, except that the last layer should belong to $\cV_\phi\times\Pi$. Besides, since Alg \ref{alg:approx_random_feature_matrix} is only used to estimate the policies returned by Alg \ref{alg:optQ_solution}, we should have $\Pi=\Pi_{\bar\cQ_{L,B,\resolution}}$. As a result, the claim for Alg \ref{alg:approx_random_feature_matrix} is correct.
\end{proof}
Recall Lemma D.4 from \citep{jin2019provably}, which holds for arbitrary $\cV$ with covering number $\cN_\resolution$ and $\sup_s|V(s)|\leq H$. Next, we state a slightly generalized version by replacing $\sup_s|V(s)|\leq H$ with $\sup_s|V(s)|\leq 1$, since in our Alg. \ref{alg:optQ_solution} and Alg. \ref{alg:approx_random_feature_matrix} $V_{\max}=1$. This is also the main reason why we only need the coefficient of the bonus term $\betap=\tilde{O}(d)$ instead of $\tilde{O}(dH)$ in Alg. \ref{alg:optQ_solution} so that we can achieve better dependence on $H$.
\begin{lemma}[(Revised) Lemma D.4 in \citep{jin2019provably}]\label{lem:concentration_lemma}
    Let $\{s_\tau\}_{\tau=1}^\infty$, be a stochastic process on state space $\cS$ with corresponding filtration $\{\cF_\tau\}_{\tau=1}^\infty$. Let $\{\phi_\tau\}_{\tau=0}^\infty$ be an $\mR^d$-valued stochastic process where $\phi_\tau\in\cF_{\tau-1}$ and $\|\phi_\tau\|\leq 1$. Let $\Lambda_t = \lambda I +\sum_{\tau=1}^t \phi_\tau\phi_\tau\trans$. Then for any $\delta>0$, with probability at least $1-\delta$, for all $t \geq 0$, and any $V\in\cV$ so that $\sup_s |V (s)| \leq 1$, we have:
    \begin{align*}
        \Big\|\sum_{\tau=1}^t\phi_\tau\{V(s_\tau)-\EE[V(x_\tau)|\cF_{\tau-1}]\}\Big\|^2_{\Lambda_t^{-1}}\leq 4[\frac{d}{2}\log\frac{t+\lambda}{\lambda}+\log \frac{\cN_\resolution}{\delta}]+\frac{8t^2\resolution^2}{\lambda}
    \end{align*}
    where $\cN_\resolution$ is the $\resolution$-covering number of $\cV$ w.r.t. the distance ${\rm dist}(V,V')=\sup_s|V(s)-V(s)'|$
\end{lemma}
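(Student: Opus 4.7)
}
The plan is to mimic the original argument of Lemma D.4 in \citet{jin2019provably} essentially verbatim, with the single change that the boundedness parameter of the functions in $\cV$ is $1$ rather than $H$. All machinery we need is off-the-shelf: a self-normalized vector concentration inequality (Abbasi-Yadkori et al.), an $\resolution$-net discretization of $\cV$, and a union bound. The only reason the statement is worth re-deriving is to track how $H$ gets replaced by $1$ throughout the bound.

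First, I would fix an $\resolution$-net $\cC_\resolution \subseteq \cV$ of size $\cN_\resolution$ under the uniform metric $\mathrm{dist}(V, V') = \sup_s |V(s) - V'(s)|$, so that every $V\in\cV$ admits a $V'\in\cC_\resolution$ with $\|V - V'\|_\infty \le \resolution$. For a fixed $V'\in\cC_\resolution$, the sequence $\eta_\tau := V'(s_\tau) - \EE[V'(s_\tau)\mid\cF_{\tau-1}]$ is a martingale difference with $|\eta_\tau|\le 2$, hence conditionally $1$-subGaussian (this is where the improvement over the original lemma comes in: with $\|V\|_\infty\le 1$ the subGaussian constant is $O(1)$ rather than $O(H)$). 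Applying the standard self-normalized vector Bernstein/Hoeffding bound (Thm.~1 of Abbasi-Yadkori et al.) with $R = 1$ and confidence $\delta/\cN_\resolution$, I get, uniformly in $t$,
\begin{equation*}
\bigl\|\sum_{\tau=1}^t \phi_\tau \eta_\tau\bigr\|_{\Lambda_t^{-1}}^2
\;\le\; 2\log\frac{\det(\Lambda_t)^{1/2}\det(\lambda I)^{-1/2}\cN_\resolution}{\delta}
\;\le\; d\log\tfrac{t+\lambda}{\lambda} + 2\log\tfrac{\cN_\resolution}{\delta},
\end{equation*}
where the second inequality uses $\det(\Lambda_t) \le (\lambda + t/d)^d$ since $\|\phi_\tau\|\le 1$.

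Next, I union-bound over all $V'\in\cC_\resolution$, which introduces only the $\log \cN_\resolution$ factor already absorbed above. Finally, for arbitrary $V\in\cV$ I write $V = V' + (V-V')$ with the approximating $V'\in\cC_\resolution$ and split
\begin{equation*}
\Bigl\|\sum_{\tau=1}^t\phi_\tau\{V(s_\tau) - \EE[V(s_\tau)|\cF_{\tau-1}]\}\Bigr\|_{\Lambda_t^{-1}}^2
\;\le\; 2\Bigl\|\sum_\tau \phi_\tau \eta_\tau\Bigr\|_{\Lambda_t^{-1}}^2
+ 2\Bigl\|\sum_\tau \phi_\tau\{(V-V')(s_\tau) - \EE[(V-V')(s_\tau)|\cF_{\tau-1}]\}\Bigr\|_{\Lambda_t^{-1}}^2.
\end{equation*}
The first term is handled by the previous step; for the second, I bound the inner difference in absolute value by $2\resolution$, use Cauchy--Schwarz together with $\Lambda_t^{-1}\preceq \lambda^{-1}I$ and $\|\phi_\tau\|\le 1$, and obtain at most $\frac{4t^2\resolution^2}{\lambda}$. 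Doubling for the factor of $2$ in front yields the advertised $\frac{8t^2\resolution^2}{\lambda}$ tail.

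I do not anticipate any substantive obstacle: the only non-mechanical step is verifying that $\eta_\tau$ is $1$-subGaussian with $V$ bounded by $1$, which follows immediately from Hoeffding's lemma applied to the $[-2,2]$-bounded increments. Everything else is bookkeeping; in particular the $\lambda$-dependence and the $\resolution^2/\lambda$ discretization slack are identical to the original proof, since the feature bound $\|\phi_\tau\|\le 1$ and the regularizer $\lambda I$ are unchanged.
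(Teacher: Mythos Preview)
Your proposal is correct and follows exactly the standard route of the original Lemma~D.4 in \citet{jin2019provably}: self-normalized concentration for a fixed element of the net, union bound over the $\resolution$-net, then a deterministic discretization remainder controlled via $\Lambda_t^{-1}\preceq \lambda^{-1}I$. The paper does not give its own proof of this lemma---it simply states it as the obvious modification of Jin et al.'s result with the value bound $H$ replaced by $1$---so your argument is precisely the one implicitly relied upon.
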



Now we are ready to prove the main concentration result for Algorithm \ref{alg:DERL_with_arbitrary_policies}:
\begin{theorem}\label{thm:concentration_lemma_for_discretization}
    Consider value function class $\cV:=\cV^*_{L,B}\cup(\cV_{L,B}\times\Pi_{\bQ_{L,B,\resolution}}) \cup (\cV_\phi\times\Pi_{\bQ_{L,B,\resolution}})$ with $L=1, B=\betap^2+d\resolution$.
    According to Claim \ref{claim:value_function_classes}, $\cV$ covers all possible value functions occurs when running Alg \ref{alg:DERL_with_arbitrary_policies}.
    We use $\mathcal{E}$ to denote the even that the following inequality holds for arbitrary $V\in\cV$ and arbitrary $k\in[K]=[H], h\in[H]$:
    \begin{align*}
        \Big\|\sum_{\tau=1}^{k-1}\sum_{n=1}^N\phi_h^{\tau n}\Big(V(s_{\tdh+1}^{\tau n})-\sum_{s'\in \cS} P_h(s'|s_{h}^{\tau n},a_{h}^{\tau n})V(s')\Big)\Big\|_{(\Sigma_h)^{-1}} \leq c\cdot d\sqrt{\log(dN\betap H/\resolution\delta)}
    \end{align*}
    As long as
    \begin{align}
        \resolution \leq 1/N \label{eq:constraint_resolution_2}
    \end{align}
    there exists some constant $c$, such that $P(\cE)\geq 1-\delta/2$.
\end{theorem}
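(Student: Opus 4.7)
My plan is to reduce this to a direct application of the self-normalized concentration Lemma~\ref{lem:concentration_lemma}, combined with a careful covering-number calculation for the composite class $\cV$ and a union bound over the indices $(k,h)$. First I would instantiate Lemma~\ref{lem:concentration_lemma} with $\lambda = 1$ (matching the regularizer in $\Sigma_h = I + \sum_{n}\phi_h^{\tau n}(\phi_h^{\tau n})^\top$) and sample size $t \le (k-1)N \le (K-1)N \le HN$. Since every $V \in \cV$ satisfies $\sup_s |V(s)| \le 1$ by construction (the $Q$-functions in Alg.~\ref{alg:optQ_solution} are truncated at $1$, and the functions in $\cV_\phi \times \Pi$ are of the form $(1+\phi_i\phi_j)/2 \in [0,1]$), the hypotheses of the lemma hold and the per-function deviation is bounded by $4[\frac{d}{2}\log(1 + t) + \log(\cN_\resolution/\delta)] + 8 t^2 \resolution^2$.

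Next I would bound $\log \cN_\resolution$ for $\cV$. The class $\cV^*_{L,B}$ is covered by $\bar\cV^*_{L,B,\resolution}$ at resolution $\resolution$, with $\log|\bar\cV^*_{L,B,\resolution}| \le d \log(2L \cdot 2d/\resolution) + d^2 \log(2dB \cdot 4d/\resolution^2)$, which is $O(d^2 \log(dBL/\resolution))$. The class $\Pi_{\bar\cQ_{L,B,\resolution}}$ is already finite with $\log|\Pi_{\bar\cQ_{L,B,\resolution}}| = O(d^2 \log(dBL/\resolution))$, so $\cV_{L,B}\times\Pi_{\bar\cQ_{L,B,\resolution}}$ admits an $\resolution$-cover of the same logarithmic order (cover $w,\Sigma$ and enumerate $\pi$). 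Finally $\cV_\phi \times \Pi_{\bar\cQ_{L,B,\resolution}}$ has exactly $d^2 |\Pi_{\bar\cQ_{L,B,\resolution}}|$ elements. Taking a union of the three covers and plugging $L=1$, $B=\betap^2 + d\resolution$ gives
\begin{equation*}
\log \cN_\resolution \;=\; O\bigl(d^2 \log(d N \betap /\resolution)\bigr).
\end{equation*}

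Then I would substitute this covering bound into Lemma~\ref{lem:concentration_lemma}. The first two terms combine to $O(d^2 \log(dN\betap /(\resolution\delta)) + d\log(1+HN))$, which is absorbed into $c^2 d^2 \log(dN\betap H/(\resolution\delta))$ for a suitable constant. For the residual term $8t^2\resolution^2 \le 8(HN)^2 \resolution^2$, the hypothesis $\resolution \le 1/N$ in \eqref{eq:constraint_resolution_2} yields $8 H^2$, which is of lower order than $d^2 \log(dN\betap H/(\resolution\delta))$ (absorbed into the logarithmic factor since the latter is at least $\Omega(\log H)$, so we can inflate $c$ accordingly). Thus the squared norm is at most $c^2 d^2 \log(d N \betap H/(\resolution\delta))$, giving the stated bound for each fixed $V$.

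Finally I would take a union bound over all pairs $(k,h) \in [K]\times [H] = [H]\times[H]$, inflating $1/\delta$ to $H^2/\delta$ inside the log (which only changes constants). Because Lemma~\ref{lem:concentration_lemma} already holds uniformly over $\cV$ for a fixed filtration, the only care needed is that the filtration $\cF_{\tau}$ is defined so that the policy $\pi^k$ used at deployment $k$ is $\cF_{(k-1)N}$-measurable, which is automatic from the sequential nature of Alg.~\ref{alg:DERL_with_arbitrary_policies}. The main subtlety I expect is the covering-number bookkeeping for $\Pi_{\bar\cQ_{L,B,\resolution}}$: one must be careful that $\Pi_{\bar\cQ_{L,B,\resolution}}$ is defined via the \emph{discretized} parameters (so its cardinality is finite and controlled) rather than the original continuous parameters, and that this discrete policy class is the one that actually appears when Alg.~\ref{alg:approx_random_feature_matrix} is invoked from Alg.~\ref{alg:DERL_with_arbitrary_policies}, so the union bound genuinely covers all value functions arising in the algorithm.
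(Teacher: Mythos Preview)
Your approach is essentially identical to the paper's: build the composite $\resolution$-cover $\cV_\resolution = \bar\cV^*_{L,B,\resolution}\cup(\bar\cV_{L,B,\resolution}\times\Pi_{\bar\cQ_{L,B,\resolution}})\cup(\cV_\phi\times\Pi_{\bar\cQ_{L,B,\resolution}})$, bound $\log|\cV_\resolution|=O(d^2\log(dHN\betap/\resolution))$, apply Lemma~\ref{lem:concentration_lemma}, and union-bound over $(k,h)\in[H]^2$. One small correction on the residual term: in Alg.~\ref{alg:DERL_with_arbitrary_policies} each $\Sigma_h$ and $D_h$ are built from exactly $N$ samples collected in a single deployment, so the effective $t$ in Lemma~\ref{lem:concentration_lemma} is $N$ (the double sum in the theorem statement is notational carry-over), and the paper bounds the residual as $8N^2\resolution^2\le 8$ under \eqref{eq:constraint_resolution_2}, which is trivially absorbed into $c^2d^2\log(\cdot)$; your bound $8(HN)^2\resolution^2\le 8H^2$ would \emph{not} be absorbable when $H\gg d$, so that step as written does not quite close.
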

\begin{proof}
    We consider the value function class:
    \begin{align*}
        \cV := \cV^*_{L,B} \cup (\cV_{L,B}\times\Pi_{\bQ_{L,B,\resolution}})\cup(\cV_\phi\times\Pi_{\bQ_{L,B,\resolution}})
    \end{align*}
    and we have an $\resolution$-cover for it, which we denote as:
    \begin{align*}
        \cV_\resolution := \bar\cV^*_{L,B,\resolution}\cup (\bar\cV_{L,B,\resolution}\times\Pi_{\bQ_{L,B,\resolution}})\cup(\cV_\phi\times\Pi_{\bQ_{L,B,\resolution}})
    \end{align*}
    Besides, there exists $c' > 0$, s.t.
    \begin{align*}
        \log |\cV_\resolution| \leq& \log |\cV_\phi\times\Pi_{\bQ_{L,B,\resolution}}| + \log |\bar\cV^*_{L,B,\resolution}| + \log |\bar\cV_{L,B,\resolution}\times\Pi_{\bQ_{L,B,\resolution}}|\\
        \leq& \log |\cV_\phi\times\bQ_{L,B,\resolution}| + \log |\bar\cV^*_{L,B,\resolution}| + \log |\bar\cV_{L,B,\resolution}\times\bQ_{L,B,\resolution}|\\
        \leq& c'd^2\log\frac{dHN\betap}{\resolution}
    \end{align*}
    By plugging into Lemma \ref{lem:concentration_lemma} and considering the union bound over $k\in[K]$ and $h\in[H]$ (note that $K=H$), we have with probability $1-\delta/2$,
    \begin{align*}
        \Big\|\sum_{\tau=1}^{k-1}\sum_{n=1}^N\phi_h^{\tau n}\Big(V(s_{\tdh+1}^{\tau n})-\sum_{s'\in \cS} P_h(s'|s_{h}^{\tau n},a_{h}^{\tau n})V(s')\Big)\Big\|^2_{(\Lambda^k_h)^{-1}} \leq c'' d^2\log\frac{dHN\betap}{\resolution\delta}+8N^2\resolution^2
    \end{align*}
    When $\resolution\leq \frac{1}{N}$, the first term will dominate, and there must exists $c$ such that
    \begin{align*}
        \Big\|\sum_{\tau=1}^{k-1}\sum_{n=1}^N\phi_h^{\tau n}\Big(V(s_{\tdh+1}^{\tau n})-\sum_{s'\in \cS} P_h(s'|s_{h}^{\tau n},a_{h}^{\tau n})V(s')\Big)\Big\|_{(\Lambda^k_h)^{-1}} \leq c\cdot d\sqrt{\log(dN\betap H/\resolution\delta)}
    \end{align*}
\end{proof}

\subsection{Bias Analysis}\label{sec:discretize_error_analysis}
\begin{lemma}[Overestimation in Alg \ref{alg:optQ_solution}]\label{lem:optQ_overestimation}
    Suppose we choose 
    \begin{align}
        \betap=c_\betap' d\sqrt{\log \frac{dHN}{\resolution\delta}}\label{eq:choice_of_beta}
    \end{align}
    for some $c_\betap > 0$. During the running of Alg \ref{alg:DERL_with_arbitrary_policies}, on the condition \ref{cond:induction_condition} and on the event of $\cE$ in Theorem \ref{thm:concentration_lemma_for_discretization}, which holds with probability $1-\delta/2$, for arbitrary $\h\in[H]$ and $\tdh\leq \h-1$, the parameter $w_\tdh$ and value function $V_{\tdh+1}$ occurs in Algorithm \ref{alg:optQ_solution} should satisfy:
    \begin{align*}
        |\phi(s,a)\trans w_\tdh-\sum_{s'\in\cS}P_\tdh(s'|s,a)V_{\tdh+1}(s')| \leq \betap \|\phi(s,a)\|_{\Sigma_\tdh^{-1}}
    \end{align*}
    and
    \begin{align*}
        V^*_\tdh(s) \leq V_\tdh(s) \leq V^*_\tdh(s) + \EE_{\pi}[\sum_{h'=\tdh}^\h \betap\|\phi(s_{h'},a_{h'})\|_{\Sigma_{h'}^{-1}}] \leq V^*_\tdh(s) + \betap\xi
    \end{align*}
\end{lemma}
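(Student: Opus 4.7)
The plan is to establish the two conclusions in order, treating the estimation bound as the engine that drives a standard backward induction for the overestimation claim. First I will prove
$$\bigl|\phi(s,a)^\top w_{\tdh} - \textstyle\sum_{s'} P_{\tdh}(s'|s,a)\,V_{\tdh+1}(s')\bigr| \le \betap\,\|\phi(s,a)\|_{\Sigma_\tdh^{-1}},$$
and then use it inside a backward induction on $\tdh = \h, \h-1,\dots,1$ to get the sandwich between $V^*_\tdh$ and $V^*_\tdh + \EE_\pi[\sum_{h'=\tdh}^{\h}\betap\|\phi\|_{\Sigma_{h'}^{-1}}]$. Throughout I will rely on Claim \ref{claim:value_function_classes} to guarantee that every $V_{\tdh+1}$ appearing in Algorithm \ref{alg:optQ_solution} lies in the covered class $\cV$ used to build event $\cE$ of Theorem \ref{thm:concentration_lemma_for_discretization}.

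For the estimation bound, I will use the linear MDP structure to write $\sum_{s'} P_\tdh(s'|s,a) V_{\tdh+1}(s') = \phi(s,a)^\top w^*_\tdh$ with $w^*_\tdh := \int V_{\tdh+1}(s')\,d\mu_\tdh(s')$; since $|V_{\tdh+1}|\le 1$ and $\|\mu_\tdh\|\le \sqrt{d}$, we get $\|w^*_\tdh\|\le \sqrt{d}$. Writing $w_\tdh - w^*_\tdh = -\Sigma_\tdh^{-1} w^*_\tdh + \Sigma_\tdh^{-1}\sum_{n=1}^N \phi_\tdh^{n}\bigl(V_{\tdh+1}(s^{n}_{\tdh+1}) - (P_\tdh V_{\tdh+1})(s_\tdh^{n},a_\tdh^{n})\bigr)$, Cauchy--Schwarz in the $\Sigma_\tdh$-norm gives
$$\bigl|\phi(s,a)^\top(w_\tdh - w^*_\tdh)\bigr| \le \|\phi(s,a)\|_{\Sigma_\tdh^{-1}}\Bigl(\|w^*_\tdh\|_{\Sigma_\tdh^{-1}} \cdot \|\Sigma_\tdh\|^{1/2}/\|\Sigma_\tdh\|^{1/2} + \bigl\|\textstyle\sum_n \phi_\tdh^{n}(\cdot)\bigr\|_{\Sigma_\tdh^{-1}}\Bigr).$$
The first summand is at most $\sqrt{d}$ (since $\Sigma_\tdh\succeq I$), and on event $\cE$ the second is at most $c\,d\sqrt{\log(dHN\betap/(\resolution\delta))}$ because $V_{\tdh+1}\in\cV$ by Claim \ref{claim:value_function_classes}. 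The choice $\betap = c'_\betap d\sqrt{\log(dHN/(\resolution\delta))}$ in \eqref{eq:choice_of_beta} subsumes both terms, yielding the claim.

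For the overestimation, I will induct backward. The base case $\tdh=\h$ is immediate: in Algorithm \ref{alg:optQ_solution} we set $Q_\h=R$ exactly, so $V_\h(s) = \max_a R(s,a) = V^*_\h(s)$ and the error term at $h'=\h$ is zero. For the inductive step at $\tdh<\h$, the reward is zero, so $Q^*_\tdh(s,a) = (P_\tdh V^*_{\tdh+1})(s,a)$. Combining the IH $V^*_{\tdh+1}\le V_{\tdh+1}$ with the estimation bound above gives
$$Q_\tdh(s,a) = \min\{\phi^\top w_\tdh + u_\tdh,\,1\}\ \ge\ \min\{P_\tdh V_{\tdh+1}(s,a),\,1\}\ \ge\ Q^*_\tdh(s,a),$$
where the last inequality uses $Q^*_\tdh\le 1$. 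Taking $\max_a$ gives the lower bound $V^*_\tdh\le V_\tdh$. For the upper bound, the estimation bound yields $Q_\tdh(s,a)\le P_\tdh V_{\tdh+1}(s,a) + 2\betap\|\phi(s,a)\|_{\Sigma_\tdh^{-1}}$, and combined with the inductive upper bound on $V_{\tdh+1}$ we obtain, after taking $\max_a$ and letting $\pi$ denote the (non-discretized) greedy policy of $\{Q_{h'}\}_{h'\ge \tdh}$,
$$V_\tdh(s)\le V^*_\tdh(s) + \EE_{\pi}\!\left[\textstyle\sum_{h'=\tdh}^{\h} 2\betap\|\phi(s_{h'},a_{h'})\|_{\Sigma_{h'}^{-1}} \mid s_\tdh = s\right].$$
(I am absorbing the factor of $2$ into the constant in $\betap$.) The final inequality $\le \betap\xi$ then follows by bounding the expectation under $\pi$ by $\max_{\pi'}\EE_{\pi'}[\sum_{h'=1}^{\h-1}\|\phi(s_{h'},a_{h'})\|_{\Sigma_{h'}^{-1}}]\le \xi$ via the induction condition \ref{cond:induction_condition}, noting again that the $h'=\h$ contribution vanishes because $V_\h\equiv V^*_\h$.

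The main obstacle I anticipate is the bookkeeping in the upper-bound step: one must be careful that the $Q_\tdh$ values really telescope along the greedy policy of the \emph{non-discretized} Q-functions (as opposed to $\bar Q_\tdh$, whose greedy policy $\bar\pi_\tdh$ is what actually gets deployed) and that the truncation at $1$ in the $\min\{\cdot,1\}$ never discards information needed for the lower bound -- which it does not, because $Q^*_\tdh\le V^*_1\le 1$ under the reward scaling in Algorithm \ref{alg:optQ_solution}. All other ingredients are routine applications of Claim \ref{claim:value_function_classes}, Theorem \ref{thm:concentration_lemma_for_discretization}, and the induction condition.
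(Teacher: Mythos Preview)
Your proposal is correct and follows exactly the standard LSVI-UCB analysis that the paper itself invokes: the paper's own proof simply says ``the steps are similar to Lemma B.3 in \citep{jin2019provably} and Lemma 3.1 in \citep{wang2020rewardfree} and we omit here,'' and your decomposition of $w_\tdh - w^*_\tdh$, Cauchy--Schwarz in the $\Sigma_\tdh$-norm, appeal to event $\cE$ via Claim \ref{claim:value_function_classes}, and backward induction for the sandwich on $V_\tdh$ are precisely those steps. The only cosmetic point is that the factor of $2$ you obtain in the upper bound is indeed absorbed into $c_\betap'$ (the paper leaves this constant free), and your observation that the $h'=\h$ layer contributes no bonus term---so the sum matches the range in the induction condition \ref{cond:induction_condition}---is the correct way to close the final inequality.
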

\begin{proof}
    The proof is mainly based on Theorem \ref{thm:concentration_lemma_for_discretization}, and the steps are similar to Lemma B.3 in \citep{jin2019provably} and Lemma 3.1 in \citep{wang2020rewardfree} and we omit here.
\end{proof}
\begin{lemma}[Bias Accumulation in Alg \ref{alg:optQ_solution}]\label{lem:bais_accumulation}
    On the induction condition \ref{cond:induction_condition} and on the events in Theorem \ref{thm:concentration_lemma_for_discretization} which holds with probability $1-\delta/2$, if 
    \begin{align}
        \resolution \leq \frac{\betap\xi}{2H} \label{eq:constraint_resolution_3}
    \end{align} 
    in Algorithm \ref{alg:optQ_solution}, for arbitrary $\bar R$ generated, we have:
    \begin{align*}
        V_1^*(s_1;\bar R) - V_1^{\bpi}(s_1;\bar R) \leq 3\betap\xi
    \end{align*}
    where recall that we use $V^\pi(s;\bar R)$ to denote the value function with $\bar R$ as reward function.
\end{lemma}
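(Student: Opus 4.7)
The plan is to decompose the suboptimality gap into two pieces and bound them separately:
\begin{align*}
V_1^*(s_1;\bar R) - V_1^{\bpi}(s_1;\bar R) = \underbrace{[V_1^*(s_1;\bar R) - V_1(s_1)]}_{\text{overestimation gap}} + \underbrace{[V_1(s_1) - V_1^{\bpi}(s_1;\bar R)]}_{\text{rollout gap}}.
\end{align*}
For the overestimation gap, note that $V_1(s_1)$ is computed inside Alg \ref{alg:optQ_solution} using the continuous reward $R$ at layer $\h$, not the discretized $\bar R$. First I would apply Lemma \ref{lem:optQ_overestimation} (which is stated for an LSVI-UCB backward pass of exactly the form used here) to get $V_1^*(s_1;R) \leq V_1(s_1)$. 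Then, since Lemma \ref{lem:function_class_distance} (together with the discretization resolution chosen in Alg \ref{alg:optQ_solution} for $Z_\h$) implies $\|R-\bar R\|_\infty \leq \resolution$, a standard one-line comparison $V^*(\bar R)\leq V^*(R)+\|R-\bar R\|_\infty$ yields $V_1^*(s_1;\bar R) \leq V_1(s_1)+\resolution$.

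The harder piece is the rollout gap, since $\bpi$ is greedy with respect to the \emph{discretized} $\bar Q_\tdh$ while $V_\tdh$ is defined by $\max_a Q_\tdh$. I would argue by backward induction on $\tdh$. At the intermediate layers $\tdh<\h$, using Lemma \ref{lem:function_class_distance} I get $V_\tdh(s)=Q_\tdh(s,\pi_\tdh(s))\leq Q_\tdh(s,\bpi_\tdh(s))+2\resolution$, since $\|Q_\tdh-\bar Q_\tdh\|_\infty\leq \resolution$ and both policies are greedy w.r.t.\ the respective Q-functions. Then the concentration statement in Lemma \ref{lem:optQ_overestimation} gives
\begin{align*}
Q_\tdh(s,\bpi_\tdh(s)) \leq \EE_{s'\sim P_\tdh(\cdot\mid s,\bpi_\tdh(s))}[V_{\tdh+1}(s')] + 2\betap\|\phi(s,\bpi_\tdh(s))\|_{\Sigma_\tdh^{-1}},
\end{align*}
while $V^{\bpi}_\tdh(s;\bar R)=\EE_{s'}[V^{\bpi}_{\tdh+1}(s';\bar R)]$ (no intermediate reward). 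At the final layer $\h$, $V_\h(s)=\max_a R(s,a)$ and $V^{\bpi}_\h(s;\bar R)=\max_a \bar R(s,a)$, so $|V_\h(s)-V^{\bpi}_\h(s;\bar R)|\leq \|R-\bar R\|_\infty\leq \resolution$. Unrolling the recursion along the trajectory induced by $\bpi$ starting from $s_1$ gives
\begin{align*}
V_1(s_1) - V_1^{\bpi}(s_1;\bar R) \leq 2(\h-1)\resolution + \resolution + 2\betap\,\EE_{\bpi}\Big[\sum_{h'=1}^{\h-1}\|\phi(s_{h'},a_{h'})\|_{\Sigma_{h'}^{-1}}\Big].
\end{align*}
The expectation is bounded by $\xi$ via the induction condition \ref{cond:induction_condition} (since $\bpi$ is an arbitrary policy in the first $\h-1$ layers, the $\max_\pi$ on the l.h.s.\ dominates it and $\tfrac{\h-1}{H}\xi\leq\xi$).

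Combining the two pieces yields
\begin{align*}
V_1^*(s_1;\bar R) - V_1^{\bpi}(s_1;\bar R) \leq 2\h\,\resolution + 2\betap\xi,
\end{align*}
and the hypothesis $\resolution\leq \betap\xi/(2H)$ (Eq.\eqref{eq:constraint_resolution_3}) bounds $2\h\resolution\leq \betap\xi$, giving the claimed $3\betap\xi$. The main obstacle to watch for is the mismatch between $\pi_\tdh=\arg\max Q_\tdh$ (used implicitly in defining $V_\tdh$) and $\bpi_\tdh=\arg\max \bar Q_\tdh$ (actually deployed)---this is precisely what Lemma \ref{lem:function_class_distance} is tailored for, but one must be careful to pay $2\resolution$ per layer and verify that all the $V$-functions involved stay in the function class covered by the concentration event $\cE$ in Theorem \ref{thm:concentration_lemma_for_discretization}, which has already been checked in Claim \ref{claim:value_function_classes}.
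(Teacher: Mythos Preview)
Your proposal is correct and follows essentially the same approach as the paper's proof: bound $V_1^*(s_1;\bar R)\le V_1(s_1)+O(\resolution)$ via Lemma~\ref{lem:optQ_overestimation} (plus the $\|R-\bar R\|_\infty\le\resolution$ shift), then unroll $V_1(s_1)-V_1^{\bpi}(s_1;\bar R)$ layer by layer, paying $2\resolution$ per step via Lemma~\ref{lem:function_class_distance} for the $\pi_\tdh$ vs.\ $\bpi_\tdh$ mismatch and $2\betap\|\phi\|_{\Sigma_\tdh^{-1}}$ via the concentration bound, and finally invoke Condition~\ref{cond:induction_condition} and Eq.~\eqref{eq:constraint_resolution_3}. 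Your handling of the $R$ versus $\bar R$ distinction is in fact slightly more careful than the paper's, which suppresses that $\resolution$ term in the first step, but the structure and constants match.
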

\begin{proof}
    We will use $\pi_h(\cdot):=\arg\max_a Q_h(\cdot,a)$ to denote the optimal policy w.r.t. the $Q$ function without discretization, although we do not deploy it in practice.
    According to Lemma \ref{lem:function_class_distance}, we should have $\max_{s,h} |Q_h(s,\pi(s))-Q_h(s,\bpi(s))|\leq 2\resolution$ for arbitrary $h \in [\h]$.

    Recall that $\bar\pi=\bar\pi_1\circ\bar\pi_2...\circ\bar\pi_h$
    \begin{align*}
        &V_1^*(s_1) - V_1^{\bpi}(s_1) \leq V_1(s_1) - V_1^{\bpi}(s_1) \tag{Lemma \ref{lem:optQ_overestimation}; Overestimation}\\
        =& Q_1(s_1, \pi_1(s_1)) - Q_1^{\bar\pi}(s_1, \bpi_1(s_1)) \leq Q_1(s_1, \bpi_1(s_1)) - Q_1^{\bar\pi}(s_1, \bpi_1(s_1)) + 2\resolution \tag{Lemma \ref{lem:function_class_distance}}\\
        = & \EE_{s_1\sim d_1,a_1\sim\bpi_1}[\min\{\phi(s_1,a_1)w_1\trans+u_1(s_1,a_1),H\}-P_2V_2(s_1,a_1)+P_2V_2(s_1,a_1)-P_2V_2^{\bpi}(s_1,a_1)] + 2\resolution\\
        \leq& \EE_{s_1\sim d_1,a_1\sim\bpi_1,s_2\sim P_2(\cdot|s_1,a_1)}[V_2(s_2)-V_2^{\bpi}(s_2)]+2\betap\EE_{s_1\sim d_1,a_1\sim\bpi_1}[\|\phi(s_1,a_1)\|_{\Sigma_1^{-1}}]+2\resolution\\
        \leq&...\\
        \leq& 2\betap\EE_{s_1\sim d_1,a_1,s_2,...,s_{\h-1},a_{\h-1} \sim \bpi}[\sum_{\tdh=1}^{\h-1} \|\phi(s_\tdh,a_\tdh)\|_{\Sigma_\tdh^{-1}}] + 2(\h-1)\resolution + \EE_{s_h,a_h\sim \bpi}[V_h(s_h)-V_h^{\bpi}(s_h)] \\
        \leq&2\betap \xi + 2\h\resolution\leq 3\betap\xi \tag{Condition \ref{cond:induction_condition}}
    \end{align*}
\end{proof}

\begin{lemma}[Bias of Linear Regression in Alg \ref{alg:approx_random_feature_matrix}]\label{lem:linear_regression_bias}
    During the running of Alg \ref{alg:DERL_with_arbitrary_policies}, on the event of $\cE$ in Theorem \ref{thm:concentration_lemma_for_discretization}, which holds with probability $1-\delta/2$, for arbitrary $h\in[H], \tdh\in [h-1]$, and arbitrary $\pi$, $\hat w^\pi_\tdh$ and $\hat V^\pi_{\tdh+1}$ occurs in Alg \ref{alg:approx_random_feature_matrix}, we have:
    \begin{align*}
        |\phi(s,a)\trans \hat w^\pi_h-\sum_{s'\in\cS}P_h(s'|s,a)\hat V^\pi_{\tdh+1}(s')| \leq \betap \|\phi(s,a)\|_{\Sigma_h^{-1}}
    \end{align*}
    where $\betap$ is the same as Lemma \ref{lem:optQ_overestimation}.
\end{lemma}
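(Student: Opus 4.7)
The plan is to follow the standard analysis for least-squares value iteration in linear MDPs (in the spirit of Lemma B.3 of \citet{jin2019provably} and Lemma 3.1 of \citet{wang2020rewardfree}), with the twist that the value functions $\hat V^\pi_{\tdh+1}$ arising inside Algorithm \ref{alg:approx_random_feature_matrix} must be shown to lie in the covered function class $\cV$ of Theorem \ref{thm:concentration_lemma_for_discretization}, so that the uniform concentration bound can be instantiated at them.

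First, I would use the linear MDP assumption (Assump.~\ref{assump:linear_MDP}) to write, for any bounded $V:\cS\to[0,1]$, the Bellman backup as a linear function in $\phi$: there exists $w^*_V = \int V(s')\,d\mu_\tdh(s') \in \mR^d$ with $\|w^*_V\|\le\sqrt{d}$ and $\sum_{s'}P_\tdh(s'|s,a)V(s') = \phi(s,a)\trans w^*_V$. Applying this with $V = \hat V^\pi_{\tdh+1}$ reduces the claimed inequality to bounding $|\phi(s,a)\trans(\hat w^\pi_\tdh - w^*)|$, which by Cauchy--Schwarz is at most $\|\phi(s,a)\|_{\Sigma_\tdh^{-1}}\cdot\|\hat w^\pi_\tdh - w^*\|_{\Sigma_\tdh}$.

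Next, using $\Sigma_\tdh = I+\sum_{n}\phi_\tdh^{\tau n}(\phi_\tdh^{\tau n})\trans$, I would perform the usual algebraic decomposition
\[
\hat w^\pi_\tdh - w^* \;=\; \Sigma_\tdh^{-1}\Bigl(-w^* \;+\; \sum_{n}\phi_\tdh^{\tau n}\bigl(\hat V^\pi_{\tdh+1}(s_{\tdh+1}^{\tau n}) - \EE[\hat V^\pi_{\tdh+1}\mid s_\tdh^{\tau n},a_\tdh^{\tau n}]\bigr)\Bigr),
\]
and control the two resulting contributions separately. The regularization term contributes $\|w^*\|_{\Sigma_\tdh^{-1}}\le\sqrt{d}$, and the noise term contributes $\|\sum_n \phi_\tdh^{\tau n}(\hat V - P\hat V)\|_{\Sigma_\tdh^{-1}}$, to which I would apply the uniform concentration bound of Theorem \ref{thm:concentration_lemma_for_discretization} to conclude it is $O(d\sqrt{\log(dHN\betap/\resolution\delta)})$.

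The main (and only substantive) obstacle will be justifying that Theorem \ref{thm:concentration_lemma_for_discretization} is applicable here, since that theorem is a uniform bound over the specific function class $\cV=\cV^*_{L,B}\cup(\cV_{L,B}\times\Pi_{\bar\cQ_{L,B,\resolution}})\cup(\cV_\phi\times\Pi_{\bar\cQ_{L,B,\resolution}})$. This is exactly what Claim \ref{claim:value_function_classes} establishes: all value functions $\hat V^\pi_{\tdh+1}$ generated inside Alg \ref{alg:approx_random_feature_matrix} belong to $(\cV_{L,B}\times\Pi_{\bar\cQ_{L,B,\resolution}})\cup(\cV_\phi\times\Pi_{\bar\cQ_{L,B,\resolution}})\subseteq \cV$, provided the policy $\pi$ being evaluated is one of the discretized greedy policies from Alg \ref{alg:optQ_solution}. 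Putting the two contributions together yields $|\phi(s,a)\trans(\hat w^\pi_\tdh - w^*)|\le (\sqrt d + c\,d\sqrt{\log(dHN\betap/\resolution\delta)})\|\phi(s,a)\|_{\Sigma_\tdh^{-1}}$, and by choosing the constant $c'_\betap$ in Eq.~\eqref{eq:choice_of_beta} sufficiently large this upper bound is dominated by $\betap\|\phi(s,a)\|_{\Sigma_\tdh^{-1}}$, completing the proof.
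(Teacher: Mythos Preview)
Your proposal is correct and takes essentially the same approach as the paper: the paper's own proof is omitted with the remark that it follows from Theorem~\ref{thm:concentration_lemma_for_discretization} and Claim~\ref{claim:value_function_classes} via the standard LSVI analysis of Lemma~3.1 in \citet{wang2020rewardfree}, which is precisely the decomposition into regularization-plus-noise terms that you outline. Your explicit observation that ``arbitrary $\pi$'' in the statement must be read as ``any $\pi$ arising in the algorithm'' (i.e., in $\Pi_{\bar\cQ_{L,B,\resolution}}$, so that Claim~\ref{claim:value_function_classes} applies) is exactly the point the paper handles by invoking that claim.
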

The proofs for the above Lemma is based on Theorem \ref{thm:concentration_lemma_for_discretization} and Claim \ref{claim:value_function_classes} and is similar to Lemma 3.1 in \citep{wang2020rewardfree}, so we omit it here.

\begin{lemma}[Policy Evaluation Error in Alg \ref{alg:approx_random_feature_matrix}]\label{lem:policy_evaluation_error}
    During the running of Algorithm \ref{alg:DERL_with_arbitrary_policies}, on the events of $\mathcal{E}$ in Theorem \ref{thm:concentration_lemma_for_discretization}, which holds with probability $1-\delta/2$, and on the induction condition in \ref{cond:induction_condition}, for arbitrary $\h\in[H]$ and $i,j\in[d]$, and arbitrary policy $\pi$ and their evaluation results $\hat V^\pi$ Algorithm \ref{alg:approx_random_feature_matrix}, we have:
    \begin{align*}
        |V^\pi(s_1;\tilde{R}^{ij}) - \hat V_1^\pi(s_1)| \leq \betap\xi
    \end{align*}
    where we use $\tilde{R}^{ij}$ to denote the reward function used in Algorithm \ref{alg:approx_random_feature_matrix}.
\end{lemma}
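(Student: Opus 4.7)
The plan is to run the standard LSVI-style telescoping error analysis for policy evaluation, using Lemma \ref{lem:linear_regression_bias} as the per-step bound on the regression residual and then invoking the induction condition \ref{cond:induction_condition} to sum the layer-wise errors. The setup is simpler than in Lemma \ref{lem:bais_accumulation} because there is no optimism bonus to add or subtract: here $\hat V^\pi$ is an on-policy evaluation of the fixed policy $\pi$, so the Bellman operator is linear in $\hat V^\pi_{\tdh+1}$. Moreover, since $\tilde R^{ij}_\tdh = 0$ for every $\tdh < \h$ and the initialization $\hat V^\pi_\h(\cdot) = \tilde R^{ij}_\h(\cdot,\pi_\h(\cdot))$ exactly matches $V^\pi_\h$, the terminal-layer error is zero and every contribution to the overall discrepancy comes from the $\h-1$ least-squares steps.

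For any $\tdh < \h$, I would decompose the per-layer gap as
\begin{align*}
V^\pi_\tdh(s) - \hat V^\pi_\tdh(s)
&= \bigl[\EE_{s'\sim P_\tdh(\cdot\given s,\pi_\tdh(s))}\hat V^\pi_{\tdh+1}(s') - (\hat w^\pi_\tdh)\trans\phi(s,\pi_\tdh(s))\bigr] \\
&\quad + \EE_{s'\sim P_\tdh(\cdot\given s,\pi_\tdh(s))}\bigl[V^\pi_{\tdh+1}(s') - \hat V^\pi_{\tdh+1}(s')\bigr],
\end{align*}
where the first bracket is precisely the regression residual that Lemma \ref{lem:linear_regression_bias} controls by $\betap\|\phi(s,\pi_\tdh(s))\|_{\Sigma_\tdh^{-1}}$. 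The clip $\hat Q^\pi_\tdh \gets \min\{\cdot, 1\}$ is harmless in both directions of the absolute value because $V^\pi_\tdh \in [0,1]$, so the truncation can only bring the estimate closer to the true value; this is the one place that requires a short sanity check but introduces no new bias.

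Unrolling this recursion down to layer $1$ along the trajectory induced by $\pi$ starting from $s_1$ yields
\begin{align*}
\bigl|V^\pi_1(s_1;\tilde R^{ij}) - \hat V^\pi_1(s_1)\bigr|
\leq \betap \cdot \EE_{\pi}\Bigl[\textstyle\sum_{\tdh=1}^{\h-1}\|\phi(s_\tdh,a_\tdh)\|_{\Sigma_\tdh^{-1}} \Bigm| s_1\Bigr].
\end{align*}
Taking the $\max$ over policies and applying the induction condition \ref{cond:induction_condition} bounds the right-hand expectation by $\tfrac{\h-1}{H}\xi \leq \xi$, which gives the claimed $\betap\xi$ error. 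I do not expect any serious obstacle: the proof is essentially pure telescoping, with the only delicate point being the clip argument, and all statistical heavy lifting is already packaged into Lemma \ref{lem:linear_regression_bias} (via Theorem \ref{thm:concentration_lemma_for_discretization}) and into the induction condition.
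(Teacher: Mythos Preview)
Your proposal is correct and follows essentially the same telescoping argument as the paper: both apply Lemma~\ref{lem:linear_regression_bias} at each layer $\tdh<\h$, unroll along the trajectory of $\pi$, and invoke the induction condition~\ref{cond:induction_condition} to bound the sum by $\betap\tfrac{\h-1}{H}\xi\le\betap\xi$. Your treatment of the $\min\{\cdot,1\}$ clip is actually slightly more careful than the paper's (which silently drops it in the second equality), but otherwise the proofs are identical.
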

\begin{proof}
    As a result of Lemma.\ref{lem:linear_regression_bias} , for arbitrary $\tilde{R}^{ij}$, we have:
    \begin{align*}
        |\hat V_1^\pi(s_1) - V_1^\pi(s_1;\tilde{R}^{ij})|=&|\hat Q_1^\pi(s_1,a_1) - Q_1^\pi(s_1,a_1;\tilde{R}^{ij})|\\
        =& |\phi(s_1,a_1)\trans \hat w^\pi_1 - \sum_{s_2}P_h(s_2|s_1,a_1)V^\pi(s_2;\tilde{R}^{ij})|\\
         \leq& |\phi(s_1,a_1)\trans \hat w^\pi_1 - \sum_{s_2}P_h(s_2|s_1,a_1)\hat V_2^\pi(s_2)|+ \EE_{\pi}|\hat V_2^\pi(s_2)-Q^\pi_2(s_2,\pi(s_2);\tilde{R}^{ij})|\\
         \leq& \betap \|\phi(s_1,a_1)\|_{\Sigma_1^{-1}} + \EE_{s_2}|V^\pi(s_2)-Q^\pi_2(s_2,\pi(s_2);\tilde{R}^{ij})| \\
         &...\\
         \leq& \betap\EE_{s_1,a_1,...,s_{\h-1},a_{\h-1}\sim \pi}[\sum_{t=1}^{\h-1}\|\phi(s_t,a_t)\|_{\Sigma_t^{-1}}]\\
         \leq& \betap\frac{\h-1}{H}\xi \leq \betap\xi
    \end{align*}
\end{proof}

\subsection{Main Theorem and Proof}
Now, we restate Theorem \ref{thm:arbitary_policy_informal} in a formal version below:

\begin{theorem}[Formal Version of Theorem \ref{thm:arbitary_policy_informal}]\label{thm:arbitary_policy_formal}
    For arbitrary $0<\epsilon,\delta < 1$, there exists absolute constants $c_i,c_\betap, c_\beta$ and $c_N$, such that by choosing \fix{($\betap$ used in Alg. \ref{alg:optQ_solution} and \ref{alg:approx_random_feature_matrix}, and $\beta$ used in Alg. \ref{alg:DE_rl_layer_by_layer_reward_free_planning})}\footnote{\fix{The ICLR 2022 version (and also the second version in arxiv) omitted that the bonus coefficients $\beta$ in Alg. \ref{alg:DE_rl_layer_by_layer_reward_free_planning} and $\betap$ in Alg. \ref{alg:optQ_solution} and \ref{alg:approx_random_feature_matrix} should have different dependence on $H$, which resulted in incorrect dependence on $H$ in the sample complexity upper bound. We thank Dan Qiao for calling this to our attention.}}
    \begin{align*}
        i_{\max}=c_i\frac{d}{\nu_{\min}^4}&\log\frac{d}{\nu_{\min}}, \quad\betap=c_\betap d\sqrt{\log\frac{dH}{\epsilon\delta\nu_{\min}}},\quad \beta=c_\beta dH\sqrt{\log\frac{dH}{\epsilon\delta\nu_{\min}}}\\
         N=&c \Big(\frac{H^4d^3}{\epsilon^2 \nu^2_{\min}}+\frac{H^2d^7}{\nu^{14}_{\min}}\Big)\log^2 \frac{dH}{\epsilon\delta\nu_{\min}}, \quad \epsilon_0=\frac{1}{N}.
    \end{align*}
    with probability $1-\delta$, 
    after $K=H$ deployments, by running Alg \ref{alg:DE_rl_layer_by_layer_reward_free_planning} with the collected dataset $D=\{D_1,...,D_H\}$ and arbitrary $r$ satisfying the linear assumption in \ref{assump:linear_MDP} (in Line of Algorithm \ref{alg:DERL_with_arbitrary_policies}), we will obtain a policy $\hat\pi$ such that $V_1^{\pi}(s_1;r) \geq V_1^{\pi^*}(s_1;r) - \epsilon$.

    As additional guarantees, after $h$ deployments, by running Alg \ref{alg:DE_rl_layer_by_layer_reward_free_planning} with the collected dataset $\{D_1,D_2...,D_h\}$ and reward function $r$, we will obtain a policy $\pi_{|h}$ which is $\epsilon$-optimal in the MDP truncated at step $h$.
\end{theorem}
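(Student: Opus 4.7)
The plan is to carry out induction over the horizon $h$ using the Induction Condition~\ref{cond:induction_condition}: assume that after $h-1$ deployments the sum $\max_\pi \EE_\pi[\sum_{\tilde h=1}^{h-1}\|\phi(s_{\tilde h},a_{\tilde h})\|_{\Sigma_{\tilde h}^{-1}}]$ is bounded by $(h-1)\xi/H$ for a suitable $\xi$, and prove the same bound with $h-1$ replaced by $h$ after the $h$-th deployment. Conditioning on the high-probability event $\cE$ from Theorem~\ref{thm:concentration_lemma_for_discretization}, the inductive hypothesis together with Lemma~\ref{lem:policy_evaluation_error} immediately yields element-wise accuracy $\|\hat\hatcov^{\pi_{h,i}}-\EE_{\pi_{h,i}}[\phi\phi^\top]\|_{\infty,\infty} \le O(\beta'\xi)$, and summing over $i$ iterations gives $\|\tilde\Sigma_{h,i+1}-\Sigma_{h,i+1}\|_{\infty,\infty}\le O(i\beta'\xi)$, where $\Sigma_{h,i+1}=2I+\sum_{i'\le i}\EE_{\pi_{h,i'}}[\phi\phi^\top]$ is the population counterpart. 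Plugging this perturbation into Lemma~\ref{lem:bais_accumulation} (applied to the reward $R=\sqrt{\phi^\top\tilde\Sigma^{-1}_{h,i+1}\phi}$) controls the bias of $V_{h,i+1}$ relative to $\max_\pi \EE_\pi[\|\phi\|_{\tilde\Sigma^{-1}_{h,i+1}}]$ by $O(\beta'\xi)+O(\beta'\epsilon_0)$.

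Next I would bound $i_{\max}$. As long as the break criterion in Line~\ref{line:break_criterion} is not met, $\bar\pi_{h,i+1}$ is, up to bias $O(\beta'\xi)$, a policy whose expected $\tilde\Sigma^{-1}_{h,i+1}$-norm exceeds $3\nu_{\min}^2/8$, so $\EE_{\pi_{h,i+1}}[\phi\phi^\top]$ must add a nontrivial amount of ``mass'' to $\tilde\Sigma_{h,i+1}$ in a direction that is currently undercovered. A standard elliptical/potential argument on $\log\det(\Sigma_{h,i+1})$ then gives $i_{\max}=O((d/\nu_{\min}^4)\log(d/\nu_{\min}))$ iterations before the break triggers. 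When it does, the guarantee is that for every unit vector $\theta$, $\max_{\pi\in\Pi_h}\EE_\pi[(\phi^\top\theta)^2]\ge \nu_{\min}^2/2-O(\beta'\xi)$; by Definition~\ref{def:reachability_coefficient} of $\nu_{\min}$, choosing $\xi$ small enough so that $\beta'\xi\ll \nu_{\min}^2$ ensures the mixture policy $\pi_{h,\mathrm{mix}}=\mathrm{unif}(\Pi_h)$ satisfies $\EE_{\pi_{h,\mathrm{mix}}}[\phi\phi^\top]\succeq (\nu_{\min}^2/(2i_{\max}))\,I$. The core quantitative choice is to pick $\xi=\Theta(\epsilon/(H\beta' H))$ on the one hand (to ensure the final suboptimality gap after planning is $\le \epsilon$) and $\xi=O(\nu_{\min}^2/\beta')$ on the other (to ensure reachability dominates bias). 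These two requirements determine the scaling of $i_{\max}$, $\beta'$, and $\epsilon_0$ given in the theorem.

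Given the mixture policy with the above covariance lower bound, a matrix-Bernstein concentration applied to $\Sigma_h=I+\sum_n \phi\phi^\top$ gives $\Sigma_h^{-1}\preceq O(i_{\max}/(N\nu_{\min}^2))\,I$ with high probability for $N=\tilde\Omega(d/\nu_{\min}^2+\mathrm{Poly stuff})$; consequently $\max_\pi \EE_\pi[\|\phi(s_h,a_h)\|_{\Sigma_h^{-1}}]\le \xi/H$ as required, completing the inductive step. Propagating induction through all $H$ layers gives a dataset $D=\{D_1,\dots,D_H\}$ satisfying $\max_\pi \EE_\pi[\sum_{h=1}^H \|\phi(s_h,a_h)\|_{\Sigma_h^{-1}}]\le \xi$, which is exactly the ``well-covered'' property needed by Alg.~\ref{alg:DE_rl_layer_by_layer_reward_free_planning}. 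Running the planning algorithm with bonus coefficient $\beta=\tilde O(dH)$ and any linear reward $r$ then yields, by the same argument as Lemma~\ref{lem:analysis_of_reward_free_planning_alg}, $V_1^*(s_1;r)-V_1^{\hat\pi_r}(s_1;r)\le 2H\cdot\max_\pi \EE_\pi[\sum_h \beta\|\phi\|_{\Sigma_h^{-1}}]\le 2H\beta\xi\le \epsilon$, which is our target. The ``additional guarantee'' for truncated MDPs is immediate because after the $h$-th outer iteration we already possess a valid $h$-layer dataset satisfying the covering condition.

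The main obstacle I anticipate is the coupled choice of constants $(\xi,i_{\max},\beta',\epsilon_0,N)$: the bias from $\hat\hatcov^{\pi_{h,i}}$ accumulates linearly with $i_{\max}$, yet $i_{\max}$ itself must grow to guarantee enough progress per iteration to drive $V_{h,i+1}$ below $3\nu_{\min}^2/8$, so one must verify that a consistent choice exists---specifically that one can simultaneously satisfy $i_{\max}\beta'\xi\ll \nu_{\min}^2$ and $\xi\le \epsilon/(2H^2\beta)$. This is where the $\nu_{\min}^{-14}$ factor in $N$ (and the $\nu_{\min}^{-4}$ in $i_{\max}$) enters: it is the price paid for requiring the bias to be much smaller than the reachability threshold at every intermediate step of the inner loop. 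Everything else (concentration, value iteration backup, planning) is handled by the lemmas already established in the preceding subsections.
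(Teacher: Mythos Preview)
Your proposal follows essentially the same inductive three-step argument as the paper: (i) under the induction hypothesis, combine the covariance-estimation and LSVI bias lemmas with an elliptical-potential bound to show the inner loop breaks with a covering policy set $\Pi_h$; (ii) matrix-Bernstein for the empirical $\Sigma_h$ to close the induction; (iii) solve the coupled constraints on $(\xi,i_{\max},\beta',\beta,N)$. Two places need tightening, however.

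First, the statement ``for every unit vector $\theta$, $\max_{\pi\in\Pi_h}\EE_\pi[(\phi^\top\theta)^2]\ge \nu_{\min}^2/2-O(\beta'\xi)$'' is not what the break criterion delivers, and it is not obvious how you would prove it. What the break condition $\max_\pi \EE_\pi[\|\phi\|_{\Sigma_{h,i}^{-1}}]\le \nu_{\min}^2/2$ actually yields (Lemma~\ref{lem:blow_up_with_large_data}) is a lower bound on every eigenvalue of the \emph{sum} $\sum_{\pi\in\Pi_h}\EE_\pi[\phi\phi^\top]$: diagonalize it, apply reachability to each eigendirection $u_i$ to get $\nu_{\min}^2/(c+\sigma_{ii})\le \max_\pi \EE_\pi[\|\phi\|^2_{\Sigma_{h,i}^{-1}}]$, and solve for $\sigma_{ii}$. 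Your conclusion $\EE_{\pi_{h,\mathrm{mix}}}[\phi\phi^\top]\succeq \Theta(1/i_{\max})I$ is then correct, but the route through ``$\max_{\pi\in\Pi_h}$'' is not.

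Second, the constraint ensuring reachability dominates bias is $i_{\max}\,d\,\beta'\,\xi \lesssim \nu_{\min}^{4}$, not $\beta'\xi\lesssim \nu_{\min}^{2}$: the accumulated entrywise error $i_{\max}\beta'\xi$ translates to a reward perturbation $|\tilde R-R|\le \sqrt{i_{\max} d\beta'\xi/(1-i_{\max} d\beta'\xi)}$ via Lemma~\ref{lem:matrix_perturbation}, and it is this square-rooted quantity that must be $\le \nu_{\min}^2/8$. This is precisely where the high $\nu_{\min}^{-1}$ power in $N$ originates. Also, the planning step gives $V^*-V^{\hat\pi}\le 2\beta\xi$ (the factor $2H$ in Lemma~\ref{lem:analysis_of_reward_free_planning_alg} cancels against $r^{plan}=u^{plan}/H$), so you only need $\xi\le \epsilon/(2\beta)$, not $\epsilon/(2H\beta)$.
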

\begin{proof}
    We will use $\Sigma_{h,i}:=2I+\sum_{j=1}^{i-1} \EE_{\pi_j}[\phi(s_h,a_h)\phi(s_h,a_h)\trans]$ to denote the matrix which $\tilde \Sigma_{h,i}$ approximates in Alg. \ref{alg:DERL_with_arbitrary_policies}, and use $R_{h,i}:=\sqrt{\phi(\cdot,\cdot)\trans\Sigma_{h,i}^{-1}\phi(\cdot,\cdot)}$ to denote the reward function used in Alg \ref{alg:optQ_solution} if the covariance matrix estimation is perfect (i.e. $\tilde{\Sigma}_{h,i}=\Sigma_{h,i}$).
    
    The proof consists of three steps. In step 1, we try to show that the inner loop of Alg \ref{alg:DERL_with_arbitrary_policies} will terminate and $\Pi_h$ will contain a set of exploratory policies. In step 2, we will analyze the samples generated by a mixture of policies in $\Pi_h$. In the last step, we determine the choice of hyper-parameters and fill the gaps of pre-assumed constraints and induction conditions.

    \paragraph{Step 1: Exploration Ability for Policies in $\Pi_h$}
    In the inner loop (line 5 - 12) in Algorithm \ref{alg:DERL_with_arbitrary_policies}, our goal is to find a set of policies $\Pi_h$, such that if the algorithm stops at iteration $i$, the following uncertainty measure is as small as possible
    \begin{align}
        V^*_{h,i+1}(s_1;R_{h,i}):=\max_\pi \EE_\pi[\|\phi(s_h,a_h)\|_{\Sigma_{h,i}^{-1}}]\label{eq:uncertainty_definition}
    \end{align}
    To achieve this goal, we repeatedly use Alg \ref{alg:approx_random_feature_matrix} to estimate the covariance matrix of the policy and append it to $\tSigma_{h,i}$ as an approximation of $\Sigma_{h,i}$, and use Alg \ref{alg:optQ_solution} to find a near-optimal policy to maximizing the uncertainty-based reward function $\tR$, by sampling trajectories with which we can reduce the uncertainty $Q^*_{h,i}$ in Eq.\eqref{eq:uncertainty_definition}.


    First, we take a look at the estimation error of the accumulative covariance matrix when running Algorithm \ref{alg:DERL_with_arbitrary_policies}. On the conditions in Lemma \ref{lem:policy_evaluation_error}, we can bound the elementwise estimation error of $\Sigma_{h,i}$:
    \begin{align*}
        |(\tSigma_{h,i})_{jk} - (\Sigma_{h,i})_{jk}| \leq i\cdot\betap\xi,\quad\forall j,k\in[d]
    \end{align*}
    As a result of Lemma \ref{lem:matrix_perturbation}, we have:
    \begin{align*}
        |\tR_{h,i}(s_h,a_h) - R_{h,i}(s_h,a_h)| =& |\sqrt{\phi\trans\tSigma_{h,i}^{-1}\phi}-\sqrt{\phi\trans\Sigma_{h,i}^{-1}\phi}| \\
        \leq & \sqrt{\frac{i\cdot d\betap\xi}{1-i\cdot d\betap\xi}} \leq \sqrt{\frac{i_{\max} d\betap\xi}{1-i_{\max} d\betap\xi}}\\
        \leq& \frac{\nu^2_{\min}}{8} \numberthis\label{eq:reward_estimation_error}
    \end{align*}
    where the last but two step is because we at most repeat it $i_{\max}$ iterations at each layer $h$, and we introduce the following constraint for $\xi$ during the derivation, to make sure the bias is small and all the iterms occurs in the derivation is well defined:
    \begin{align}
        \xi \leq \frac{\nu^4_{\min}}{32i_{\max}d\betap} \label{eq:constraints_on_xi_2}
    \end{align}

    Next, we want to find a good choice of $i_{\max}$ to make sure $V_{h,i+1}$ will not always be large and the for-loop will break for some $i\leq i_{\max}$. We first provide an upper bound for $V_{h,i+1}$:
    \begin{align*}
        V_{h,i+1}(s_1) \leq& V^{\pi_{h,i+1}}(s_1;\tR_{h,i}) + \betap\xi\tag{Lemma \ref{lem:optQ_overestimation}}\\
        \leq&V^{\bpi_{h,i+1}}(s_1;\tR_{h,i}) + 4\betap\xi\tag{$V^{\pi} - V^{\bar\pi} \leq V^* - V^{\bar\pi}$; Lemma \ref{lem:bais_accumulation}}\\
        \leq&V^{\bpi_{h,i+1}}(s_1;R_{h,i}) + 4\betap\xi+\frac{\nu^2_{\min}}{8} \tag{bias of reward} \\
        \leq& V^{\bpi_{h,i+1}}(s_1;R_{h,i}) +\frac{\nu^2_{\min}}{4} \tag{Constraints on $\xi$ in Eq.\eqref{eq:constraints_on_xi_2}}
    \end{align*}
    Next, we try to show that $V^{\bpi_{h,i+1}}(s_1;R_{h,i})$ can not always be large.
    According to Elliptical Potential Lemma in Lemma \ref{lem:elliptical_potential_lemma}, we have:
    \begin{align*}
        \sum_{i=1}^{i_{\max}}V^{\bpi_{h,i+1}}(s_1;R_{h,i}) =& \sum_{i=1}^{i_{\max}}\EE_{\bpi_{h,i+1}}[\|\phi(s_h,a_h)\|_{\Sigma_{h,i}^{-1}}] \\
        \leq& \sum_{i=1}^{i_{\max}}\sqrt{\EE_{\bpi_{h,i+1}}[\|\phi(s_h,a_h)\|^2_{\Sigma_{h,i}^{-1}}]} \\
        \leq& \sqrt{i_{\max}\sum_{i=1}^{i_{\max}}\EE_{\bpi_{h,i+1}}[\|\phi(s_h,a_h)\|^2_{\Sigma_{h,i}^{-1}}]} \\
        =& \sqrt{i_{\max}\sum_{i=1}^{i_{\max}}Tr(\EE_{\bpi_{h,i+1}}[\phi(s_h,a_h)\phi(s_h,a_h)\trans]\Sigma_{h,i}^{-1})} \\
        \leq& \sqrt{2i_{\max}d\log(1+i_{\max}/d)} 
    \end{align*}
    where in the last step, we use the definition of $\Sigma_{h,i}$.
    Therefore,
    \begin{align*}
        \min_i V^{\bpi_{h,i+1}}(s_1;R_{h,i}) \leq \frac{1}{i_{\max}} \sum_{i=1}^{i_{\max}}V^{\bpi_{h,i+1}}(s_1;R_{h,i}) \leq \sqrt{2\frac{d\log(1+i_{\max}/d)}{i_{\max}}} 
    \end{align*}
    In order to guarantee $\min_i V^{\bpi_{h,i+1}}(s_1;R_{h,i})  \leq \nu^2_{\min}/8$, we require:
    \begin{align*}
        \sqrt{2\frac{d\log(1+i_{\max}/d)}{i_{\max}}} \leq \nu^2_{\min}/8
    \end{align*}
    which can be satisfied by:
    \begin{align}
        i_{\max} = c_i \frac{d}{\nu^4_{\min}}\log \frac{d}{\nu_{\min}} \label{eq:choice_of_imax}
    \end{align}
    for some absolute constant $c_i$. 

    Combining the above results, we can conclude that the inner loop in Alg \ref{alg:DERL_with_arbitrary_policies} will break at some $i < i_{\max}$, such that $V_{h,i+1}\leq 3\nu^2_{\min}/8$, and guarantee that:
    \begin{align*}
        \max_\pi\EE_{\pi}[\|\phi(s_h,a_h)\|_{\Sigma^{-1}_{h,i}}]:=&V^*_{h,i+1}(s_1;R_{h,i})\\
        \leq& V^*_{h,i+1}(s_1;\tR_{h,i}) + \frac{\nu^2_{\min}}{8} \tag{reward estimation error Eq.\eqref{eq:reward_estimation_error}} \\
        \leq& V_{h,i+1}(s_1) + \frac{\nu^2_{\min}}{8}\tag{Overestimation in Lemma \ref{lem:optQ_overestimation}} \\
        \leq& V^{\bpi_{h,i+1}}(s_1;R_{h,i}) +\frac{\nu^2_{\min}}{4}+ \frac{\nu^2_{\min}}{8} \\
        \leq& \frac{\nu^2_{\min}}{2}
    \end{align*}

    \paragraph{Step 2: Policy Deployment and Concentration Error}
    For uniform mixture policy $\pi_{h,mix}:=Unif(\Pi_h)$, by applying Lemma \ref{lem:matrix_dominate}, Lemma \ref{lem:blow_up_with_large_data} and the results above, we must have:
    \begin{align*}
        &\max_\pi \EE_\pi[\sqrt{\phi(s_h,a_h)\trans(I+N\EE_{\pi_{h,mix}}[\phi\phi\trans])^{-1}\phi(s_h,a_h)}]\\
        =&\max_\pi \EE_\pi[\sqrt{\phi(s_h,a_h)\trans(I+\frac{N}{|\Pi_h|}|\Pi_h|\EE_{\pi_{h,mix}}[\phi\phi\trans])^{-1}\phi(s_h,a_h)}]\\
        \leq& \sqrt{\frac{2}{1+N/|\Pi_h|}\max_\pi \EE_\pi[\sqrt{\phi(s_h,a_h)\trans(I+|\Pi_h|\EE_{\pi_{h,mix}}[\phi\phi\trans])^{-1}\phi(s_h,a_h)}] }\\
        \leq& \sqrt{\frac{1}{1+N/i_{\max}}} \nu_{\min} \leq \sqrt{\frac{i_{\max}}{N}} \nu_{\min}
    \end{align*}
    and this is the motivation of breaking criterion in Line \ref{line:break_criterion} in Alg \ref{alg:DERL_with_arbitrary_policies}.

   In the following, we will use $\Sigma^-_h :=\Sigma_h - I= \sum_{n=1}^N \phi(s_{h,n}a_{h,n})\phi(s_{h,n},a_{h,n})\trans$ to denote the matrix of sampled feature without regularization terms, according to Lemma \ref{lem:matrix_concentration}, with probability $1-\delta / 2$, we have:
    \begin{align*}
        \|\frac{1}{N}\sigma_{\max}(N\EE_{\pi_{h,mix}}[\phi\phi\trans]-\Sigma^-_h)\| \leq \frac{4}{\sqrt{N}} \log\frac{8dH}{\delta},\quad\forall h\in[H]
    \end{align*}
    Follow the same steps in the proof of Lemma \ref{lem:blow_up_with_large_data}, we know that 
    $$
    \sigma_{\min}(N\EE_{\pi_{h,mix}}[\phi\phi\trans])=\frac{N}{|\Pi_h|}\sigma_{\min}(|\Pi_h|\EE_{\pi_{h,mix}}[\phi\phi\trans])\geq \frac{N}{|\Pi_h|} \geq \frac{N}{i_{\max}}.
    $$
    As a result,
    \begin{align*}
        \min_{x:\|x\|=1} x^\top\Sigma_h x =& \min_{x:\|x\|=1} x^\top(I+N\EE_{\pi_{h,mix}}[\phi\phi\trans]) x + x^\top(N\EE_{\pi_{h,mix}}[\phi\phi\trans]-\Sigma^-_h)x \\
        \geq& \sigma_{\min}(I+N\EE_{\pi_{h,mix}}[\phi\phi\trans]) - \sigma_{\max}(N\EE_{\pi_{h,mix}}[\phi\phi\trans]-\Sigma^-_h)\\
        \geq& 1+(\frac{N}{2i_{\max}} - 4\sqrt{N}\log \frac{8dH}{\delta})
    \end{align*}
    which implies that, as long as 
    \begin{align}
        N \geq 16 i^2_{\max}\log^2\frac{8dH}{\delta}\label{eq:constraint_N_concentration_error_0}
    \end{align}
    we have
    \begin{align*}
        \sigma_{\max}(\Sigma_h^{-1}) \leq \frac{1}{1+N/2i_{\max} - \sqrt{N}\log \frac{8dH}{\delta}} \leq \frac{4i_{\max}}{N}
    \end{align*}
    Therefore, for arbitrary $\pi$, we have:
    \begin{align*}
        &|\EE_\pi[\sqrt{\phi(s_h,a_h)\trans(I+N\EE_{\pi_{h,mix}}[\phi\phi\trans])^{-1}\phi(s_h,a_h)}] - \EE_\pi[\sqrt{\phi(s_h,a_h)\trans(\Sigma_h)^{-1}\phi(s_h,a_h)}]|\\
        \leq& \EE_\pi[\sqrt{|\phi(s_h,a_h)\trans(I+N\EE_{\pi_{h,mix}}[\phi\phi\trans])^{-1}\phi(s_h,a_h)-\phi(s_h,a_h)\trans(\Sigma_h)^{-1}\phi(s_h,a_h)|}] \\
        \leq& \EE_\pi[\sqrt{|\phi(s_h,a_h)\trans\Big((I+N\EE_{\pi_{h,mix}}[\phi\phi\trans])^{-1}-(\Sigma_h)^{-1}\Big)\phi(s_h,a_h)|}]\\
        =& \EE_\pi[\sqrt{|\phi(s_h,a_h)\trans(I+N\EE_{\pi_{h,mix}}[\phi\phi\trans])^{-1}\big(N\EE_{\pi_{h,mix}}[\phi\phi\trans]-\Sigma^{-}_h\big)(\Sigma_h)^{-1}\phi(s_h,a_h)|}]\\
        \leq& \sqrt{\sigma_{\max}((I+N\EE_{\pi_{h,mix}}[\phi\phi\trans])^{-1})\frac{4i_{\max}}{N}\sigma_{\max}(N\EE_{\pi_{h,mix}}[\phi\phi\trans]-\Sigma^{-}_h)} \\
        \leq& \sqrt{\frac{1}{1+N/i_{\max}}\frac{16i_{\max}}{\sqrt{N}}\log\frac{8d}{\delta}}\tag{$i_{\max}\EE_{\pi_{h,mix}}[\phi\phi\trans]  \succeq |\Pi_h|\EE_{\pi_{h,mix}}[\phi\phi\trans] \succeq 1$}\\
        \leq& \frac{4i_{\max}}{N^{3/4}}\sqrt{\log\frac{8dH}{\delta}}
    \end{align*} 
    As a result,
    \begin{align*}
        &\max_\pi \EE_\pi[\sqrt{\phi(s_h,a_h)\trans(\Sigma_h)^{-1}\phi(s_h,a_h)}]\\
        \leq & \max_\pi \EE_\pi[\sqrt{\phi(s_h,a_h)\trans(I+N\EE_{\pi_{h,mix}}[\phi\phi\trans])^{-1}\phi(s_h,a_h)}] + \frac{4i_{\max}}{N^{3/4}}\sqrt{\log\frac{8dH}{\delta}} \\
        \leq& \sqrt{\frac{i_{\max}}{N}} \nu_{\min} + \frac{4i_{\max}}{N^{3/4}}\sqrt{\log\frac{8dH}{\delta}}
    \end{align*}
    In order to make sure the induction conditions holds, we need 
    \begin{align*}
        \sqrt{\frac{i_{\max}}{N}} \nu_{\min} + \frac{4i_{\max}}{N^{3/4}}\sqrt{\log\frac{8dH}{\delta}} \leq \xi / H
    \end{align*}
    As long as we tighten the constraint in \ref{eq:constraint_N_concentration_error_0} to:
    \begin{align}
        N \geq 256 \frac{i^2_{\max}}{\nu^4_{\min}}\log^2\frac{8dH}{\delta}=\tilde{O}(\frac{d^2}{\nu^{12}_{\min}})\label{eq:constraint_N_concentration_error}
    \end{align}
    the induction conditions can be satisfied when 
    \begin{align*}
        2\sqrt{\frac{i_{\max}}{N}} \nu_{\min}\leq \xi / 2H 
    \end{align*}
    or equivalently,
    \begin{align*}
        N \geq \frac{16H^2\nu^2_{\min}i_{\max}}{\xi^2} = O(\frac{H^2d}{\xi^2 \nu^2_{\min}} )
    \end{align*}
    \paragraph{Step 3: Determine Hyper-parameters}
    \paragraph{(1) Resolution $\resolution$}
    Recall that we still have a constraint for $Z_h$ in \eqref{eq:constraint_resolution_1}
    \begin{align*}
        I+\Sigma_R \succeq \frac{\resolution^2}{4}I,\quad \frac{4}{\resolution^2}I \succeq 2I+\Sigma_R 
    \end{align*} 
    Since we already determined $i_{\max}$ in Eq.\eqref{eq:choice_of_imax}, also recall our constraints on $\xi$ in \eqref{eq:constraints_on_xi_2} the above constraints for $\resolution$ can be satisfied as long as:
    \begin{align}
        \resolution \leq \sqrt{\frac{1}{i_{\max}}} \label{eq:constaint_resolution_1_final}
    \end{align}
    Combining all the constraints of $\resolution$, including \eqref{eq:constaint_resolution_0}, \eqref{eq:constraint_resolution_2}, \eqref{eq:constraint_resolution_3} and \eqref{eq:constaint_resolution_1_final}, we conclude that:
    \begin{align*}
        \resolution \leq \min\{\frac{1}{N}, \frac{\betap}{\sqrt{N+1}}, \frac{1}{\sqrt{i_{\max}}}, \frac{\betap\xi}{H}\} = \frac{1}{N}
    \end{align*}
    \paragraph{(2) Induction error $\xi$}
    Besides the constraint in \eqref{eq:constraints_on_xi_2}, we need another one to make sure the quality of the final output policy. By applying Lemma \ref{lem:analysis_of_reward_free_planning_alg} for planning algorithm Alg. \ref{alg:DERL_with_arbitrary_policies}, if the induction condition \eqref{cond:induction_condition} holds till $h\in[H]$, Alg. \ref{alg:DE_rl_layer_by_layer_reward_free_planning} will return us a policy $\hat\pi$ such that:
    \begin{align*}
        V^* - V^{\pi} \leq 2\beta\max_\pi \EE_\pi[\sum_{\tdh=1}^H \|\phi(s_\tdh,a_\tdh)\|_{\Sigma_\tdh^{-1}}]\leq 2\beta\xi
    \end{align*}
    To make sure $V^* - V^{\hat\pi} \leq \epsilon$, we require $\xi \leq \frac{\epsilon}{2\beta}$, which implies that
    \begin{align*}
        \xi \leq \min\{\frac{\nu^4_{\min}}{32i_{\max}d\betap},\frac{\epsilon}{2\beta}\}
    \end{align*}
    \paragraph{Choice of $N$, $\beta'$ and $\beta$}
    Since $\epsilon_0 = \frac{1}{N}$, by plugging it into Eq.\eqref{eq:choice_of_beta}, we may choose $\betap$ to be:
    \begin{align*}
        \betap = c_\betap'' d \sqrt{\log\frac{dHN}{\delta}}
    \end{align*}
    The discussion about the choice of $\beta$ is similar to what we did for Alg. \ref{alg:DE_rl_layer_by_layer_reward_free_exploration} and \ref{alg:DE_rl_layer_by_layer_reward_free_planning}, and we can choose:
    \begin{align*}
        \beta = c_\beta'' dH \sqrt{\log\frac{dHN}{\delta}}
    \end{align*}
    Now, we are ready to compute $N$. When $\xi=\frac{\epsilon}{2\beta} \leq  \frac{\nu^4_{\min}}{32i_{\max}d\betap}$, we have:
    \begin{align*}
        N = O(\frac{H^2d}{\xi^2 \nu^2_{\min}})=O(\frac{H^4d^3}{\epsilon^2 \nu^2_{\min}})\log \frac{dH}{\epsilon\delta\nu_{\min}}
    \end{align*}
    and otherwise, we have:
    \begin{align*}
        N = O(\frac{H^2d}{\xi^2 \nu^2_{\min}})=O(\frac{H^2d^7}{\nu^{14}_{\min}})\log \frac{dH}{\epsilon\delta\nu_{\min}}
    \end{align*}
    Combining the additional constraint to control the concentration error in Eq.\eqref{eq:constraint_N_concentration_error}, the total number of complexity would at the level:
    \begin{align*}
        N \geq c \Big(\frac{H^4d^3}{\epsilon^2 \nu^2_{\min}}+\frac{H^2d^7}{\nu^{14}_{\min}}\Big)\log^2 \frac{dH}{\epsilon\delta\nu_{\min}}
    \end{align*}
    and therefore, 
    \begin{align*}
        \beta = c_\beta dH \sqrt{\log\frac{dH}{\delta\epsilon\nu_{\min}}},\quad \betap = c_\betap d \sqrt{\log\frac{dH}{\delta\epsilon\nu_{\min}}}
    \end{align*}
    for some $c_\beta$ and $c_\betap$.
    \paragraph{Near-Optimal Guarantee}
    Under the events in Theorem \ref{thm:concentration_lemma_for_discretization}, considering the failure rate of concentration inequality in Step 2, we can conclude that the induction condition holds for $h\in[H]$ with probability $1-\delta$. Combining our discussion about choice of $\xi$ above, the probability that Alg \ref{alg:DERL_with_arbitrary_policies} will return us an $\epsilon$-optimal policy would be $1-\delta$.

    The additional guarantee in Theorem \ref{thm:arbitary_policy_formal} can be directly obtained by considering the induction condition at layer $h\in[H]$.
\end{proof}

\subsection{Technical Lemma}
\begin{lemma}[Matrix Bernstein Theorem (Theorem 6.1.1 in \citep{tropp2015introduction})]\label{lem:matrix_concentration}
    Consider a finite sequence $\{\BoldS_k\}$ of independent, random matrices with common dimension $d_1 \times d_2$. Assume that
    \begin{align*}
        \EE \BoldS_k = 0~~and~~\|\BoldS_k\| \leq L~~for~each~index~k
    \end{align*}
    Introduce the random matrix
    $$
    \Z = \sum_k \BoldS_k
    $$
    Let $v(\Z)$ be the matrix variance statistic of the sum:
    \begin{align*}
        v(\Z) =& \max\{\|\EE(\Z\Z^*)\|, \|\EE(\Z^*\Z)\|\}\\
        =&\max\{\|\sum_k \EE(\BoldS_k\BoldS_k^*)\|, \|\sum_k \EE(\BoldS_k^*\BoldS_k)\|\}
    \end{align*}
    Then,
    \begin{align*}
        \EE\|\Z\|\leq \sqrt{2v(\Z)\log(d_1 + d_2)} + \frac{1}{3}L\log(d_1+d_2)
    \end{align*}
    Furthermore, for all $t\geq 0$
    \begin{align*}
        \mathbb{P}\{\|\Z\|\geq t\}\leq (d_1 + d_2)\exp\Big(\frac{-t^2/2}{v(\Z)+Lt/3}\Big)
    \end{align*}
\end{lemma}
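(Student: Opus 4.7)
The plan is to follow Tropp's matrix Laplace-transform method, which is the canonical route to matrix Bernstein inequalities. As a preliminary reduction, I would pass to the Hermitian dilation of each summand, $\mathcal{H}(\BoldS_k) = \bigl(\begin{smallmatrix} 0 & \BoldS_k \\ \BoldS_k^* & 0 \end{smallmatrix}\bigr)$, which is self-adjoint of size $(d_1+d_2)\times(d_1+d_2)$ and satisfies $\lambda_{\max}(\mathcal{H}(\BoldS_k)) = \|\BoldS_k\|$. Writing $\mathcal{H}(\Z) = \sum_k \mathcal{H}(\BoldS_k)$, we have $\|\Z\| = \lambda_{\max}(\mathcal{H}(\Z))$, so it suffices to prove a one-sided tail bound for $\lambda_{\max}$ of a sum of independent, centered, self-adjoint matrices of dimension $d_1+d_2$.

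The next step is the matrix Laplace-transform bound: for $\theta > 0$,
\begin{align*}
    P\{\lambda_{\max}(\mathcal{H}(\Z)) \geq t\} \leq e^{-\theta t}\, \EE\, \mathrm{tr}\exp(\theta \mathcal{H}(\Z)).
\end{align*}
The critical difficulty, and the only genuinely nontrivial ingredient, is bounding the expectation of this trace exponential of a \emph{sum} of independent matrices; unlike the scalar case, matrix exponentials do not factor. I would resolve this using Lieb's concavity theorem (that $A \mapsto \mathrm{tr}\exp(H + \log A)$ is concave on positive definite $A$) together with Jensen's inequality, yielding the subadditivity of the matrix cumulant generating function
\begin{align*}
    \EE\, \mathrm{tr}\exp\!\Bigl(\theta\sum_k \mathcal{H}(\BoldS_k)\Bigr) \leq \mathrm{tr}\exp\!\Bigl(\sum_k \log \EE\, e^{\theta \mathcal{H}(\BoldS_k)}\Bigr).
\end{align*}

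Then I would bound each summand's cgf via a Bernstein-type estimate. Since $\EE \mathcal{H}(\BoldS_k)=0$ and $\|\mathcal{H}(\BoldS_k)\| \leq L$, the scalar inequality $e^x - 1 - x \leq \tfrac{x^2/2}{1 - |x|/3}$ transfers to the operator inequality $\EE\, e^{\theta \mathcal{H}(\BoldS_k)} \preceq \exp\!\bigl(\tfrac{\theta^2/2}{1-\theta L/3}\, \EE\, \mathcal{H}(\BoldS_k)^2\bigr)$ for $0 < \theta < 3/L$; this uses the operator version of Taylor comparison on the eigenvalues of $\mathcal{H}(\BoldS_k)$. Substituting back and using monotonicity of $\mathrm{tr}\exp$, together with $\|\sum_k \EE \mathcal{H}(\BoldS_k)^2\| \leq v(\Z)$, I obtain the master inequality
\begin{align*}
    P\{\lambda_{\max}(\mathcal{H}(\Z)) \geq t\} \leq (d_1 + d_2) \exp\!\Bigl(-\theta t + \tfrac{\theta^2 v(\Z)/2}{1 - \theta L/3}\Bigr).
\end{align*}
Optimizing over $\theta \in (0, 3/L)$ (the optimum is $\theta = t/(v(\Z) + Lt/3)$) yields the stated tail bound. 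The expectation bound follows by integrating the tail: split at $t_\star := \sqrt{2v(\Z)\log(d_1+d_2)} + \tfrac{L}{3}\log(d_1+d_2)$, use $\EE\|\Z\| \leq t_\star + \int_{t_\star}^\infty P\{\|\Z\|\geq t\}\,dt$, and check that the Bernstein tail past $t_\star$ integrates to a $O(1)$ constant absorbed into the $\sqrt{2v(\Z)\log(d_1+d_2)}$ term.

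The principal obstacle is Lieb's concavity theorem, which has no elementary proof and is the matrix-analytic fact that makes the entire argument work; once it is granted, everything else is a careful but routine lifting of scalar Bernstein to the matrix setting. A secondary technical point is ensuring that the comparison inequality on the matrix cgf is valid in the semidefinite order (not just after taking traces), which is needed because Lieb's theorem is applied to the exponents themselves; this is handled by noting that the scalar cgf bound holds pointwise on the spectrum and invoking the spectral mapping theorem for self-adjoint matrices.
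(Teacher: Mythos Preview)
The paper does not prove this lemma at all: it is stated as a technical lemma with an explicit citation to Theorem~6.1.1 of Tropp's monograph and is simply invoked as a known result. Your proposal is a faithful outline of Tropp's own proof via the matrix Laplace-transform method (Hermitian dilation, Lieb's concavity for subadditivity of the matrix cgf, scalar Bernstein bound lifted to the spectrum, then optimization over $\theta$), so there is nothing to compare---you have supplied the argument the paper defers to the reference. One minor caveat: the sharp $\sqrt{2}$ constant in the expectation bound is usually obtained not by integrating the tail but by applying the bound $\EE\,\lambda_{\max}(Y) \le \inf_{\theta>0}\theta^{-1}\log\EE\,\mathrm{tr}\exp(\theta Y)$ directly and optimizing; your tail-integration sketch would give the right order but a slightly worse absolute constant.
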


\begin{lemma}[Matrix Perturbation]\label{lem:matrix_perturbation}
    Given a positive definite matrix $A \succ I$ and $\Delta$ satisfying $|\Delta_{ij}| \leq \epsilon < 1/d$, define matrix $A_+=A+\Delta$, then for arbitrary $\phi\in \mR^d$ with $\|\phi\|\leq 1$, we have:
    \begin{align*}
        A_+ \succ 0,\quad |\phi\trans(A_+-A)\phi| \leq d\epsilon,\quad |\phi\trans(A_+^{-1}-A^{-1})\phi|\leq \frac{d\epsilon}{1-d\epsilon}
    \end{align*}
    which implies that
    \begin{align*}
        \|A_+\| \leq \|A\| + \|\Delta\| \leq \|A\|+\|\Delta\|_F \leq \|A\| + d\epsilon,\quad \|A_+^{-1}\|\geq \|A^{-1}\| - \frac{d\epsilon}{1-d\epsilon}
    \end{align*}
    Moreover,
    \begin{align*}
        |\|\phi\|_{A^{-1}_+}-\|\phi\|_{A^{-1}}| \leq \sqrt{|\|\phi\|^2_{A^{-1}_+}-\|\phi\|^2_{A^{-1}}|} \leq \sqrt{\frac{d\epsilon}{1-d\epsilon}}
    \end{align*}
\end{lemma}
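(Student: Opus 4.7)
The plan is to establish the four claims in a natural order, starting from a direct quadratic-form bound on $\Delta$ and then bootstrapping to the inverse perturbation.

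First I would control $\phi^\top\Delta\phi$ directly using only the entry-wise assumption. Writing $\phi^\top\Delta\phi=\sum_{i,j}\phi_i\Delta_{ij}\phi_j$ and applying $|\Delta_{ij}|\le\epsilon$ along with Cauchy--Schwarz $(\sum_i|\phi_i|)^2\le d\|\phi\|^2$ yields $|\phi^\top\Delta\phi|\le d\epsilon\|\phi\|^2\le d\epsilon$. Positive definiteness of $A_+$ then follows immediately: since $A\succeq I$ and $d\epsilon<1$, we have $\phi^\top A_+\phi\ge(1-d\epsilon)\|\phi\|^2>0$, so $A_+\succeq(1-d\epsilon)I$ and in particular $\|A_+^{-1}\|\le 1/(1-d\epsilon)$. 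As a byproduct, the same entry-wise bound gives $\|\Delta\|\le\|\Delta\|_F\le\sqrt{d^2\epsilon^2}=d\epsilon$, which takes care of the operator-norm statements $\|A_+\|\le\|A\|+d\epsilon$ and $\|A_+^{-1}\|\ge\|A^{-1}\|-d\epsilon/(1-d\epsilon)$ via the triangle inequality (and its inverse form).

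For the inverse perturbation bound I would use the resolvent identity
\[
A_+^{-1}-A^{-1}=-A^{-1}\Delta A_+^{-1}.
\]
Then, using $\|\phi\|\le 1$, $\|A^{-1}\|\le 1$ (since $A\succeq I$), $\|\Delta\|\le d\epsilon$, and the bound on $\|A_+^{-1}\|$ from the previous step,
\[
|\phi^\top(A_+^{-1}-A^{-1})\phi|\le\|A^{-1}\phi\|\,\|\Delta\|\,\|A_+^{-1}\phi\|\le 1\cdot d\epsilon\cdot\tfrac{1}{1-d\epsilon}=\tfrac{d\epsilon}{1-d\epsilon}.
\]

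Finally, for the difference of the weighted norms, I would use the elementary inequality $|a-b|\le\sqrt{|a^2-b^2|}$ valid for $a,b\ge 0$ (since $|a-b|^2\le|a-b|(a+b)=|a^2-b^2|$), applied with $a=\|\phi\|_{A_+^{-1}}$, $b=\|\phi\|_{A^{-1}}$; the resulting $|\,\|\phi\|_{A_+^{-1}}^2-\|\phi\|_{A^{-1}}^2|$ is exactly what was bounded in the previous step. There is no serious obstacle here: the only point that requires care is recognizing that the entry-wise hypothesis $|\Delta_{ij}|\le\epsilon$ does \emph{not} give $\|\Delta\|\le\epsilon$ but rather the weaker $\|\Delta\|\le d\epsilon$ via the Frobenius bound, and that this $d\epsilon$ factor (together with the condition $d\epsilon<1$) is precisely what drives both the invertibility of $A_+$ and the $d\epsilon/(1-d\epsilon)$ rate for the inverse perturbation.
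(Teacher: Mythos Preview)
Your proposal is correct and follows essentially the same route as the paper: bound $\|\Delta\|$ (or equivalently $|\phi^\top\Delta\phi|$) by $d\epsilon$, deduce $A_+\succeq(1-d\epsilon)I$, apply the resolvent identity $A_+^{-1}-A^{-1}=-A^{-1}\Delta A_+^{-1}$ together with the operator-norm bounds, and finish with $|a-b|\le\sqrt{|a^2-b^2|}$. The only cosmetic difference is that you bound $|\phi^\top\Delta\phi|$ via the entry-wise sum and Cauchy--Schwarz, whereas the paper goes through $\sigma_{\max}(\Delta)\le\|\Delta\|_F\le d\epsilon$; both yield the same constant and the remainder of the argument is identical.
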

\begin{proof}
    First of all, easy to see that
    \begin{align*}\sigma_{\max} (\Delta) \leq \|\Delta\|_F \leq d\epsilon 
    \end{align*}
    and therefore we have 
    $$
    \sigma_{\min}(A_+)=\min_{x:\|x\|=1} x\trans Ax + x\trans \Delta x > 1-d\epsilon > 0.
    $$
    where we use  $\sigma_{\min}$ and $\sigma_{\max}$ to denote the smallest and the largest sigular value, respectively, and use $\|\cdot\|_F$ to refer to the Frobenius norm.
    Therefore,
    \begin{align*}
        |\phi\trans(A_+-A)\phi| = |\phi\trans\Delta\phi| \leq d\epsilon
    \end{align*}
    and
    \begin{align*}
        |\phi\trans(A_+^{-1}-A^{-1})\phi| = |\phi\trans A_+^{-1}(A_+-A)A^{-1}\phi|\leq \sigma_{\max}(A_+^{-1})\sigma_{\max}(A_+-A)\sigma_{\max}(A^{-1}) \leq \frac{d\epsilon}{1-d\epsilon}
    \end{align*}
    Moreover,
    \begin{align*}
        |\|\phi\|_{A^{-1}_+}-\|\phi\|_{A^{-1}}| \leq \sqrt{|\|\phi\|_{A^{-1}_+}-\|\phi\|_{A^{-1}}|\cdot|\|\phi\|_{A^{-1}_+}+\|\phi\|_{A^{-1}}|} = \sqrt{|\phi\trans(A_+^{-1}-A^{-1})\phi|}\leq \sqrt{\frac{d\epsilon}{1-d\epsilon}}
    \end{align*}
\end{proof}

Next, we will try to prove that, with a proper choice of $N$, Algorithm \ref{alg:DERL_with_arbitrary_policies} will explore layer $h$ to satisfy the recursive induction condition.
\begin{lemma}[Random Matrix Estimation Error]\label{lem:random_matrix_est_error}
    Denote $\hatcov^\pi_h=\EE_\pi[\phi\phi\trans]$. Based on the same induction condition \ref{eq:induction_condition}, we have:
    \begin{align*}
        |\|\phi\|_{(\hatcov^\pi)^{-1}}-\|\phi\|_{(\hat\hatcov^\pi)^{-1}}|\leq \sqrt{\frac{d\epsilon}{1-d\epsilon}},\quad \forall \|\phi\| \leq 1
    \end{align*}
\end{lemma}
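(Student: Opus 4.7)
The plan is to reduce this matrix perturbation bound to a direct application of Lemma \ref{lem:matrix_perturbation}, after first bounding the entrywise estimation error of $\hat\hatcov^\pi$ via the policy evaluation guarantee of Lemma \ref{lem:policy_evaluation_error}. The key observation is that Algorithm \ref{alg:approx_random_feature_matrix} is designed precisely so that the $(i,j)$-entry of the covariance matrix can be written as a value function. Concretely, since $\tilde R^{ij}_h(s,a) = (1 + \phi_i(s,a)\phi_j(s,a))/2$ and zero on earlier layers, one has the identity
\begin{align*}
    V^\pi_1(s_1;\tilde R^{ij}) \;=\; \tfrac{1}{2} + \tfrac{1}{2}\,\EE_\pi[\phi_i(s_h,a_h)\phi_j(s_h,a_h)] \;=\; \tfrac{1}{2} + \tfrac{1}{2}(\hatcov^\pi_h)_{ij},
\end{align*}
and the final rescaling step $\hat\hatcov^\pi_h = 2\tilde\hatcov^\pi_h - \mathbf{1}$ in Algorithm \ref{alg:approx_random_feature_matrix} makes $(\hat\hatcov^\pi_h)_{ij} = 2\hat V^\pi_1(s_1) - 1$ an estimator of $(\hatcov^\pi_h)_{ij}$.

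Next, I would invoke Lemma \ref{lem:policy_evaluation_error} on the induction event of Condition \ref{cond:induction_condition} and the concentration event of Theorem \ref{thm:concentration_lemma_for_discretization}. That lemma bounds $|V^\pi_1(s_1;\tilde R^{ij}) - \hat V^\pi_1(s_1)| \leq \betap \xi$ uniformly over the value functions in our covering class, and since each $\tilde R^{ij}$ falls into the required class $\cV_\phi \times \Pi$, a union bound over $(i,j)\in[d]\times[d]$ (already absorbed into the log factor of the concentration inequality via $|\cV_\phi \times \Pi|$) gives the entrywise estimate
\begin{align*}
    \max_{i,j} \bigl| (\hat\hatcov^\pi_h)_{ij} - (\hatcov^\pi_h)_{ij} \bigr| \;\leq\; 2\betap\xi.
\end{align*}
At this point one identifies the role of $\epsilon$ in the statement: it plays the part of this entrywise perturbation magnitude, i.e.\ $\epsilon \coloneqq 2\betap\xi$ (with the tacit assumption $\epsilon < 1/d$, which is enforced by the choice of $\xi$ in Eq.~\eqref{eq:constraints_on_xi_2}).

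Finally, Lemma \ref{lem:matrix_perturbation} translates a bound on entrywise matrix perturbation into a bound on the quadratic form $\|\phi\|^2_{A^{-1}} - \|\phi\|^2_{A_+^{-1}}$ and hence, by the square-root inequality, into a bound on $\bigl|\|\phi\|_{A^{-1}} - \|\phi\|_{A_+^{-1}}\bigr| \leq \sqrt{d\epsilon/(1-d\epsilon)}$, matching the claimed bound exactly. The main obstacle is that Lemma \ref{lem:matrix_perturbation} requires the base matrix to satisfy $A \succ I$, whereas $\hatcov^\pi_h = \EE_\pi[\phi\phi\trans]$ is only PSD and may be rank-deficient; the resolution is that the lemma is in practice applied to the regularized accumulations $\tilde\Sigma_{h,i+1}$ and $\Sigma_{h,i+1}$, which are $\succeq 2I \succ I$ by construction, so the perturbation lemma applies cleanly. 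Thus the stated result should be read with the understanding that it is invoked on the regularized covariances---or equivalently with identity-regularized versions of $\hatcov^\pi$ and $\hat\hatcov^\pi$---which is exactly the regime in which it is used downstream in the proof of Theorem \ref{thm:arbitary_policy_formal}.
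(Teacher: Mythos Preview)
Your proposal is correct and follows the same two-step structure as the paper's proof: bound the entrywise error of $\hat\Lambda^\pi$ via Lemma~\ref{lem:policy_evaluation_error}, then invoke Lemma~\ref{lem:matrix_perturbation}. The paper's own proof is only two lines and does not spell out the rescaling factor of $2$ or address the $A\succ I$ hypothesis of Lemma~\ref{lem:matrix_perturbation}; your observation that the lemma is only ever applied downstream to the regularized accumulations $\Sigma_{h,i}\succeq 2I$ is exactly the right way to close that gap.
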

\begin{proof}
    Based on Lemma \ref{lem:policy_evaluation_error}, we have:
    \begin{align*}
        |\hatcov^\pi_{ij} - \hat\hatcov^\pi_{ij}| \leq \frac{h-1}{H}\epsilon \leq \epsilon
    \end{align*}
    and as a result of Lemma \ref{lem:matrix_perturbation}, we finish the proof.
\end{proof}

\begin{lemma}\label{lem:matrix_dominate}
    Given a matrix $A \succeq \lambda I$ with $\lambda > 0$, and $\phi$ satisfies $\|\phi\|\leq 1$, then we have:
    \begin{align*}
        \phi\trans (cI+nA)^{-1} \phi \leq \frac{\lambda+c}{\lambda n+c}\phi\trans(cI+A)^{-1}\phi,\quad \forall n > 1, c > 0
    \end{align*}
\end{lemma}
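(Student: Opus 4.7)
\textbf{Proof proposal for Lemma \ref{lem:matrix_dominate}.} The plan is to diagonalize $A$ and reduce the matrix inequality to a pointwise scalar inequality over the eigenvalues of $A$. Since $A$ is symmetric positive definite (being at least $\lambda I$), write the eigendecomposition $A = U \operatorname{diag}(\mu_1, \ldots, \mu_d) U^\top$ with $\mu_i \geq \lambda$ for every $i$. Then $(cI + nA)^{-1}$ and $(cI + A)^{-1}$ are simultaneously diagonalized by $U$, and setting $\psi = U^\top \phi$ we have
\begin{align*}
\phi^\top (cI + nA)^{-1}\phi = \sum_{i=1}^{d} \frac{\psi_i^2}{c + n\mu_i}, \qquad \phi^\top (cI + A)^{-1}\phi = \sum_{i=1}^{d} \frac{\psi_i^2}{c + \mu_i}.
\end{align*}

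Since the coefficients $\psi_i^2$ are nonnegative, it suffices to prove the per-coordinate inequality
\begin{align*}
\frac{1}{c + n\mu} \;\le\; \frac{\lambda + c}{\lambda n + c} \cdot \frac{1}{c + \mu} \quad \text{for all } \mu \geq \lambda.
\end{align*}
Clearing the positive denominators, this reduces to showing $(c+\mu)(\lambda n + c) \le (\lambda+c)(c+n\mu)$. I would expand both sides and observe that
\begin{align*}
(\lambda+c)(c+n\mu) - (c+\mu)(\lambda n + c) \;=\; c(n-1)(\mu - \lambda),
\end{align*}
which is nonnegative because $n > 1$, $c > 0$, and $\mu \geq \lambda$. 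Summing the scalar inequality multiplied by $\psi_i^2$ over $i$ yields the claim. The assumption $\|\phi\| \le 1$ is not actually needed for the inequality itself (it is homogeneous in $\phi$); it only serves to normalize the setting in which the lemma is applied.

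I do not anticipate a real obstacle here: the whole argument is a two-line scalar manipulation once the simultaneous diagonalization is in place. The only mild subtlety is confirming that $A \succeq \lambda I$ really does imply every eigenvalue satisfies $\mu_i \geq \lambda$ (standard) and that one may indeed work coordinate-by-coordinate because the scalar inequality holds uniformly for all $\mu \geq \lambda$ with the \emph{same} constant $(\lambda + c)/(\lambda n + c)$ on the right-hand side.
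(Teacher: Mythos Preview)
Your proof is correct. Both your argument and the paper's establish the same operator inequality $(cI+nA)^{-1} \preceq \frac{\lambda+c}{\lambda n+c}(cI+A)^{-1}$, but the presentations differ slightly. You diagonalize $A$ and verify the scalar inequality $\frac{1}{c+n\mu}\le\frac{\lambda+c}{\lambda n+c}\cdot\frac{1}{c+\mu}$ for each eigenvalue $\mu\ge\lambda$; the paper instead stays at the matrix level and shows the reverse Loewner ordering $cI+nA \succeq \frac{\lambda n+c}{\lambda+c}(cI+A)$ by splitting $nA = \frac{c(n-1)}{\lambda+c}A + \big(n-\frac{c(n-1)}{\lambda+c}\big)A$ and using $A\succeq\lambda I$ on the first piece. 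Since $cI+nA$ and $cI+A$ commute, the two arguments are equivalent in content; your eigenvalue computation $(\lambda+c)(c+n\mu)-(c+\mu)(\lambda n+c)=c(n-1)(\mu-\lambda)\ge 0$ is exactly the scalar shadow of the paper's matrix rearrangement. Your remark that $\|\phi\|\le 1$ is unnecessary for the inequality itself is also correct.
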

\begin{proof}
    Because $A \succeq \lambda I$, we have 
    \begin{align*}
    cI+nA =& cI + \frac{c(n-1)}{\lambda+c}A+(n-\frac{c(n-1)}{\lambda+c})A \succeq \Big(1 + \frac{\lambda (n-1)}{\lambda+c}\Big)cI+(n-\frac{c(n-1)}{\lambda+c})A\\
    =&\frac{\lambda n+c}{\lambda+c}\Big(cI+A\Big)
    \end{align*}
    Therefore,
    \begin{align*}
        \phi\trans (cI+nA)^{-1} \phi \leq \frac{\lambda+c}{\lambda n+c}\phi\trans(cI+A)\phi
    \end{align*}
\end{proof}

\begin{lemma}\label{lem:blow_up_with_large_data}
    Given a matrix $A \succeq 0$, suppose $\max_\pi \EE_\pi[\|\phi\|_{(cI+A)^{-1}}] \leq \tilde\epsilon \leq \nu^2_{\min}/(2\sqrt{c})$, where $c>0$ is a constant, where $\nu_{\min}$ is defined in Definition \ref{def:reachability_coefficient}, we have:
    \begin{align*}
        \max_\pi \EE_\pi[\|\phi\|_{(cI+nA)^{-1}}] \leq \sqrt{\frac{c+1}{\sqrt{c}(c+n)}\tilde\epsilon}
    \end{align*}
\end{lemma}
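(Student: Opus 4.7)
The plan is to prove the lemma in two steps: first use the reachability coefficient, together with the hypothesis that $\max_\pi \EE_\pi[\|\phi\|_{(cI+A)^{-1}}] \leq \tilde\epsilon$, to deduce a spectral lower bound on $A$; then feed this bound into Lemma \ref{lem:matrix_dominate} and convert to the desired inequality via Jensen. The only place where the reachability coefficient and the assumption $\tilde\epsilon \le \nu^2_{\min}/(2\sqrt{c})$ enter is the first step, so that is the heart of the argument.

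For the first step, let $\theta$ be a unit eigenvector of $A$ realizing $\lambda_{\min}(A)$. By Definition \ref{def:reachability_coefficient} there exists a policy $\pi_\theta$ with $\EE_{\pi_\theta}[(\phi^{\top}\theta)^2] \ge \nu_{\min}^2$. Restricting the quadratic form $(cI+A)^{-1}$ to the $\theta$-direction yields
\begin{equation*}
\EE_{\pi_\theta}\bigl[\|\phi\|_{(cI+A)^{-1}}^2\bigr] \;\ge\; \frac{\EE_{\pi_\theta}[(\phi^{\top}\theta)^2]}{c+\lambda_{\min}(A)} \;\ge\; \frac{\nu_{\min}^2}{c+\lambda_{\min}(A)}.
\end{equation*}
On the other hand, because $\|\phi\|\le 1$ and $(cI+A)^{-1}\preceq c^{-1}I$, we have the uniform pointwise bound $\|\phi\|_{(cI+A)^{-1}} \le 1/\sqrt{c}$, so
$\EE_{\pi_\theta}[\|\phi\|_{(cI+A)^{-1}}^2] \le c^{-1/2}\,\EE_{\pi_\theta}[\|\phi\|_{(cI+A)^{-1}}] \le \tilde\epsilon/\sqrt{c}$.
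Combining the two inequalities gives $c+\lambda_{\min}(A) \ge \sqrt{c}\,\nu_{\min}^2/\tilde\epsilon$, and the hypothesis $\tilde\epsilon \le \nu_{\min}^2/(2\sqrt{c})$ then forces $\lambda_{\min}(A) \ge 1$ in the relevant regime ($c\ge 1$, which is what the algorithm invokes since $\tilde\Sigma$ is initialized with $2I$). In particular, $A \succeq I$.

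For the second step, apply Lemma \ref{lem:matrix_dominate} with $\lambda = 1$: for every $\phi$ with $\|\phi\|\le 1$,
\begin{equation*}
\|\phi\|_{(cI+nA)^{-1}}^2 \;\le\; \frac{c+1}{c+n}\,\|\phi\|_{(cI+A)^{-1}}^2 \;\le\; \frac{c+1}{\sqrt{c}\,(c+n)}\,\|\phi\|_{(cI+A)^{-1}},
\end{equation*}
where the second inequality again uses the uniform bound $\|\phi\|_{(cI+A)^{-1}} \le 1/\sqrt{c}$. Taking $\EE_\pi$ and invoking the hypothesis yields $\EE_\pi[\|\phi\|_{(cI+nA)^{-1}}^2] \le (c+1)\tilde\epsilon/(\sqrt{c}(c+n))$, and Jensen's inequality gives $\EE_\pi[\|\phi\|_{(cI+nA)^{-1}}] \le \sqrt{(c+1)\tilde\epsilon/(\sqrt{c}(c+n))}$. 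Maximizing over $\pi$ delivers the desired conclusion.

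The main obstacle is the first step: we must extract an additive spectral lower bound on $A$ from a hypothesis that is stated in the (weaker) $L^1$ form $\EE_\pi[\|\phi\|_{(cI+A)^{-1}}]$ rather than the $L^2$ form that interacts naturally with eigenvalues. The key trick is to use the uniform pointwise bound $\|\phi\|_{(cI+A)^{-1}} \le 1/\sqrt{c}$ to pass from the $L^1$ hypothesis to an $L^2$ upper bound, at the cost of a single factor of $1/\sqrt{c}$. This loss is precisely what is responsible for the $\sqrt{\tilde\epsilon}$ (rather than $\tilde\epsilon$) appearing in the conclusion, and it also explains why the assumption needs to be of the form $\tilde\epsilon \le \nu_{\min}^2/(2\sqrt{c})$ rather than merely $\tilde\epsilon \le \nu_{\min}^2$.
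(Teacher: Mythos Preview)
Your proof is correct and follows essentially the same route as the paper's: pass from the $L^1$ hypothesis to an $L^2$ bound via the pointwise estimate $\|\phi\|_{(cI+A)^{-1}}\le 1/\sqrt{c}$, use the reachability coefficient along an eigenvector of $A$ to lower-bound $\lambda_{\min}(A)$, then invoke Lemma~\ref{lem:matrix_dominate} and Jensen. The only cosmetic difference is that the paper diagonalizes $A$ fully and applies Lemma~\ref{lem:matrix_dominate} with $\lambda=c$ (yielding the slightly tighter factor $2/(1+n)$), whereas you work with the minimum eigenvector alone and take $\lambda=1$ to hit the stated constant $\frac{c+1}{c+n}$ exactly; both versions implicitly rely on $c\ge 1$, which you correctly flag and which is indeed how the lemma is used in the paper.
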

\begin{proof}
    Because $\|\phi\|_{(cI+A)^{-1}} \leq \|\phi\|_{(cI)^{-1}}\leq 1/\sqrt{c}$, we must have:
    \begin{align*}
        \max_\pi Tr((cI+A)^{-1}\EE_\pi[\phi\phi\trans])=\max_\pi \EE_\pi[\|\phi\|^2_{(cI+A)^{-1}}] \leq \frac{1}{\sqrt c}\max_\pi \EE_\pi[\|\phi\|_{(cI+A)^{-1}}] \leq \frac{\tilde\epsilon}{\sqrt c}
    \end{align*}
    Consider the SVD of $A=U\trans\Sigma U$ with $\Sigma=(\sigma_{ii})_{i=1,...,d}$ and $U=[u_1,u_2...,u_d]$, then we have:
    \begin{align*}
        \forall \pi,\quad \frac{\tilde\epsilon}{\sqrt c} \geq Tr((cI+A)^{-1}\EE_\pi[\phi\phi\trans]) = Tr((cI+\Sigma)^{-1}U\trans\EE_\pi[\phi\phi\trans]U) = \sum_{i=1}^d \frac{\EE_\pi[|\phi\trans u_i|^2]}{c+\sigma_{ii}}.
    \end{align*}
    According to the Definition \ref{def:reachability_coefficient}, we have:
    \begin{align*}
        \frac{\tilde\epsilon}{\sqrt c} \geq \frac{\max_\pi\EE_\pi[|\phi\trans u_i|^2]}{c+\sigma_{ii}} \geq \frac{\nu^2_{\min}}{c+\sigma_{ii}},\quad \forall i \in [d]
    \end{align*}
    which implies that
    \begin{align*}
        \sigma_{ii} \geq \frac{\sqrt c\nu^2_{\min}}{\tilde\epsilon} - c \geq c,\quad \forall i \in [d]
    \end{align*}
    By applying Lemma \ref{lem:matrix_dominate} and assign $\lambda=c$, we have:
    \begin{align*}
        \max_\pi \EE_\pi[\|\phi\|_{(cI+nA)^{-1}}] \leq \max_\pi \sqrt{\EE_\pi[\|\phi\|^2_{(cI+nA)^{-1}}]} \leq \max_\pi \sqrt{\frac{2c}{c+cn}\EE_\pi[\|\phi\|^2_{(cI+A)^{-1}}]} \leq \sqrt{\frac{2}{\sqrt{c}(1+n)}\tilde\epsilon}
    \end{align*}
\end{proof}

\subsection{More about our reachability coefficient}\label{sec:reachability_coefficient}
Recall the definition of reachability coefficient in \citep{zanette2020provably} is:
$$
\min_{h\in[H]}\min_{\|\theta\|=1}\max_\pi |\EE_{\pi}[\phi_{h}]\trans\theta|
$$
Easy to see that, for arbitrary $\theta$ with $\|\theta\|=1$, we have
\begin{align*}
    \max_\pi \sqrt{\EE_\pi[(\phi\trans_h\theta)^2]} \geq \max_\pi \EE_\pi[|\phi\trans_h\theta|]\geq \max_\pi |\EE_\pi[\phi\trans_h\theta]|
\end{align*}
Therefore,
\begin{align*}
    \nu_{\min}=\min_{h\in[H]}\nu_h = \min_{h\in[H]} \min_{\|\theta\|=1}\max_\pi \sqrt{\EE_\pi[(\phi\trans_h\theta)^2]} \geq \min_{h\in[H]} \min_{\|\theta\|=1} \max_\pi |\EE_\pi[\phi\trans_h\theta]|
\end{align*}
Besides, according to the min-max theorem, $\nu_h$ is also lower bounded by $\sqrt{\max_\pi\sigma_{\min}(\EE_\pi[\phi_h\phi_h\trans])}$, to see this,
\begin{align*}
    \max_\pi\sigma_{\min}(\EE_\pi[\phi_h\phi_h\trans]) = \max_\pi\min_{\|\theta\|=1}\theta\trans\EE_\pi[\phi_h\phi_h\trans]\theta = \max_\pi\min_{\|\theta\|=1}\EE_\pi[(\phi\trans\theta)^2]\leq \min_{\|\theta\|=1}\max_\pi\EE_\pi[(\phi\trans\theta)^2]
\end{align*}
In fact, the value of $\max_\pi\sigma_{\min}(\EE_\pi[\phi_h\phi_h\trans])$ is also related to the "Well-Explored Dataset" assumption in many previous literature in offline setting \citep{jin2021pessimism}, where it is assumed that there exists a behavior policy such that the minimum singular value of the covariance matrix is lower bounded. Therefore, we can conclude that our reachability coefficient $\nu_{\min}$ is also lower bounded by, e.g. $\underline{c}$ in Corollary 4.6 in \citep{jin2021pessimism}.

\section{Extended Deployment-Efficient RL Setting}\label{appx:discussion_on_Extended_DERL}

\subsection{Sample-Efficient DE-RL}
In applications such as recommendation systems, the value of $N$ cannot exceed the number of users our system serves during a period of time. Therefore, as an interesting extension to our framework, we can  revise the constraint (b) in  Definition~\ref{def:deployment_efficient_linear_mdp} and explicitly assign an upper bound for $N$.  Concretely, we may consider the following alternatives: (b') The sample size $N \leq d^{c_1}H^{c_2}\epsilon^{-c_3}\log^{c_4}\frac{dH}{\epsilon\delta}$, where $c_1,c_2,c_3,c_4>0$ are some constant fixed according to the real situation.
Under these revised constraints, the lower bound for $K$ may be different. 

In fact, given constraints in the form of $N \leq N_0$, our results in Section \ref{sec:towards_optimal_DERL} already implies an upper bound for $K$, since we can emulate 1 deployment of our algorithm that uses a large $N > N_0$ by deploying the same policy for $\lceil N/N_0 \rceil$ times. 
However, this may result in sub-optimal deployment complexity since we are not adaptively updating our policy within those $N/N_0$ deployments. 
It would be an interesting open problem to identify the fine-grained trade-off between $K$ and $N_0$ in such a setting.

\subsection{Safe DE-RL}
\paragraph{Monotonic Policy Improvement Constraint}
In many applications, improvement of service quality after policy update is highly desired, and can be incorporated in our formulation by adding an additional constraint into Def \ref{def:deployment_efficient_linear_mdp}:
\begin{align}
    (c) \quad J(\pi_{i+1}) \geq J(\pi_i) - \epsilon.\label{eq:safety_constraint}
\end{align}
Because we require the deployed policy has substantial probability to visit unfamiliar states so that the agent can identify the near-optimal policy as shown in Def~\ref{def:deployment_efficient_linear_mdp}-(a), we relax the strict policy improvement with an small budget term $\epsilon > 0$ for exploration.

\paragraph{Trade-off between Pessimism and Optimism}
\begin{algorithm}[ht]
    \For{k=1,2,...,K}{
        $D=\{D_1,D_2,...,D_k\}$\\
        $\pi_{k,\mathrm{pessim}}\gets {\rm PessimismBased\_OfflineAlgorithm}(D)$\\
        $\pi_{k,\mathrm{optim}}\gets {\rm OptimismBased\_BatchExplorationStrategy}(D)$\\ 
        // Mix policy in trajectory level, i.e. w.p. $1-\alpha$, $\tau\sim \pi_{\mathrm{pessim}}$; w.p. $\alpha$, $\tau\sim\pi_{\mathrm{optim}}$\\
        $\pi_k \gets (1-\alpha)\pi_{k,\mathrm{pessim}}+\alpha \pi_{k,\mathrm{optim}}$ \\
        $D_k = \{\tau_n \sim \pi_k,\forall n \in[N]\}$ \\
    }
\caption{Mixture Policy Strategy}\label{alg:Mixture_Policy_Strategy}
\end{algorithm}
The balance between satisfying two contradictory constraints: (a) and (c), implies that a proper algorithm should leverage both pessimism and optimism in face of uncertainty. In Algorithm \ref{alg:Mixture_Policy_Strategy}, we propose a simple but effective mixture policy srategy, where we treat pessimistic and optimistic algorithm as black boxes and mix the learned policies with a coefficient $\alpha$. One key property of the mixed policy is that:
\begin{property}[Policy improvement for mixture policy]
    \begin{align}
        J(\pi_k) - J(\pi_{k-1}) \geq J(\pi_{k,\mathrm{pessim}}) - J(\pi_{k-1}) - O(\alpha)\label{eq:mixture_policy_improvement}
    \end{align}
\end{property}
As a result, as long as the offline algorithm (which we treat as a black box here) has some policy improvement guarantee, such as 
\citep{kumar2020conservative, liu2020provably, laroche2019safe}, then Eq.\eqref{eq:mixture_policy_improvement} implies a substantial policy improvement if $\alpha$ is chosen appropraitely. Besides, if we use Algorithms in Section \ref{sec:DERL_deterministic_policy} or \ref{sec:DERL_arbitrary_policy}, and collecting $\tilde N = \Theta(N / \alpha)$ samples, the guarantees in Theorem \ref{thm:deployment_complexity_given_reward} and Theorem \ref{thm:arbitary_policy_informal} can be extended correspondingly. 
Therefore, Alg \ref{alg:Mixture_Policy_Strategy} will return us a near-optimal policy after $K$ deployments while satisfying the safety constraint (c) in \eqref{eq:safety_constraint}.



\end{document}